\newtheorem{prop}{Proposition}
\newcommand\blfootnote[1]{%
  \begingroup
  \renewcommand\thefootnote{}\footnote{#1}%
  \addtocounter{footnote}{-1}%
  \endgroup
}
\tikzset{
    >=stealth',
    punkt/.style={
           circle,
           draw=black, thick,
           minimum height=1.75em,
           inner sep=0pt,
           text centered},
    pil/.style={
           ->,
           thick}
}
\newcommand{\Pa}[1]{\text{Pa}({#1}; \mathcal{G})}
\newcommand{\Pahat}[1]{\text{Pa}({#1}; \hat{\mathcal{G}})}
\newcommand{\Ne}[1]{\text{Ne}({#1}; \mathcal{S})}
\newcommand*{\indep}{%
  \mathbin{%
    \mathpalette{\@indep}{}%
  }%
}
\newcommand*{\nindep}{%
  \mathbin{%                   % The final symbol is a binary math operator
    \mathpalette{\@indep}{\not}% \mathpalette helps for the adaptation
                               % of the symbol to the different math styles.
  }%
}
\def\layersep{.38cm}
\def\inlsep{.4}
\newcommand*{\@indep}[2]{%
  % #1: math style
  % #2: empty or \not
  \sbox0{$#1\perp\m@th$}%        box 0 contains \perp symbol
  \sbox2{$#1=$}%                 box 2 for the height of =
  \sbox4{$#1\vcenter{}$}%        box 4 for the height of the math axis
  \rlap{\copy0}%                 first \perp
  \dimen@=\dimexpr\ht2-\ht4-.2pt\relax
      % The equals symbol is centered around the math axis.
      % The following equations are used to calculate the
      % right shift of the second \perp:
      % [1] ht(equals) - ht(math_axis) = line_width + 0.5 gap
      % [2] right_shift(second_perp) = line_width + gap
      % The line width is approximated by the default line width of 0.4pt
  \kern\dimen@
  {#2}%
      % {\not} in case of \nindep;
      % the braces convert the relational symbol \not to an ordinary
      % math object without additional horizontal spacing.
  \kern\dimen@
  \copy0 %                       second \perp
} 
\newcommand{\CG}{CGNN}
\def\X{\mbox{$\mathbf{X}$}}
\begin{document}

\title{Learning Functional Causal Models with Generative Neural Networks \thanks{$^*$ This article is a preprint of the chapter with the same name in the book \textit{Explainable and Interpretable Models in Computer Vision and Machine Learning}. Springer Series on Challenges in Machine Learning. 2018. Cham: Springer International Publishing. Editors: Hugo Jair Escalante, Sergio Escalera, Isabelle Guyon, Xavier Bar\'o and Ya\c{g}mur G{\"u}{\c{c}}l{\"u}t{\"u}rk. https://doi.org/10.1007/978-3-319-98131-4}}

%and Bar{\'o}, X and , Y and G{\"u}{\c{c}}l{\"u}, U and van Gerven.  }

%. Springer Series on Challenges in Machine Learning. 2018. Cham: Springer International Publishing. Editors:Escalante, HJ and Escalera, S and Guyon, I and Bar{\'o}, X and G{\"u}{\c{c}}l{\"u}t{\"u}rk, Y and G{\"u}{\c{c}}l{\"u}, U and van Gerven. DOI : \url{https://doi.org/10.1007/978-3-319-98131-4.}

% }.}

% Use \titlerunning{Short Title} for an abbreviated version of
% your contribution title if the original one is too long
% \author{Olivier Goudet\thanks{Joint first author. Rest of authors ordered alphabetically.} \and Diviyan Kalainathan$^{\ast}$ \and Philippe Caillou, Isabelle Guyon \and David Lopez-Paz \and Mich\`ele Sebag}
% Use \authorrunning{Short Title} for an abbreviated version of
% your contribution title if the original one is too long

% \institute{Olivier Goudet  \and Diviyan Kalainathan \and Philippe Caillou \and Isabelle Guyon  \and Mich\`ele Sebag \at
% Team TAU - CNRS, INRIA, Universit\'e Paris Sud,  Universit\'e Paris Saclay - France \and
% David Lopez-Paz \at Facebook AI Research \and
% \email{olivier.goudet@inria.frolivier.goudet@inria.fr, Diviyan.kalainathan@lri.fr, Caillou@lri.fr, guyon@chalearn.org, david@lopezpaz.org, sebag@lri.fr}}
% \title{Learning with Mixtures of Trees}

\author{\name Olivier Goudet $^{\dagger}$ \email olivier.goudet@inria.fr \\
       \addr TAU, LRI, CNRS, INRIA, Universit\'e Paris-Sud\\
       Gif-Sur-Yvette, 91190, France
       \AND
       \name Diviyan Kalainathan$^{\dagger}$ \email diviyan.kalainathan@inria.fr \\
       \addr  TAU, LRI, CNRS, INRIA, Universit\'e Paris-Sud\\
       Gif-Sur-Yvette, 91190, France
     \AND
    \name Philippe Caillou \email Caillou@lri.fr \\
       \addr  TAU, LRI, CNRS, INRIA, Universit\'e Paris-Sud\\
       Gif-Sur-Yvette, 91190, France
     \AND
     \name Isabelle Guyon \email isabelle.guyon@u-psud.fr \\
       \addr  TAU, LRI, CNRS, INRIA, Universit\'e Paris-Sud\\
       Gif-Sur-Yvette, 91190, France
     \AND
     \name David Lopez-Paz \email dlp@fb.fr \\
       \addr  Facebook AI Research\\
       Paris, 75002, France
     \AND
     \name Mich\`ele Sebag \email michele.sebag@lri.fr \\
       \addr  TAU, LRI, CNRS, INRIA, Universit\'e Paris-Sud\\
       Gif-Sur-Yvette, 91190, France
     }

% \editor{Leslie Pack Kaelbling}

\maketitle

%\institute{Name of First Author \at Name, Address of Institute, \email{name@email.address}
%\and Name of Second Author \at Name, Address of Institute \email{name@email.address}}

%
% Use the package "url.sty" to avoid
% problems with special characters
% used in your e-mail or web address
%\maketitle

\noindent

\abstract{ We introduce a new approach to functional causal modeling from observational data, called {\em Causal Generative Neural Networks} (\CG). \CG\ leverages the power of neural networks to learn a generative model of the joint distribution of the observed variables, by minimizing the Maximum Mean Discrepancy between generated and observed data. An approximate learning criterion is proposed to scale the computational cost of the approach to linear complexity in the number of observations.
The performance of \CG\ is studied throughout three experiments.
Firstly, \CG\ is applied to cause-effect inference, where the task is to identify the best causal hypothesis out of ``$X\rightarrow Y$'' and ``$Y\rightarrow X$''. 
Secondly, \CG\ is applied to the problem of identifying v-structures and  conditional independences. Thirdly, \CG\ is applied to multivariate functional causal modeling: given a skeleton describing the direct dependences in a set of random variables $\textbf{X} = [X_1, \ldots,
X_d]$, \CG\ orients the edges in the skeleton to uncover the directed acyclic causal graph describing the causal structure of the random variables.
On all three tasks, \CG\ is extensively assessed on both artificial and real-world data, comparing favorably to the state-of-the-art. Finally, \CG\ is extended to handle the case of confounders, where latent variables are involved in the overall causal model.\\
\newline
Keywords: generative neural networks $\cdot$ causal structure discovery $\cdot$  cause-effect pair problem $\cdot$ functional causal models $\cdot$  structural equation models}
\blfootnote{$^{\dagger}$ First joint author. Rest of authors ordered alphabetically.}

\section{Introduction }

Deep learning models have shown extraordinary predictive abilities, breaking
records in image classification \citep{krizhevsky2012imagenet}, speech
recognition \citep{hinton2012deep}, language translation
\citep{cho2014learning}, and reinforcement learning
\citep{silver2016mastering}.  However, the predictive focus of black-box deep
learning models leaves little room for explanatory power.  More generally, current machine learning paradigms offer no protection to avoid mistaking
correlation by causation.  For example, consider the prediction of target variable $Y$ given features $X$ and $Z$, assuming
that the underlying generative process is described by the
equations:
% Simplification: no indices when can be avoided.
\begin{align*}
  X, E_Y, E_Z &\sim \text{Uniform}(0, 1),\\
  Y           &\leftarrow 0.5 X + E_Y,\\
  Z         &\leftarrow Y + E_Z,
\end{align*}

with $(E_Y, E_Z)$ additive noise variables. The above model states that the values of $Y$ are computed as a function of the values of $X$ (we say that $X$ causes $Y$), and that the
values of $Z$ are computed as a function of the values of $Y$ ($Y$ causes $Z$).  The
``assignment arrows'' emphasize the asymmetric relations between all three random variables.  However, as $Z$ provides a stronger signal-to-noise ratio than $X$ for the prediction of $Y$, the best regression solution in terms of least-square error is 
$$\hat{Y} = 0.25 X + 0.5 Z$$
The above regression model, a typical case of inverse regression
after \cite{goldberger1984reverse}, would
wrongly explain some changes in $Y$ as a function of changes in $Z$, although $Z$ does not cause $Y$. In this simple case, there exists approaches overcoming the inverse regression mistake and uncovering all true cause-effect
relations \citep{hoyer2009nonlinear}. In the general case however, mainstream machine learning approaches fail to understand the relationships between all three distributions, and might attribute some effects on $Y$ to changes in $Z$. 

Mistaking correlation for causation can be catastrophic for agents who must plan, reason, and decide based on observations.  Thus, discovering causal structures is 
of crucial importance.

The gold standard to discover causal relations is to perform experiments
\citep{pearl2003causality}. However, experiments are in many cases expensive,
unethical, or impossible to realize. In these situations, there is a
need for \emph{observational causal discovery}, that is, the estimation of
causal relations from observations alone \citep{spirtes2000causation,
PetJanSch17}. 

In the considered setting, {\em observational} empirical data (drawn independent and identically distributed from an unknown distribution) is given as a set of $n$ samples of real valued feature vectors of dimension $d$. We denote the corresponding random vector as $\mathbf{X} = \left[X_1,..., X_d\right] $. 
We seek a Functional Causal Model (FCM), also known as Structural Equation Model (SEM), that best matches the underlying data-generating mechanism(s) in the following sense: under relevant manipulations/interventions/experiments the FCM would produce data distributed similarly to the real data obtained in similar conditions. 

Let intervention \textit{do(X=x)} be defined as the operation on distribution obtained by clamping variable $X$ to value $x$, while the rest of the system remains unchanged \citep{pearl2009causality}. It is said that variable $X_i$ is a \textbf{direct cause} of $X_j$ with respect to $X_1,..., X_d$ iff different interventions on variable $X$ result in different marginal distributions on $X_j$, everything else being equal:
\begin{equation}
P_{X_j | \text{do}(X_i=x,\textbf{X}_{\backslash ij}=\textbf{c})} \neq P_{X_j | \text{do}(X_i=x',\textbf{X}_{\backslash ij}=\textbf{c})}
\end{equation}
with $\textbf{X}_{\backslash ij} :=X_{\{1,...,d\} \backslash i,j}$ the set of all variables except $X_i$ and $X_j$, scalar values 
$x \neq x'$, and vector value $\textbf{c}$.  Distribution $P_{X_j | \text{do}(X_i=x,\textbf{X}_{\backslash ij}=c)}$ is the resulting interventional distribution of the variable $X_j$ when the variable $X_i$ is clamped to value $x$, while keeping all other variables at a fixed value \citep{mooij2016distinguishing}. 

As said, conducting such interventions to determine direct causes and effects raises some limitations. For this reason, this paper focuses on learning the causal structure from observational data only, where the goal and validation of the proposed approach is to match the known ``ground truth'' model structure.

A contribution of the paper is to unify several state-of-art methods into one single consistent and more powerful approach. On the one hand, leading researchers at UCLA, Carnegie Mellon, University of Crete
 and elsewhere have developed powerful algorithms exploiting Markov properties of directed acyclic graphs (DAGs). \citep{spirtes2000causation,tsamardinos2006max,pearl2009causality} On the other hand, the T\"ubingen School has proposed new and powerful functional causal models (FCM) algorithms exploiting the asymmetries in the joint distribution of cause-effect pairs. \citep{hoyer2009nonlinear,stegle2010probabilistic,daniusis2012inferring,mooij2016distinguishing} 

In this paper, the learning of functional causal models is tackled in the search space of generative neural networks \citep{kingma2013auto,goodfellow2014generative}, and aims at the functional causal model (structure and parameters), best fitting the underlying data generative process. The merits of the proposed approach, called Causal Generative Neural Network (\CG) are extensively and empirically demonstrated compared to the state of the art on artificial and real-world benchmarks.

This paper is organized as follows: Section 2 introduces the problem of learning an FCM and the underlying assumptions. Section 3 briefly reviews and discusses the state of the art in causal modeling. The FCM modeling framework within the search space of generative neural networks is presented in Section 4. Section 5 reports on an extensive experimental validation of the approach comparatively to the state of the art for pairwise cause-effect inference and graph recovery. An extension of the proposed framework to deal with potential confounding variables is presented in Section \ref{confounder}. The paper concludes in Section \ref{discussion} with some perspectives for future works.

\section{Problem setting \label{ProblemSettings}}\label{sec:setting}

A Functional Causal Model (FCM) upon a random variable vector $\textbf{X} = [X_1, \ldots, X_d]$ is a triplet $(\mathcal{G}, f, \mathcal{E})$, representing a set of equations: 
\begin{equation}   {X}_i  \leftarrow {f}_i({X}_{\Pa{i}}, {E}_i), {E}_i \sim {\mathcal{E}}, \mbox{ for } i=1,\ldots, d
    \label{eq:1}
\end{equation}

Each equation characterizes the direct
causal relation explaining variable $X_i$ from the set of its causes $X_{\Pa{i}} \subset \{X_1, \ldots, X_d\}$, based on some \emph{causal mechanism} $f_i$ involving besides $X_{\Pa{i}}$ some random variable $E_i$ drawn after distribution $\mathcal{E}$, meant to account for all unobserved variables. 

Letting $\cal G$ denote the causal graph obtained by drawing arrows from causes $X_{\Pa{i}}$ towards their effects $X_i$, we restrict ourselves to directed acyclic graphs (DAG), where the propagation of interventions to end nodes is assumed to be instantaneous.  This assumption suitably represents causal phenomena in cross-sectional studies. An example of functional causal model with five variables is illustrated on Fig. \ref{figure:causalnetwork}.

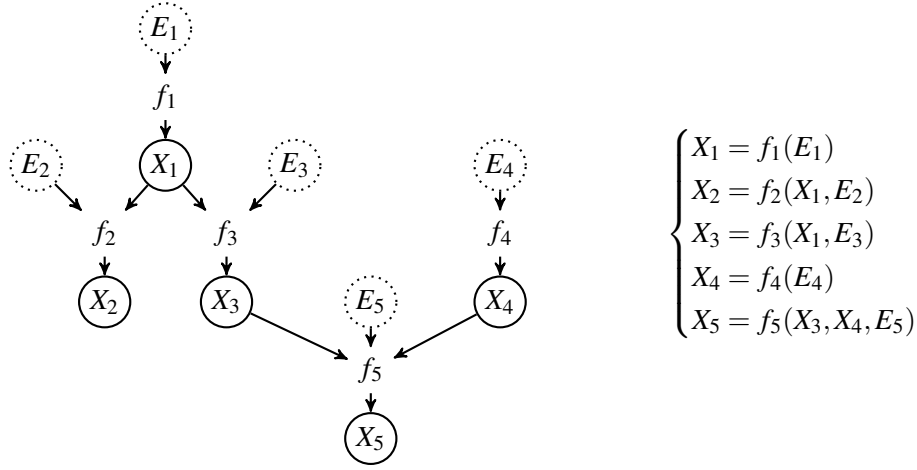
\begin{figure}[h]
    \centering
    \begin{tikzpicture}[node distance=0.25cm, auto,]
        \node[punkt, dotted] (e2) at (0,0) {$E_1$};
        \node[below=of e2] (f2) {$f_1$};
        \node[punkt, below=of f2] (x2) {$X_1$};
        \node[punkt, dotted, right=1cm of x2] (e4) {$E_3$};
        \node[punkt, dotted, left=1cm of x2] (e3) {$E_2$};
        \node[punkt, dotted, right=2cm of e4] (e5) {$E_4$};
        \node[below=of e5] (f5) {$f_4$};
        \node[punkt, below=of f5] (x5) {$X_4$};
        \node[punkt, dotted, left=1cm of x5] (e6) {$E_5$};
        \node[below=of x2, shift={(-0.8cm,0)}] (f3) {$f_2$};
        \node[right=1cm of f3] (f4) {$f_3$};
        \node[punkt, below=of f4] (x4) {$X_3$};
        \node[below=of e6] (f6) {$f_5$};
        \node[punkt, below=of f6] (x6) {$X_5$};
        \node[punkt, below=of f3] (x3) {$X_2$};

        \draw[pil] (e2) -- (f2);
        \draw[pil] (e3) -- (f3);
        \draw[pil] (e4) -- (f4);
        \draw[pil] (e5) -- (f5);
        \draw[pil] (e6) -- (f6);

        \draw[pil] (f2) -- (x2);
        \draw[pil] (f3) -- (x3);
        \draw[pil] (f4) -- (x4);
        \draw[pil] (f5) -- (x5);
        \draw[pil] (f6) -- (x6);

        \draw[pil] (x2) -- (f3);
        \draw[pil] (x2) -- (f4);
        \draw[pil] (x4) -- (f6);
        \draw[pil] (x5) -- (f6);

        \node[right=1.75cm of f5] {
                $\begin{cases}
                X_1 = f_1(E_1) \\
                X_2 = f_2(X_1,E_2) \\
                X_3 = f_3(X_1,E_3) \\
                X_4 = f_4(E_4) \\
                X_5 = f_5(X_3,X_4,E_5) \\
                \end{cases}$};
    \end{tikzpicture}
    \caption{Example of a Functional Causal Model (FCM) on $\mathbf{X} = [X_1, \ldots, X_5]$: Left: causal graph $\mathcal{G}$; right: causal mechanisms.}
    \label{figure:causalnetwork}
\end{figure}

\subsection{Notations} 

By abuse of notation and for simplicity, a variable $X$ and the associated node in the causal graph, in one-to-one correspondence, are noted in the same way. Variables $X$ and $Y$ are adjacent iff there exists an edge between both nodes in the graph. This edge can model i) a direct causal relationship ($ X \rightarrow Y$ or $Y  \rightarrow X)$; ii) a causal relationship in either direction ($X - Y$); iii) a
non-causal association ($X \leftrightarrow Y$) due to external common causes \citep{richardson2002ancestral}.\\
\textbf{\em Conditional independence}: $(X \indep Y|Z)$ is meant as variables $X$ and $Y$ are independent conditionally to $Z$, i.e. $P(X,Y|Z) = P(X|Z)P(Y|Z)$.\\
\textbf{\em V-structure, a.k.a. unshielded collider}: Three variables $\{X, Y, Z\}$ form a v-structure iff their causal structure is: $X \rightarrow Z \leftarrow Y$.\\
\textbf{\em Skeleton of the DAG}: the skeleton of the DAG  is the undirected graph obtained by replacing all edges by undirected edges.\\
\textbf{\em Markov equivalent DAG}: two DAGs with same skeleton and same v-structures  are said to be \textit{Markov equivalent} \citep{pearl1991formal}. A \textit{Markov equivalence class} is represented by a \textit{Completed Partially Directed Acyclic Graph} (CPDAG) having both directed and undirected edges.

\subsection{Assumptions and Properties} 

The state of the art in causal modeling most commonly involves four assumptions:\\
\textbf{\em Causal sufficiency assumption (CSA)}:  \X\ is said to be \textit{causally sufficient} if no pair of variables $\{X_i, X_j\}$ in \X\ has a common cause external to $\textbf{X}_{\backslash i,j}$.\\
\textbf{\em Causal Markov assumption (CMA)}: all variables are independent of their non-effects (non descendants in the causal graph) conditionally to their direct causes (parents) \citep{spirtes2000causation}. For an FCM, this assumption holds if the graph is a DAG and error terms $E_i$ in the FCM are independent on each other \citep{pearl2009causality}.\\
\textbf{\em Conditional independence relations in an FCM}:
if CMA applies, the data generated by the FCM satisfy all conditional independence (CI) relations among variables in \X\ via the notion of d-separation \citep{pearl2009causality}. CIs are called Markov properties. Note that there may be more CIs in data than present in the graph (see the Faithfulness assumption below). The joint distribution of the variables is expressed as the product of the distribution of each variable conditionally on its parents in the graph.\\
\textbf{\em Causal Faithfulness Assumption (CFA)}: the joint distribution $P(\X)$ is \textit{faithful} to the graph $\mathcal{G}$ of an FCM iff every conditional independence relation that holds true in $P$ is entailed by $\mathcal{G}$ \citep{spirtes2016causal}. Therefore, if  there  exists an independence  relation  in \X\  that  is not  a consequence  of  the Causal  Markov  assumption,  then  \X\ is  \textit{unfaithful} \citep{scheines1997introduction}. It follows from CMA and CFA that every causal path in the graph corresponds to a dependency between variables, and vice versa.\\
\textbf{\em V-structure property}. Under CSA, CMA and CFA, if variables $\{X, Y, Z\}$ satisfy: i)  $\{X, Y\}$ and $\{Y, Z\}$ are adjacent; ii) $\{X, Z\}$ are NOT adjacent; iii)  $X \nindep Z | Y$, then their causal structure  is a v-structure ($X \rightarrow Y \leftarrow Z$).

\section{State of the art \label{sec:soa}}

This section reviews methods to infer causal relationships, based on either the Markov properties of a DAG such as v-structures %or colliders, 
or asymmetries in the joint distributions of pairs of variables.

\subsection{Learning the CPDAG}

Structure learning methods classically use conditional independence (CI) relations in order  to identify  the  Markov equivalence class of the sought {Directed Acyclic Graph}, referred to as CPDAG, under CSA, CMA and CFA.

Considering the functional model on $\mathbf{X} = [X_1, \ldots, X_5]$ on Fig. \ref{figure:causalnetwork}, the associated DAG  $\mathcal{G}$ and graph skeleton are respectively depicted on Fig. \ref{MarkovEqDAG}(a) and (b). Causal modeling exploits observational data to recover the $\mathcal{G}$ structure from all CI (Markov properties) between variables.\footnote{The so-called constraint-based methods base the recovery of graph structure on conditional independence tests. In general, proofs of model identifiability assume the existence of an ``oracle" providing perfect knowledge of the CIs, i.e. {\em de facto} assuming an infinite amount of training data.} Under CSA, CMA and CFA, as $(X_3 \indep X_4|X_5)$ does not hold, a v-structure $X_3 \rightarrow X_5 \leftarrow X_4$ is identified (Fig. \ref{MarkovEqDAG}(c)). However, one also has $(X_1 \indep X_5|X_3)$ and $(X_2 \indep X_3|X_1)$. Thus the DAGs on Figs. \ref{MarkovEqDAG}(d) and (e) encode the same conditional independences as the true DAG (Fig. \ref{MarkovEqDAG}(a)). Therefore the true DAG cannot be fully identified based only on independence tests, and the edges between the pairs of nodes $\{X_1, X_2\}$ and $\{X_1, X_3\}$ are left undirected. The identification process thus yields the partially undirected graph depicted on Fig. \ref{MarkovEqDAG}(c), called \textit{Completed Partially Directed Acyclic Graph} (CPDAG).

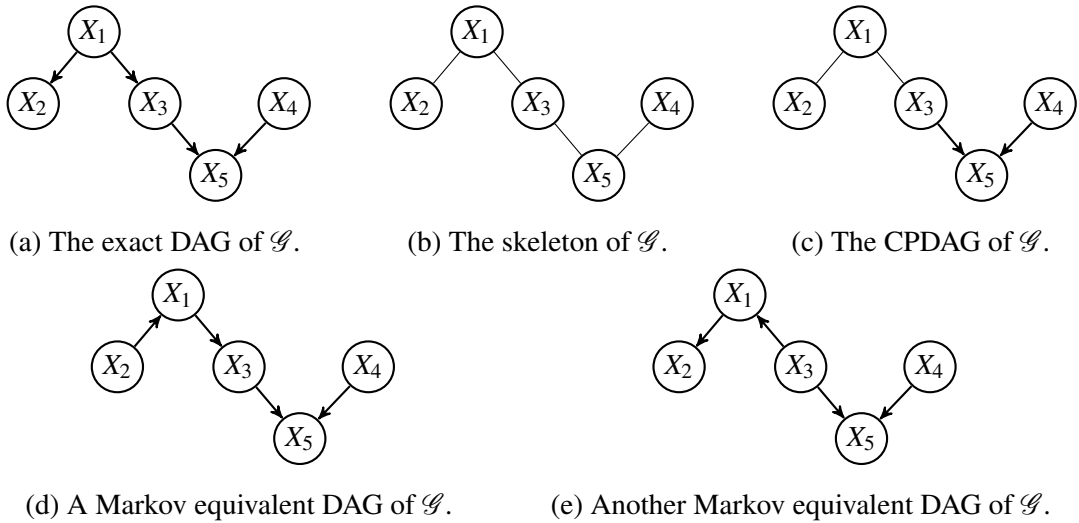
\begin{figure}[!h]
    \centering
    \begin{tikzpicture}[node distance=0.25cm, auto,]
        \node[punkt] (x1) at (0,0)  {$X_1$};
        \node[punkt, below=of x1, shift={(-0.8cm,0)}] (x2) {$X_2$};
        \node[punkt, below=of x1, shift={(0.8cm,0)}] (x3) {$X_3$};
        \node[punkt, below=of x3, shift={(0.8cm,0)}] (x5) {$X_5$};
        \node[punkt, right=1cm of x3] (x4) {$X_4$};
        \node[below=of x5, shift={(-0.8cm,0)}] (cap) {(a) The exact DAG of $\mathcal{G}$.};

        \draw[pil] (x1) -- (x2);
        \draw[pil] (x1) -- (x3);
        \draw[pil] (x3) -- (x5);
        \draw[pil] (x4) -- (x5);
    \end{tikzpicture}
    \hspace{2pc}
    \begin{tikzpicture}[node distance=0.25cm, auto,]
        \node[punkt] (x1) at (0,0)  {$X_1$};
        \node[punkt, below=of x1, shift={(-0.8cm,0)}] (x2) {$X_2$};
        \node[punkt, below=of x1, shift={(0.8cm,0)}] (x3) {$X_3$};
        \node[punkt, below=of x3, shift={(0.8cm,0)}] (x5) {$X_5$};
        \node[punkt, right=1cm of x3] (x4) {$X_4$};
        \node[below=of x5, shift={(-0.8cm,0)}] (cap) {(b) The skeleton of $\mathcal{G}$.};

        \draw (x3) -- (x5);
        \draw (x4) -- (x5);
        \draw (x1) -- (x2);
        \draw (x1) -- (x3);

    \end{tikzpicture}
    \hspace{2pc}
    \begin{tikzpicture}[node distance=0.25cm, auto,]
        \node[punkt] (x1) at (0,0)  {$X_1$};
        \node[punkt, below=of x1, shift={(-0.8cm,0)}] (x2) {$X_2$};
        \node[punkt, below=of x1, shift={(0.8cm,0)}] (x3) {$X_3$};
        \node[punkt, below=of x3, shift={(0.8cm,0)}] (x5) {$X_5$};
        \node[punkt, right=1cm of x3] (x4) {$X_4$};
        \node[below=of x5, shift={(-0.8cm,0)}] (cap) {(c) The CPDAG of $\mathcal{G}$.};

        \draw[pil] (x3) -- (x5);
        \draw[pil] (x4) -- (x5);
        \draw (x1) -- (x2);
        \draw (x1) -- (x3);
    \end{tikzpicture}
     \hspace{2pc}
    \begin{tikzpicture}[node distance=0.25cm, auto,]
        \node[punkt] (x1) at (0,0)  {$X_1$};
        \node[punkt, below=of x1, shift={(-0.8cm,0)}] (x2) {$X_2$};
        \node[punkt, below=of x1, shift={(0.8cm,0)}] (x3) {$X_3$};
        \node[punkt, below=of x3, shift={(0.8cm,0)}] (x5) {$X_5$};
        \node[punkt, right=1cm of x3] (x4) {$X_4$};
        \node[below=of x5, shift={(-0.8cm,0)}] (cap) {(d) A Markov equivalent DAG of $\mathcal{G}$.};

        \draw[pil] (x2) -- (x1);
        \draw[pil] (x1) -- (x3);
        \draw[pil] (x3) -- (x5);
        \draw[pil] (x4) -- (x5);
    \end{tikzpicture}
    \hspace{2pc}
    \begin{tikzpicture}[node distance=0.25cm, auto,]
        \node[punkt] (x1) at (0,0)  {$X_1$};
        \node[punkt, below=of x1, shift={(-0.8cm,0)}] (x2) {$X_2$};
        \node[punkt, below=of x1, shift={(0.8cm,0)}] (x3) {$X_3$};
        \node[punkt, below=of x3, shift={(0.8cm,0)}] (x5) {$X_5$};
        \node[punkt, right=1cm of x3] (x4) {$X_4$};
        \node[below=of x5, shift={(-0.8cm,0)}] (cap) {(e) Another Markov equivalent DAG of $\mathcal{G}$.};

        \draw[pil] (x1) -- (x2);
        \draw[pil] (x3) -- (x1);
        \draw[pil] (x3) -- (x5);
        \draw[pil] (x4) -- (x5);
    \end{tikzpicture}
    \caption{Example of a Markov equivalent class. There exists three graphs (a, d, e) consistent with a given graph skeleton (b); the set of these consistent graphs defines the Markov equivalent class (c).} 
    \label{MarkovEqDAG}
\end{figure}

The  main three families of methods used to recover the CPDAG of an FCM with continuous data are constraint-based methods, score-based methods, and hybrid methods \citep{drton2016structure}. 

\subsubsection{Constraint-based methods} 

Constraint-based methods exploit conditional independences between variables to identify all v-structures.  One of the most well-known constraint-based algorithms is the PC algorithm \citep{spirtes1993search}. PC first builds the DAG skeleton based on conditional independences among variables and subsets of variables. Secondly, it identifies v-structures (Fig. \ref{MarkovEqDAG}(c)). Finally, it uses propagation rules to orient remaining edges, avoiding the creation of directed cycles or new v-structures. Under CSA, CMA and CFA, and assuming an oracle indicating all conditional independences, PC returns the CPDAG of the functional causal model. In practice, PC uses statistical tests to accept or reject conditional independence at a given confidence level. Besides mainstream tests (e.g., s Z-test or T-Test for continuous Gaussian variables, and $\chi$-squared or G-test for categorical variables), non-parametric independence tests based on machine learning are becoming increasingly popular, such as kernel-based conditional independence tests \citep{zhang2012kernel}.
The FCI algorithm \citep{spirtes1999algorithm} extends PC; it relaxes the  \textit{causal sufficiency} assumption and deals with latent variables. The RFCI algorithm \citep{colombo2012learning} is faster than FCI and handles high-dimensional DAGs with latent variables.
Achilles' heel of constraint-based algorithms is their reliance on conditional independence tests. The CI accuracy depends on the amount of available data, with exponentially increasing size with the number of variables. Additionally, the use of propagation rules to direct edges is prone to error propagation.

\subsubsection{Score-based methods} 

Score-based methods explore the space of CPDAGs and minimize a global score. For example, the space of graph structures is explored using operators (\textit{add edge}, \textit{remove  edge}, and \textit{reverse edge}) by the Greedy Equivalent Search (GES) algorithm \citep{chickering2002optimal}, returning the optimal structure in the sense of  the Bayesian Information Criterion.\footnote{After \cite{ramsey2015scaling}, in the linear model with Gaussian variable case the individual BIC score to minimize for a variable $X$ given its parents is up to a constant $n~\text{ln}(s)+ c~k~\text{ln}(n)$, where $n~\text{ln}(s)$ is the likelihood term, with $s$ the residual variance after regressing $X$ onto its parents, and $n$ the number of data samples. $c~k~\text{ln}(n)$ is a penalty term for the complexity of the graph (here  the number of edges). $k = 2p + 1$, with $p$ the total number of parents of the variable $X$ in the graph. $c=2$ by default, chosen empirically. The global score minimized by the algorithm is the sum over all variables of the individual BIC score given the parent variables in the graph.}

In order to find the optimal CPDAG corresponding to the minimum score, the GES algorithm starts with an empty graph. A first forward phase is performed, iteratively adding edges to the model in order to improve the global score.  A second backward phase iteratively removes edges to improve the score. Under CSA, CMA and CFA, GES identifies the true CPDAG in the large sample limit, if the score used is decomposable, score-equivalent and consistent \citep{chickering2002optimal}. More recently, \cite{ramsey2015scaling} proposed a GES extension called Fast Greedy Equivalence Search (FGES) algorithm.  FGES uses the same scores and search algorithm with different data structures; it greatly speeds up GES by caching information about scores during each phase of the process.

\subsubsection{Hybrid algorithms} 

Hybrid algorithms combine ideas from constraint-based and score-based algorithms. According to \cite{nandy2015high}, such methods often use a greedy search like the GES method on a restricted search space for the sake of computational efficiency. This  restricted space is defined using conditional independence tests.   For instance the Max-Min Hill climbing (MMHC) algorithm \citep{tsamardinos2006max} firstly builds the skeleton of a Bayesian network using conditional independence tests and then performs a Bayesian-scoring greedy hill-climbing search to orient the edges. 
The Greedy Fast Causal Inference (GFCI) algorithm proceeds in the other way around, using FGES to get rapidly a first sketch of the graph (shown to be more accurate than those obtained with constraint-based methods), then using the FCI constraint-based rules to orient the edges in presence of potential confounders \citep{ogarrio2016hybrid}.

\subsection{Exploiting asymmetry between cause and effect}

The abovementioned score-based and constraint-based methods do not take into account the full information from the observational data \citep{spirtes2016causal}, such as data asymmetries induced by the causal directions.

\subsubsection{The intuition} 

Let us consider FCM $Y = X + E$, with $E$ a random noise { independent} of $X$ by construction.  Graph constraints cannot orient the $X - Y$ edge as both graphs $X \rightarrow Y$ and $Y \rightarrow X$ are Markov equivalent. However, the implicit v-structure $X\rightarrow Y \leftarrow E$ can be exploited provided that either $X$ or $E$ does not follow a {\bf Gaussian distribution}. Consider the linear regression $Y = a X + b$ (blue curve in Fig. \ref{figure:parallelogram_example}); the residual is independent of $X$. Quite the contrary, the residual of the linear regression $X = a' Y + b'$ (red curve in Fig. \ref{figure:parallelogram_example}) is {\em not} independent of $Y$ as far as the independence of the error term holds true \citep{shimizu2006linear}. In this toy example, the asymmetries in the joint distribution of $X$ and $Y$ can be exploited to recover the causal direction $X \rightarrow Y$ \citep{spirtes2016causal}. 

 \begin{figure}[h!]
\begin{center}
\includegraphics[width=4cm]{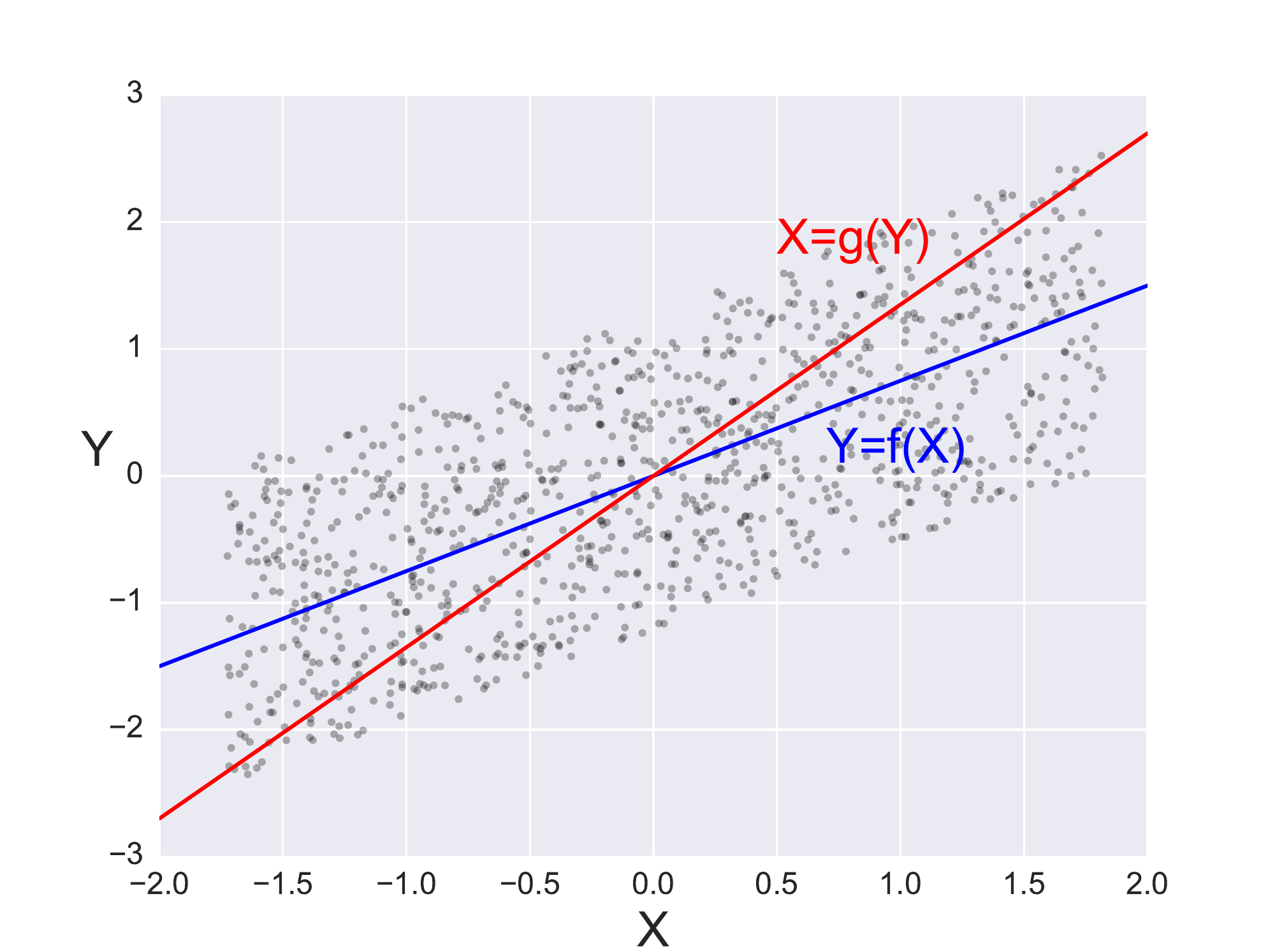}
\includegraphics[width=4cm]{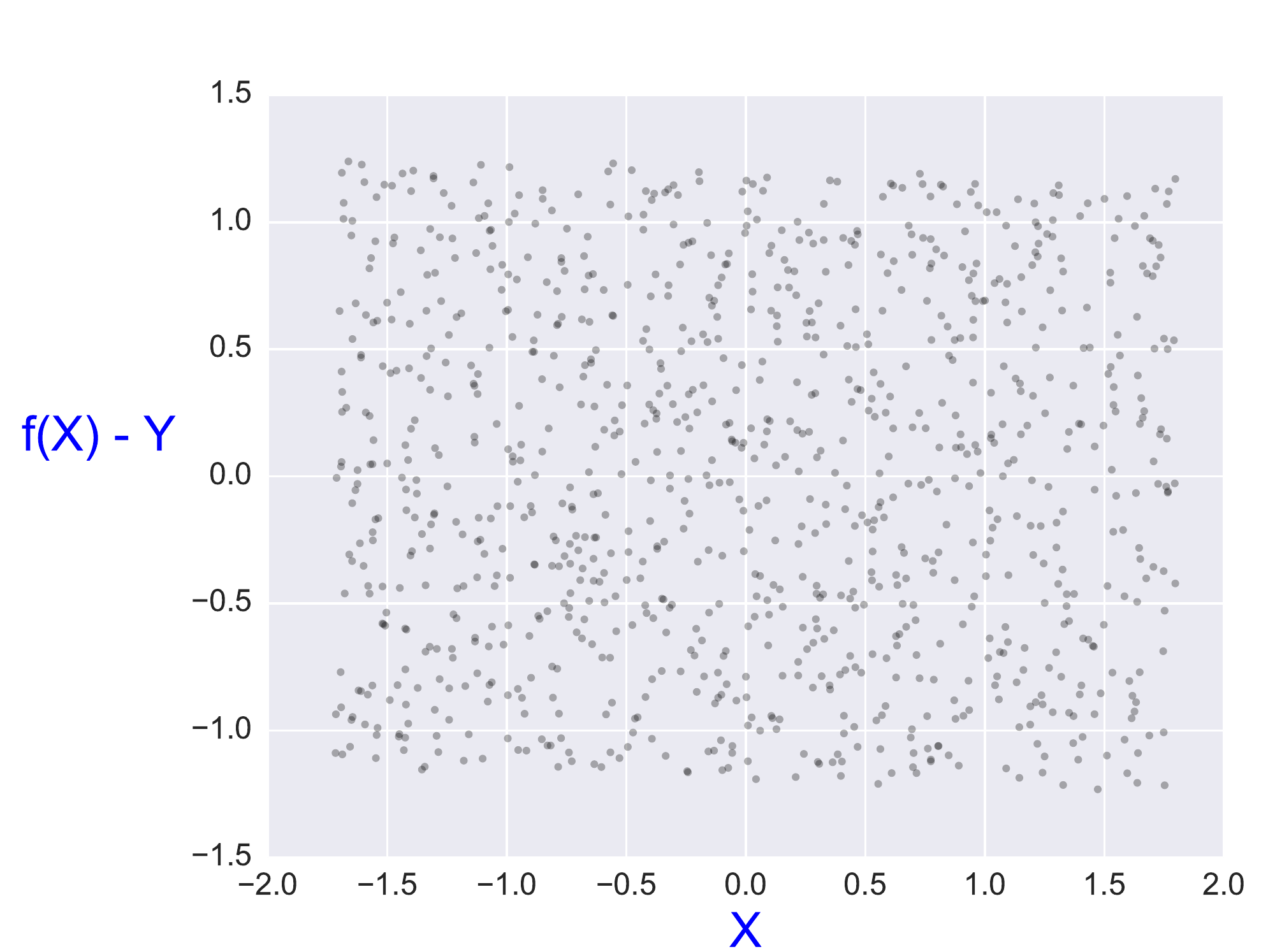}
\includegraphics[width=4cm]{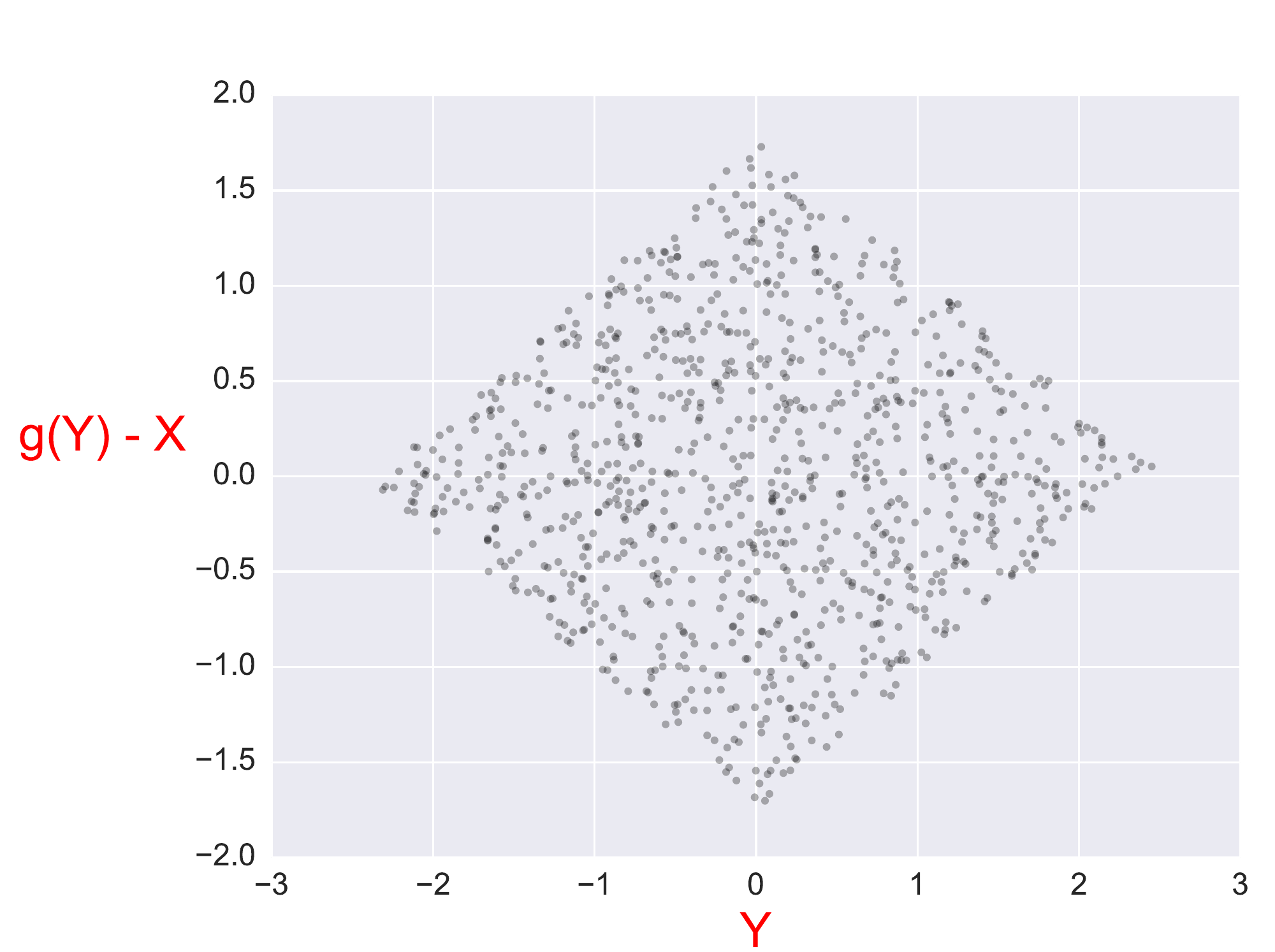}
\caption{Left: Joint distribution $P(X,Y)$ generated from DAG $X \rightarrow Y + E$, with E a uniform noise variable. The linear regression of $Y$ on $X$ (respectively of $X$ on $Y$) is depicted as a blue (resp. red) curve. Middle:  Error $f(X) - Y$ is independent of $X$. Right: Error $g(Y) - X$ is not independent of $Y$. The asymmetry establishes that the true causal model is $X \rightarrow Y$. Better seen in color.}
\label{figure:parallelogram_example}
\end{center}
\end{figure}

\subsubsection{Restriction on the class of causal mechanisms considered}

Causal inference is bound to rely on assumptions such as non-Gaussianity or additive noise. In the absence of any such assumption, \cite{zhang2016estimation}  show that, even in the bivariate case, for any function $f$ and noise variable $E$ independent of $X$ such that $Y = f(X,E)$, it is always feasible to construct some $\tilde{f}$ and  $\tilde{E}$, with $\tilde{E}$ independent of $Y$, such that $X = \tilde{f}(Y,\tilde{E})$. 
An alternative, supporting asymmetry detection and hinting at a causal direction, is based on restricting the class of functions $f$ (e.g. only considering regular functions). According to \cite{quinn2011learning}, the first approach in this direction is LiNGAM  \citep{shimizu2006linear}. LiNGAM handles linear structural equation models, where each variable is continuous and modeled as:

\begin{equation}
X_i = \sum_k \alpha_k P_a^{k}(X_i) + E_i,  i \in \llbracket 1,n \rrbracket
\label{SEM}
\end{equation}

with  $P_a^{k}(X_i)$ the $k$-th parent of $X_i$ and $\alpha_k$ a real value.
Assuming further that all probability distributions of source nodes in the causal graph are non-Gaussian, \cite{shimizu2006linear} show that the causal structure is fully identifiable (all edges can be oriented).

\subsubsection{Pairwise methods} 

In the continuous, non-linear bivariate case, specific methods have been developed to orient the variable edge.\footnote{These methods can be extended to the multivariate case and used for causal graph identification by orienting each edge in turn.}
A well known example of bivariate model is the additive noise model (ANM) \citep{hoyer2009nonlinear}, with data generative model  $Y = f(X) + E$, $f$ a (possibly non-linear) function and $E$ a noise independent of $X$. The authors prove the identifiability of the ANM in the following sense: if $P(X,Y)$ is consistent with ANM $Y = f(X) + E$, then i) there exists no AMN $X = g(Y) + E'$ consistent with $P(X,Y)$; ii) the true causal direction is $X \rightarrow Y$. Under the independence assumption between $E$ and $X$, the ANM admits a single non-identifiable case, the linear model with Gaussian input and Gaussian noise \citep{mooij2016distinguishing}.

A more general model is the post-nonlinear model (PNL) \citep{zhang2009identifiability}, involving an additional nonlinear function on the top of an additive noise: $Y = g(f(X)+E)$, with $g$ an invertible function. The price to pay for this higher generality is an increase in the number of non identifiable cases.

The Gaussian Process Inference model (GPI)  \citep{stegle2010probabilistic} infers the causal direction without explicitly restricting the class of possible causal mechanisms. The authors build two Bayesian generative models, one for $X \rightarrow Y$ and one for $Y \rightarrow X$, where the distribution of the cause is modeled with a Gaussian mixture model, and the causal mechanism $f$ is a Gaussian process.  The causal direction is determined from the generative model best fitting the data (maximizing the data likelihood). Identifiability here follows from restricting the underlying class of functions and enforcing their smoothness (regularity).  Other causal inference methods \citep{sgouritsa2015inference}  are based on the idea that if $X \rightarrow Y$, the marginal probability distribution of the cause $P(X)$ is independent of the causal mechanism $P(Y|X)$, hence estimating  $P(Y|X)$ from $P(X)$ should hardly be possible, while estimating $P(X|Y)$ based on $P(Y)$ may be possible.  The reader is referred to \cite{statnikov2012new} and \cite{mooij2016distinguishing} for a thorough review and benchmark of the pairwise methods in the bivariate case.

A new ML-based approach tackles causal inference as a pattern recognition problem. This setting was introduced in the Causality challenges \citep{guyon2013cepc,IGuyon2014}, which released 16,200 pairs of variables $\{X_i,Y_i\}$, each pair being described by a sample of their joint distribution, and labeled with the true $\ell_i$ value of their causal relationship, with $\ell_i$ ranging in $\{$ $X_i \rightarrow Y_i$, $Y_i \rightarrow X_i$,  $X_i \indep  Y_i$, $X_i \leftrightarrow Y_i$ (presence of a confounder) $\}$. The causality classifiers trained from the challenge pairs yield encouraging results on test pairs. The limitation of this ML-based causal modeling approach is that causality classifiers intrinsically depend on the representativity of the training pairs, assumed to be drawn from a same ``Mother distribution'' \citep{lopez2015towards}.

Note that bivariate methods can be used to uncover the full DAG, and independently orient each edge, with the advantage that an error on one edge does not propagate to the rest of the graph (as opposed to constraint and score-based methods). However, bivariate methods do not leverage the full information available in the dependence relations. For example in the linear Gaussian case (linear model and Gaussian distributed inputs and noises), if a triplet of variables $\{A, B, C\}$ is such that $A, B$ (respectively $B, C$) are dependent on each other but $A \indep C)$, a constraint-based method would identify the v-structure $A \rightarrow B \leftarrow C$ (unshielded collider); 
still, a bivariate model based on cause-effect asymmetry  would neither identify $A \rightarrow B$ nor $B \leftarrow C$.\\

\subsection{Discussion}
This brief survey has shown the complementarity of CPDAG and pairwise methods. The former ones can at best return partially directed graphs; the latter ones do not optimally exploit the interactions between all variables. 

To overcome these limitations, an extension of the bivariate post-nonlinear model (PNL) has been proposed \citep{zhang2009identifiability}, where an FCM is trained for any plausible causal structure, and each model is tested {\em a posteriori} for the required independence between errors and  causes. The main PNL limitation is its super-exponential cost with  the  number  of  variables \citep{zhang2009identifiability}. Another hybrid approach uses a constraint based algorithm to identify a Markov equivalence class, and thereafter uses bivariate modelling to orient the remaining  edges \citep{zhang2009identifiability}. For example, the constraint-based PC algorithm can identify the v-structure $X_3 \rightarrow X_5 \leftarrow X_4$ in an FCM (Fig. \ref{MarkovEqDAG}), enabling the bivariate PNL method to further infer the remaining arrows $X_1  \rightarrow X_2$ and $X_1 \rightarrow X_3$. Note that an effective combination of constraint-based and bivariate approaches requires a final verification phase to test the consistency between the  v-structures and the edge orientations.

This paper aims to propose a unified framework getting the best out of both worlds of CPDAG and bivariate approaches.

An inspiration of the approach is the CAM algorithm \citep{buhlmann2014cam}, which is an extension to the graph setting of the pairwise additive model (ANM) \citep{hoyer2009nonlinear}. In CAM the FCM is modeled as:
\begin{equation}
X_i = \sum_{k \in \Pa{i}} f_k(X_k) + E_i, \mbox{ for } i=1,\ldots, d
\label{eq:cam}
\end{equation}

Our method can be seen an extension of CAM, as it allows non-additive noise terms and non-additive contributions of causes, in order to model flexible conditional distributions, and addresses the problem of learning FCMs (Section \ref{sec:setting}):

\begin{equation}   {X}_i  = {f}_i({X}_{\Pa{i}}, {E}_i), \mbox{ for } i=1,\ldots, d
\end{equation}

An other inspiration of our framework is the recent method of \citet{lopez2016revisiting}, where a conditional generative adversarial network is trained to model $X \rightarrow Y$ and $Y \rightarrow X$ in order to infer the causal direction based on the Occam's razor principle.

This approach, called {\bf Causal Generative Neural Network (CGNN)}, features two original contributions. Firstly, multivariate causal mechanisms $f_i$ are learned as {\bf generative neural networks} (as opposed to, regression networks). The novelty is to use neural nets to model the joint distribution of the observed variables and learn a continuous FCM. This approach does not explicitly restrict the class of functions used to represent the causal models (see also \citep{stegle2010probabilistic}), since neural networks are universal approximators.  Instead, a regularity argument is used to enforce identifiability, in the spirit of supervised learning: the methods searches a trade-off between data fitting and model complexity. 

Secondly, the data generative models are trained using a non-parametric score, the Maximum Mean Discrepancy \citep{gretton2007kernel}. This criterion is used instead of likelihood based criteria, hardly suited to complex data structures, or mean square criteria, implicitly assuming an additive noise (e.g. as in CAM, Eq. \ref{eq:cam}).

Starting from a known skeleton, Section \ref{sec:CG} presents a version of the proposed approach under the usual Markov, faithfulness, and causal sufficiency assumptions. The empirical validation of the approach is detailed in Section \ref{sec:expe}.  In Section \ref{confounder}, the causal sufficiency assumption is relaxed and the model is extended to handle possible hidden confounding factors. Section \ref{discussion} concludes the paper with some perspectives for future work.

  \section{Causal Generative Neural Networks \label{sec:CG}}

Let $\textbf{X} = [X_1, \ldots, X_d]$ denote a set of continuous random variables with joint distribution $P$, and further assume that the joint density function $h$ of $P$ is continuous and strictly positive on a compact subset of $\mathbb{R}^{d}$ and zero elsewhere.

This section first presents the modeling of continuous FCMs with generative neural networks with a given graph structure (Section \ref{model_CGNN}), the evaluation of a candidate  model (Section \ref{ModelEval}), and finally, the learning of a best candidate from observational data (Section \ref{sec:optim}).

\subsection{Modeling continuous FCMs with generative neural networks \label{model_CGNN}}

We first show that there exists a (non necessarily unique) \textit{continuous} functional causal model $(\mathcal{G}, f, \mathcal{E})$ such that the associated data generative process fits the distribution $P$ of the observational data. 

\begin{prop}{\label{prop1}}
 Let $\textbf{X} = [X_1, \ldots, X_d]$ denote a set of continuous random variables with joint distribution $P$, and further assume that the joint density function $h$ of $P$ is continuous and strictly positive on a compact and convex subset of $\mathbb{R}^{d}$, and zero elsewhere. Letting $\cal G$ be a DAG such that $P$ can be factorized along $\cal G$, 
 $$ P(X) = \prod_i P(X_i | X_{\Pa{i}})$$
 there exists $f = (f_1, \ldots, f_d)$ with $f_i$ a continuous function with compact support in $\mathbb{R}^{|\Pa{i}|}\times [0,1]$ such that $P(X)$ equals the generative model defined from FCM $({\cal G}, f, {\cal E})$, with ${\cal E} = \mathcal{U}[0,1]$ the uniform distribution on $[0,1]$.
\end{prop}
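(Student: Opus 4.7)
The plan is to construct the $f_i$ via the conditional inverse probability transform, node by node along the DAG, and use uniform noise variables $E_i$ to drive each conditional.

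First, I would show that for each $i$, the conditional density $h_i(x_i \mid x_{\Pa{i}})$ obtained from $h$ by marginalization and conditioning is well defined, jointly continuous in $(x_i, x_{\Pa{i}})$, and strictly positive on its support. The convexity of the support $\Omega$ of $h$ is crucial here: for any fixed parent value $x_{\Pa{i}}$ in the projection $\pi_{\Pa{i}}(\Omega)$, the slice $\{x_i : (x_i, x_{\Pa{i}}, \ldots) \in \Omega\}$ (after marginalizing out the non-ancestor coordinates) is a compact interval, hence the conditional CDF $F_i(x_i \mid x_{\Pa{i}}) := \int_{-\infty}^{x_i} h_i(t \mid x_{\Pa{i}})\, dt$ is continuous in both arguments and strictly increasing in $x_i$ on its support.

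Next, define
\begin{equation}
f_i(x_{\Pa{i}}, e_i) := F_i^{-1}(e_i \mid x_{\Pa{i}}), \qquad (x_{\Pa{i}}, e_i) \in \pi_{\Pa{i}}(\Omega) \times [0,1],
\end{equation}
where the inverse is taken in the first argument. Strict monotonicity and joint continuity of $F_i$ together with compactness of the slices yield joint continuity of $f_i$ on this domain (a standard uniform-continuity-plus-monotonicity argument: if $(x_{\Pa{i}}^n, e_i^n) \to (x_{\Pa{i}}, e_i)$ but $f_i(x_{\Pa{i}}^n, e_i^n) \not\to f_i(x_{\Pa{i}}, e_i)$, extract a convergent subsequence using compactness and reach a contradiction with continuity of $F_i$). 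The image $f_i(\pi_{\Pa{i}}(\Omega) \times [0,1])$ is contained in the compact projection $\pi_i(\Omega) \subset \mathbb{R}$.

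Then I would extend $f_i$ to a continuous function on all of $\mathbb{R}^{|\Pa{i}|} \times [0,1]$ with compact support. Since $\pi_{\Pa{i}}(\Omega) \times [0,1]$ is compact and $f_i$ is bounded and continuous on it, Tietze's extension theorem (possibly composed with a compactly supported continuous cutoff equal to $1$ on $\pi_{\Pa{i}}(\Omega) \times [0,1]$) produces the desired extension without changing the values on the relevant domain.

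Finally, I would verify that the FCM $(\mathcal{G}, f, \mathcal{U}[0,1])$ with independent $E_i \sim \mathcal{U}[0,1]$ reproduces $P$. Proceeding inductively in a topological order of $\mathcal{G}$, the classical inverse-CDF fact gives that, conditional on $X_{\Pa{i}} = x_{\Pa{i}}$, the variable $f_i(x_{\Pa{i}}, E_i)$ has CDF $F_i(\cdot \mid x_{\Pa{i}})$, i.e., the correct conditional distribution. Independence of the $E_i$ and the Markov factorization assumed for $P$ along $\mathcal{G}$ then give $P(X) = \prod_i P(X_i \mid X_{\Pa{i}})$ as the joint law generated by the FCM. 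The main obstacle is the joint continuity of $f_i$ in both parent values and the noise argument: this is where the strict positivity of $h$ and the convexity of its support are indispensable, because without them the conditional CDF could fail to be strictly increasing on a connected interval and the inverse would generally lose continuity in the parent argument.
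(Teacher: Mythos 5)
Your proposal is correct and follows essentially the same route as the paper's proof: constructing each $f_i$ as the inverse of the conditional CDF $F_i(\cdot \mid x_{\Pa{i}})$, proceeding by induction along a topological order of $\mathcal{G}$, and deriving joint continuity of $f_i$ from the continuity and strict monotonicity of $F_i$ on the compact support (your subsequence argument matches the paper's root-convergence argument). The only additions are your explicit Tietze extension to obtain compact support on all of $\mathbb{R}^{|\Pa{i}|}\times[0,1]$ and the remark that convexity makes the conditional slices intervals, both of which the paper leaves implicit.
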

\begin{proof}
In Appendix \ref{app:proofs}
\end{proof}

In order to model such continuous FCM $({\cal G}, f, {\cal E})$ on $d$ random variables $\textbf{X} = [X_1, \ldots, X_d]$, we introduce the CGNN (Causal Generative Neural Network) depicted on Figure \ref{figure:CGNN_FCM}.

\begin{definition}
A CGNN over d variables $[\hat{X}_1, \ldots, \hat{X}_d]$ is a triplet $\mathcal{C}_{\widehat{\mathcal{G}},\hat{f}} = (\widehat{\mathcal{G}}, \hat{f}, {\cal E})$ where:
\begin{enumerate}
\item $\widehat{\mathcal{G}}$ is a Directed Acyclic Graph (DAG) associating to each variable $\hat{X}_i$ its set of parents noted $\hat{X}_{\Pahat{i}}$ for $i \in [[1,d]]$
\item For $i \in \llbracket 1,d \rrbracket$, causal mechanism $\hat{f}_i$ is a 1-hidden layer regression neural network with $n_h$ hidden neurons: 

\begin{equation}
\hat{X}_i  =  \hat{f}_i(\hat{X}_{\Pahat{i}}, E_i) 
 =  \sum_{k=1}^{n_h} \bar{w}^{i}_k \sigma \left(\sum_{j \in \Pa{i}} \hat{w}^{i}_{jk} \hat{X}_j 
 + w^{i}_{k} E_i +  b^{i}_k \right) +\bar{b}^{i} 
 \label{eq:fcm_neural}
\end{equation}
with $n_h \in \mathbb{N}*$  the number of hidden units, $\bar{w}^{i}_k,\hat{w}^{i}_{jk},w^{i}_{k},b^{i}_k,\bar{b}^{i} \in \mathbb{R}$ the parameters of the neural network,    and $\sigma$ a continuous activation function . 
\item Each variable $E_i$ is independent of the \textit{cause} $X_i$. Furthermore, all noise variables are mutually independent and drawn after same distribution $\mathcal{E}$. 
\end{enumerate}
\end{definition}

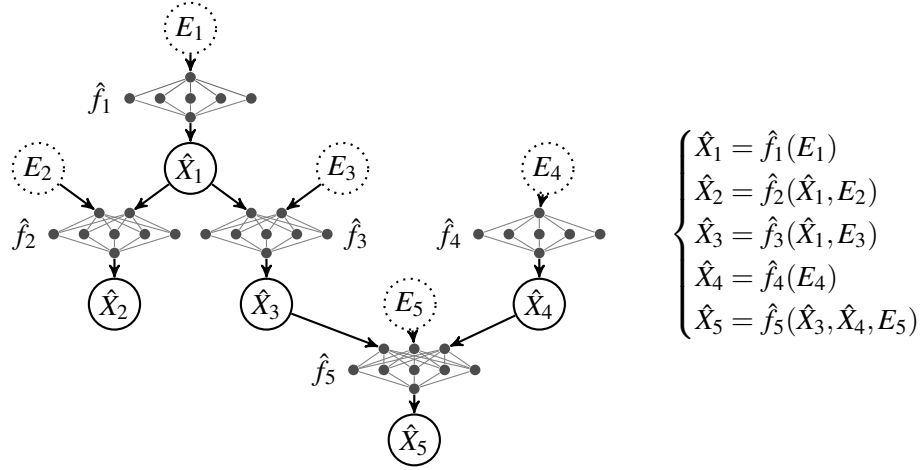
\begin{figure}[!h]
    \centering
    
\begin{tikzpicture}[node distance=0.25cm, auto,]
         \tikzstyle{neuron}=[circle,fill=black!70,minimum size=4pt,inner sep=0pt]
        \tikzstyle{input neuron}=[neuron];
        \tikzstyle{output neuron}=[neuron];
        \tikzstyle{hidden neuron}=[neuron];
        \tikzstyle{annot} = [text width=4em, text centered]

                \begin{scope}[shift={(1.745,-4.65)}]
            % Draw the input layer nodes
    \foreach \name / \y in {1,...,3}
    % This is the same as writing \foreach \name / \y in {1/1,2/2,3/3,4/4}
        \node[xshift=.4cm, yshift=.4cm, input neuron] (I5-\name) at (\y * \inlsep, 0) {};

    % Draw the hidden layer nodes
    \foreach \name / \y in {1,...,5}
        \path[yshift=0.5cm]
            node[hidden neuron] (H5-\name) at (\y * \inlsep cm,- \layersep) {};

    % Draw the output layer node
    \node[output neuron, below of=H5-3] (O5) {};

    % Connect every node in the input layer with every node in the
    % hidden layer.
    \foreach \source in {1,...,3}
        \foreach \dest in {1,...,5}
            \path[draw=black!50] (I5-\source) edge (H5-\dest);

    % Connect every node in the hidden layer with the output layer
    \foreach \source in {1,...,5}
        \path[draw=black!50] (H5-\source) edge (O5);
                \end{scope}

                \begin{scope}[shift={(3.39,-2.85)}]
            % Draw the input layer nodes
    \foreach \name / \y in {1,...,1}
    % This is the same as writing \foreach \name / \y in {1/1,2/2,3/3,4/4}
        \node[xshift=.8cm, yshift=.4cm, input neuron] (I4-\name) at (\y * \inlsep, 0) {};

    % Draw the hidden layer nodes
    \foreach \name / \y in {1,...,5}
        \path[yshift=0.5cm]
            node[hidden neuron] (H4-\name) at (\y * \inlsep cm,- \layersep) {};

    % Draw the output layer node
    \node[output neuron, below of=H4-3] (O4) {};

    % Connect every node in the input layer with every node in the
    % hidden layer.
    \foreach \source in {1,...,1}
        \foreach \dest in {1,...,5}
            \path[draw=black!50] (I4-\source) edge (H4-\dest);

    % Connect every node in the hidden layer with the output layer
    \foreach \source in {1,...,5}
        \path[draw=black!50] (H4-\source) edge (O4);
                \end{scope}

                \begin{scope}[shift={(-1.2,-1.05)}]
            % Draw the input layer nodes
    \foreach \name / \y in {1,...,1}
    % This is the same as writing \foreach \name / \y in {1/1,2/2,3/3,4/4}
        \node[xshift=.8cm, yshift=.4cm, input neuron] (I1-\name) at (\y * \inlsep, 0) {};

    % Draw the hidden layer nodes
    \foreach \name / \y in {1,...,5}
        \path[yshift=0.5cm]
            node[hidden neuron] (H1-\name) at (\y * \inlsep cm,- \layersep) {};

    % Draw the output layer node
    \node[output neuron, below of=H1-3] (O1) {};

    % Connect every node in the input layer with every node in the
    % hidden layer.
    \foreach \source in {1,...,1}
        \foreach \dest in {1,...,5}
            \path[draw=black!60] (I1-\source) edge (H1-\dest);

    % Connect every node in the hidden layer with the output layer
    \foreach \source in {1,...,5}
        \path[draw=black!50] (H1-\source) edge (O1);
                \end{scope}
                
        \begin{scope}[shift={(-.2,-2.85)}]
            % Draw the input layer nodes
    \foreach \name / \y in {1,...,2}
    % This is the same as writing \foreach \name / \y in {1/1,2/2,3/3,4/4}
        \node[xshift=.6cm, yshift=.4cm, input neuron] (I3-\name) at (\y * \inlsep, 0) {};

    % Draw the hidden layer nodes
    \foreach \name / \y in {1,...,5}
        \path[yshift=0.5cm]
            node[hidden neuron] (H3-\name) at (\y * \inlsep cm,- \layersep) {};

    % Draw the output layer node
    \node[output neuron, below of=H3-3] (O3) {};

    % Connect every node in the input layer with every node in the
    % hidden layer.
    \foreach \source in {1,...,2}
        \foreach \dest in {1,...,5}
            \path[draw=black!50] (I3-\source) edge (H3-\dest);

    % Connect every node in the hidden layer with the output layer
    \foreach \source in {1,...,5}
        \path[draw=black!50] (H3-\source) edge (O3);
        \end{scope}

\begin{scope}[shift={(-2.2,-2.85)}]
            % Draw the input layer nodes
    \foreach \name / \y in {1,...,2}
    % This is the same as writing \foreach \name / \y in {1/1,2/2,3/3,4/4}
        \node[xshift=.6cm, yshift=.4cm, input neuron] (I-\name) at (\y * \inlsep, 0) {};

    % Draw the hidden layer nodes
    \foreach \name / \y in {1,...,5}
        \path[yshift=0.5cm]
            node[hidden neuron] (H-\name) at (\y * \inlsep cm,- \layersep) {};

    % Draw the output layer node
    \node[output neuron, below of=H-3] (O2) {};

    % Connect every node in the input layer with every node in the
    % hidden layer.
    \foreach \source in {1,...,2}
        \foreach \dest in {1,...,5}
            \path[draw=black!50] (I-\source) edge (H-\dest);

    % Connect every node in the hidden layer with the output layer
    \foreach \source in {1,...,5}
        \path[draw=black!50] (H-\source) edge (O2);
        \end{scope}
        \node[punkt, dotted] (e1) at (0,0) {$E_1$};
        %\node[draw, below=of e1] (f1) {  $\hat{f}_1$   };
        \node[punkt, below=of O1] (x1) {$\hat{X}_1$};
        \node[punkt, dotted, right=1.3cm of x1] (e3) {$E_3$};

        \node[punkt, dotted, left=1.3cm of x1] (e2) {$E_2$};
        \node[punkt, dotted, right=2cm of e3] (e4) {$E_4$};
        \node[punkt, below=of O4] (x4) {$\hat{X}_4$};
        \node[punkt, dotted, left=1cm of x4] (e5) {$E_5$};
        \node[punkt, below=of O3] (x3) {$\hat{X}_3$};
        \node[punkt, below=of O5] (x5) {$\hat{X}_5$};
        \node[punkt, below=of O2] (x2) {$\hat{X}_2$};
        
        \draw[pil] (e1) -- (I1-1);
        \draw[pil] (e2) -- (I-1);
        \draw[pil] (e3) -- (I3-2);
        \draw[pil] (e4) -- (I4-1);
        \draw[pil] (e5) -- (I5-2);

        \draw[pil] (O1) -- (x1);
        \draw[pil] (O2) -- (x2);
        \draw[pil] (O3) -- (x3);
        \draw[pil] (O4) -- (x4);
        \draw[pil] (O5) -- (x5);

        \draw[pil] (x1) -- (I-2);
        \draw[pil] (x1) -- (I3-1);
        \draw[pil] (x3) -- (I5-1);
        \draw[pil] (x4) -- (I5-3);

        \node[annot,left= -.6cm of H-1]{$\hat{f_2}$};
        \node[annot,left= -.6cm of H1-1]{$\hat{f_1}$};
        \node[annot,right= -.6cm of H3-5]{$\hat{f_3}$};
        \node[annot,left= -.6cm of H4-1]{$\hat{f_4}$};
        \node[annot,left= -.6cm of H5-1]{$\hat{f_5}$};

%     \begin{tikzpicture}[node distance=0.25cm, auto,]
%         \node[punkt, dotted] (e1) at (0,0) {$E_1$};
%         \node[draw, below=of e1] (f1) {  $\hat{f}_1$   };
%         \node[punkt, below=of f2] (x1) {$X_1$};
%         \node[punkt, dotted, right=1.3cm of x1] (e3) {$E_3$};
%         \node[punkt, dotted, left=1.3cm of x1] (e2) {$E_2$};
%         \node[punkt, dotted, right=2cm of e3] (e4) {$E_4$};
%         \node[draw, below=of e4] (f4) { $\hat{f}_4$};
%         \node[punkt, below=of f4] (x4) {$X_4$};
%         \node[punkt, dotted, left=1cm of x4] (e5) {$E_5$};
%         \node[draw, below=of x1, shift={(-1cm,0)}] (f2) {$\hat{f}_2$};
%         \node[draw,  below=of x1, shift={(1cm,0)}] (f3) {$\hat{f}_3$};
%         \node[punkt, below=of f3] (x3) {$X_3$};
%         \node[draw, below=of e5] (f5) {$\hat{f}_5$};
%         \node[punkt, below=of f5] (x5) {$X_5$};
%         \node[punkt, below=of f2] (x2) {$X_2$};
        
%         \draw[pil] (e1) -- (f1);
%         \draw[pil] (e2) -- (f2);
%         \draw[pil] (e3) -- (f3);
%         \draw[pil] (e4) -- (f4);
%         \draw[pil] (e5) -- (f5);

%         \draw[pil] (f1) -- (x1);
%         \draw[pil] (f2) -- (x2);
%         \draw[pil] (f3) -- (x3);
%         \draw[pil] (f4) -- (x4);
%         \draw[pil] (f5) -- (x5);

%         \draw[pil] (x1) -- (f2);
%         \draw[pil] (x1) -- (f3);
%         \draw[pil] (x3) -- (f5);
%         \draw[pil] (x4) -- (f5);

        \node[right=0.7cm of H4-5] {
                $\begin{cases}   
                \hat{X}_1 = \hat{f}_1(E_1) \\
                \hat{X}_2 = \hat{f}_2(\hat{X}_1,E_2) \\
                \hat{X}_3 = \hat{f}_3(\hat{X}_1,E_3) \\
                \hat{X}_4 = \hat{f}_4(E_4) \\
                \hat{X}_5 = \hat{f}_5(\hat{X}_3,\hat{X}_4,E_5) \\
                \end{cases}$};
    \end{tikzpicture}
    \caption{Left: Causal Generative Neural Network over variables $\hat{\mathbf{X}} = (\hat{X}_1, \ldots, \hat{X}_5)$. Right: Corresponding Functional Causal Model  equations.}
    \label{figure:CGNN_FCM}
\end{figure}
\def\z{\mbox{${\bf z}$}}
\def\zp{\mbox{$\hat{\bf z}$}}

It is clear from its definition that a CGNN defines a continuous FCM. 

\subsubsection{Generative model and interventions} 

A CGNN $\mathcal{C}_{\widehat{\mathcal{G}},\hat{f}} = (\widehat{\mathcal{G}}, \hat{f}, {\cal E})$ is a \textbf{generative} model in the sense that any sample   $[e_{1,j}, \ldots, e_{d,j}]$ of the ``noise'' random vector ${\mathbf E} = [E_1,
\ldots, E_d]$ can be used as ``input'' to the network to generate a data sample $[\hat{x}_{1,j}, \ldots, \hat{x}_{d,j}]$ of the estimated distribution $\hat{P}(\hat{{\mathbf X}} = [\hat{X}_1,\ldots, \hat{X}_d])$ by proceeding as follow:

\begin{enumerate}
\item Draw $\{ [e_{1,j}, \ldots, e_{d,j}] \}_{j=1}^n$, $n$ samples independent identically distributed from the joint distribution of independent noise variables ${\mathbf E} = [E_1, \ldots, E_d]$.
\item Generate $n$ samples $\{ [\hat{x}_{1,j}, \ldots, \hat{x}_{d,j}] \}_{j=1}^{n}$, where each estimate sample $\hat{x}_{i,j}$ of variable $\hat{X}_i$ is computed in the topological order of $\widehat{\mathcal{G}}$ from $\hat{f}_i$ with the $j$-th estimate samples $\hat{x}_{Pa(i;\hat{\mathcal{G}}),j}$ of $\hat{X}_{\text{Pa}(i; \hat{\mathcal{G}})}$ and the $j$-th sample $e_{i,j}$ of the random noise variable $E_i$.
\end{enumerate}

Notice that a CGNN generates a probability distribution $\hat{P}$ which is Markov with respect to $\widehat{\mathcal{G}}$, as the graph  $\widehat{\mathcal{G}}$ is acyclic and the noise variables $E_i$ are mutually independent. 

Importantly, CGNN supports interventions, that is, freezing a variable $X_i$ to some constant $v_i$. The resulting joint distribution noted $\hat{P}_{\text{do}(\hat{X}_i = v_i)}(\hat{X})$, called  \emph{interventional distribution} \citep{pearl2009causality}, can be computed from CGNN by discarding all causal influences on $\hat{X}_i$ and clamping its value to $v_i$.  It is emphasized that intervening is different from conditioning (\emph{correlation does not imply causation}). The knowledge of interventional distributions is essential for e.g., public policy makers, wanting to estimate the overall effects of a decision on a given variable.

\subsection{Model evaluation \label{ModelEval}}

The goal is to associate to each candidate solution $\mathcal{C}_{\widehat{\mathcal{G}},\hat{f}} = (\widehat{\mathcal{G}}, \hat{f}, {\cal E})$ a score reflecting how well this candidate solution describes the observational data. 
Firstly we define the model scoring function  (Section \ref{ModelEval}), then we show that this model scoring function allows to build a CGNN generating a distribution $\hat{P}(\hat{X})$ that approximates $P(X)$ with arbitrary accuracy (Section \ref{Approx}).

\subsubsection{Scoring metric \label{ScoringMetric}}

The ideal score, to be minimized, is the distance between the joint distribution $P$ associated with the ground truth FCM, and the joint distribution $\widehat{P}$ defined by the CGNN candidate $\mathcal{C}_{\hat{\cal G},\hat{f}} = (\hat{\cal G}, \hat{f}, {\cal E})$. A tractable approximation thereof is given by the Maximum Mean Discrepancy (MMD) \citep{gretton2007kernel} between the $n$-sample observational data $\cal D$, and an $n$-sample $\widehat{\cal D}$ sampled after $\widehat{P}$. Overall, the CGNN $\mathcal{C}_{\hat{\cal G},\hat{f}}$ is trained by minimizing   
\begin{equation}
  S(\mathcal{C}_{\hat{\cal G},\hat{f}}, \mathcal{D}) =
   \widehat{\text{MMD}}_k(\mathcal{D}, \widehat{\mathcal{D}}) + \lambda 
    |\widehat{\mathcal{G}}|,
  \label{eq:the_loss}
\end{equation}

with $\widehat{\text{MMD}}_k(\mathcal{D}, \widehat{\mathcal{D}})$ defined as:

\begin{equation}
\widehat{\text{MMD}_k}(\mathcal{D}, \widehat{\mathcal{D}}) = 
\frac{1}{n^2} \sum_{i, j = 1}^{n} k(x_i, x_j) +
\frac{1}{n^2} \sum_{i, j = 1}^{n} k(\hat{x}_i, \hat{x}_j)
- \frac{2}{n^2} \sum_{i,j = 1}^n k(x_i, \hat{x}_j)
\label{eq:mmd}
\end{equation}

\noindent where kernel $k$ usually is taken as the Gaussian kernel ($k(x,x') = \exp(-\gamma \|x-x'\|_2^2)$). The MMD statistic, with quadratic complexity in the sample size, has the good property that as $n$ goes to infinity, it goes to zero iff $P
= \hat{P}$  \citep{gretton2007kernel}. For scalability, a linear approximation of the MMD statistics based on $m=100$ random features \citep{dlp}, 
called $\widehat{\text{MMD}}_k^m$, will also be used in the experiments (more in Appendix~\ref{app:mmd}).

Due to the Gaussian kernel being differentiable, $\widehat{\text{MMD}}_k$ and $\widehat{\text{MMD}}_k^m$ are differentiable, and backpropagation can be used to learn the CGNN made of networks $\hat {f}_i$ structured along $\hat{\cal G}$.

In order to compare candidate solutions with different structures in a fair manner, the evaluation score of Equation \ref{eq:the_loss} is augmented with a penalization term $\lambda |\widehat{\mathcal{G}}|$, 
with $|\widehat{\mathcal{G}}|$ the number of edges in $\hat{\cal G}$. Penalization weight $\lambda$ is a hyper-parameter of the approach.

\subsubsection{Representational power of CGNN \label{Approx}} 

 We note  $\mathcal{D} = \{ [x_{1,j}, \ldots, x_{d,j}]\}_{j=1}^{n}$, the data samples independent  identically distributed after the (unknown) joint  distribution $P({\mathbf X} = [X_1,
\ldots, X_d])$, also referred to as observational data.

Under same conditions as in Proposition \ref{prop1}, ($P(X)$ being decomposable along graph $\cal G$, with continuous and strictly positive joint density function on a compact in $\mathbb{R}^d$ and zero elsewhere), there exists a CGNN $(\widehat{\mathcal {G}}, \hat{f}, {\cal E})$, that approximates $P(X)$ with arbitrary accuracy:

 \begin{prop}{\label{prop2}}
For $m \in [[1,d]]$, let $Z_m$ denote the set of variables with topological order less than $m$ and let $d_m$ be its size. For any $d_m$-dimensional vector of noise values $e^{(m)}$, let $z_m(e^{(m)})$ (resp. $\widehat{z_m}(e^{(m)})$) be the vector of values computed in topological order from the FCM $({\cal G}, f, {\cal E})$ (resp. the CGNN $({\cal G}, \hat{f}, {\cal E})$). 
For any $\epsilon > 0$, there exists a set of networks $\hat{f}$ with architecture $\cal G$ such that 
\begin{equation}
\forall e^{(m)},  \|z_m(e^{(m)})- \widehat{z_m}(e^{(m)})\| < \epsilon
\label{eq:prop2}
\end{equation}
\end{prop}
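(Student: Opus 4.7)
The plan is to prove Proposition 2 by induction on the topological order $m$, leveraging the universal approximation theorem for one-hidden-layer neural networks with continuous activation (assumed sigmoidal or similar, so that Cybenko/Hornik applies) together with uniform continuity of the ground-truth mechanisms $f_i$ on their compact domains (which is guaranteed by Proposition~\ref{prop1}).

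First I would set the stage. By Proposition~\ref{prop1}, fix a continuous FCM $({\cal G}, f, {\cal E})$ with ${\cal E} = \mathcal{U}[0,1]$ and each $f_i$ continuous on a compact set $K_i \subset \mathbb{R}^{|\Pa{i}|} \times [0,1]$. Since $f_i$ is continuous on a compact, it is uniformly continuous, its image is compact, and hence the set $Z_m(K) := \{z_m(e^{(m)}) : e^{(m)} \in [0,1]^{d_m}\}$ lies in a compact subset of $\mathbb{R}^{d_m}$. I would define the inputs actually fed to each $\hat{f}_i$ during the forward pass and check that they remain in a bounded region throughout the induction, which is needed to invoke universal approximation on a compact domain.

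The induction is on $m$. For the base case ($m=1$, a source node with $\Pa{i} = \emptyset$), the mechanism $f_i$ is a continuous function of $E_i \in [0,1]$ only, and the universal approximation theorem yields $\hat{f}_i$ of the form in Equation~\ref{eq:fcm_neural} with $\sup_{e \in [0,1]} |f_i(e) - \hat{f}_i(e)| < \delta_i$ for any prescribed $\delta_i > 0$. For the inductive step, assume all variables of topological order $\le m$ are approximated within $\delta$ uniformly in $e^{(m)}$. For a variable $X_i$ of order $m+1$, decompose the error via the triangle inequality:
\begin{equation*}
|\hat{f}_i(\hat{z}_{\Pa{i}}, e_i) - f_i(z_{\Pa{i}}, e_i)|
\le \underbrace{|\hat{f}_i(\hat{z}_{\Pa{i}}, e_i) - f_i(\hat{z}_{\Pa{i}}, e_i)|}_{\text{(A) approximation error}}
+ \underbrace{|f_i(\hat{z}_{\Pa{i}}, e_i) - f_i(z_{\Pa{i}}, e_i)|}_{\text{(B) propagation error}}.
\end{equation*}
Term (B) is controlled by the uniform continuity of $f_i$: given $\epsilon'>0$ there exists $\eta>0$ such that any perturbation of the parent arguments by less than $\eta$ produces an output deviation less than $\epsilon'$; by the inductive hypothesis we can force $\|\hat{z}_{\Pa{i}} - z_{\Pa{i}}\| < \eta$ by choosing the earlier $\delta$ small enough. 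Term (A) is controlled by invoking universal approximation on a slightly enlarged compact neighborhood of $K_i$ that contains all possible approximate parent values, making $\hat{f}_i$ uniformly $\epsilon'$-close to $f_i$ on that enlarged domain.

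Stringing the two bounds together gives a uniform bound of order $\epsilon'$ at layer $m+1$. Since the graph is finite (depth bounded by $d$), I would propagate the tolerances backwards: given the target $\epsilon$ in Equation~\ref{eq:prop2}, pick tolerances $\epsilon_d = \epsilon$ and inductively define $\epsilon_{m} < \epsilon_{m+1}/(2C_{m+1})$, where $C_{m+1}$ is a modulus-of-continuity constant for $f_{m+1}$, and set each layer's approximation tolerance $\delta_i$ so both (A) and (B) are below $\epsilon_{m+1}/2$. The main obstacle is bookkeeping: I must guarantee that the enlarged compact domain on which universal approximation is invoked at layer $m+1$ contains every possible $\hat{z}_{\Pa{i}}$ that the network can produce during the forward pass, for every $e^{(m+1)} \in [0,1]^{d_{m+1}}$. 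This is secured by starting the induction with a uniform a priori bound (say, take $\delta$-inflated compact images from Proposition~\ref{prop1}) and then using compactness of $[0,1]^{d_{m+1}}$ to make the whole argument uniform in $e^{(m+1)}$. The final conclusion Equation~\ref{eq:prop2} then follows at $m = d$.
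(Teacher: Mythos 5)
Your proof is correct and follows essentially the same route as the paper's: induction on topological order, the universal approximation theorem for each mechanism, and a triangle-inequality split of the error at each node into an approximation term and a propagation term controlled by uniform continuity. The only (immaterial) difference is the direction of the split — you use uniform continuity of the true $f_i$ and approximate on an enlarged compact containing the perturbed parent values, whereas the paper approximates $f_j$ at the true parent values and uses the continuity modulus of the approximant $\hat{f}_j$ to absorb the parent perturbation.
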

\begin{proof}
In Appendix \ref{app:proofs}
\end{proof}

Using this proposition and the $\widehat{\text{MMD}_k}$ scoring criterion presented in Equation \ref{eq:mmd}, it is shown that the distribution $\hat P$ of the CGNN can estimate the true observational distribution of the (unknown) FCM up to an arbitrary precision, under the assumption of an infinite observational sample:

 \begin{prop}{\label{prop3}}
Let  $\mathcal{D}$ be an infinite observational sample generated from $({\cal G}, f, {\cal E})$.
With same notations as in Prop. 2, for every  sequence $\epsilon_t$, such that $\epsilon_t>0$ and goes to zero when $t \rightarrow \infty$, there exists a set $\widehat{f_t} = (\hat f^{t}_1 \ldots \hat f^{t}_d)$ such that $\widehat{\text{MMD}_k}$ between $\cal D$ and an infinite size sample $\widehat{\cal D}_{t}$ generated from the CGNN $({\cal G},\widehat{f_{t}},\cal E)$ is less than $\epsilon_t$.
 \end{prop}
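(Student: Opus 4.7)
The plan is to combine the pointwise approximation from Proposition 2 with the smoothness of the Gaussian kernel, coupled through a shared noise realization, and then pass to the infinite-sample limit so that the empirical $\widehat{\mathrm{MMD}}_k$ coincides with the population MMD.

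First, I would fix an arbitrary $\epsilon_t > 0$ and translate the goal into a population statement: because $\mathcal{D}$ and $\widehat{\mathcal{D}}_t$ are both infinite samples, $\widehat{\mathrm{MMD}}_k(\mathcal{D},\widehat{\mathcal{D}}_t)$ equals the population quantity
\begin{equation*}
\mathrm{MMD}_k^2(P,\hat P_t) \;=\; \mathbb{E}_{X,X'}[k(X,X')] + \mathbb{E}_{\hat X,\hat X'}[k(\hat X,\hat X')] - 2\,\mathbb{E}_{X,\hat X}[k(X,\hat X)],
\end{equation*}
so it suffices to make this last expression smaller than $\epsilon_t$. I would then introduce a \emph{coupling}: draw a single noise vector $E=(E_1,\dots,E_d)\sim \mathcal{E}^{\otimes d}$ and set $X := z_d(E)$ using the true FCM $({\cal G},f,{\cal E})$, and $\hat X := \widehat{z_d}(E)$ using the CGNN $({\cal G},\widehat{f_t},{\cal E})$. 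By construction, the marginals are $P$ and $\hat P_t$, so the three expectations above may be rewritten as integrals against $E, E'$ pairs.

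Next I would invoke Proposition 2 with tolerance $\delta_t$ (to be chosen in terms of $\epsilon_t$) to pick $\widehat{f_t}$ so that $\|z_d(e) - \widehat{z_d}(e)\| < \delta_t$ for every noise realization $e$. Since the assumption in Proposition 1/2 gives compact supports for $z_d$ and $\widehat{z_d}$, the Gaussian kernel $k(x,x') = \exp(-\gamma\|x-x'\|_2^2)$ is Lipschitz continuous on the relevant compact set with some constant $L = L(\gamma)$, i.e.\ $|k(a,b) - k(a',b')| \le L(\|a-a'\| + \|b-b'\|)$. Applied under the coupling, this yields
\begin{equation*}
\bigl|\mathbb{E}_{X,X'}[k(X,X')] - \mathbb{E}_{X,\hat X'}[k(X,\hat X')]\bigr| \le L\,\delta_t,
\end{equation*}
and analogous bounds for the remaining cross-term differences. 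Assembling the three terms of the MMD and applying the triangle inequality gives $\mathrm{MMD}_k^2(P,\hat P_t) \le C\,\delta_t$ for a constant $C$ depending only on $L$ (hence on $\gamma$ and the diameter of the compact support). Choosing $\delta_t := \epsilon_t^2/C$ (or any sequence that sends the right-hand side below $\epsilon_t$) produces the required $\widehat{f_t}$.

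The main obstacle is ensuring the Lipschitz bound on $k$ is valid, which in turn requires that both $z_d$ and $\widehat{z_d}$ range over a common compact set — this follows from the compact support hypothesis on $h$ in Proposition 1 together with the continuity of the neural networks $\widehat{f_t}$ on the compact noise space $[0,1]^d$, but should be stated carefully. A minor secondary issue is the identification of $\widehat{\mathrm{MMD}}_k$ with its population limit on the infinite sample; this is standard given the boundedness of the Gaussian kernel (uniform law of large numbers for bounded U-statistics), and I would simply cite it. Everything else reduces to the Lipschitz calculation sketched above.
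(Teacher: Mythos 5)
Your proposal is correct and follows essentially the same route as the paper's proof: invoke Proposition 2 to obtain a uniform approximation of $z$ by $\widehat{z}_t$ on the compact noise space, couple the two distributions through a shared noise draw so that the infinite-sample MMD becomes an expectation over noise pairs $(e,e')$, and conclude from the regularity of the bounded Gaussian kernel. The only difference is that you make the final step quantitative via a Lipschitz constant for $k$ (which cleanly ties the approximation tolerance $\delta_t$ to $\epsilon_t$), whereas the paper reaches the same conclusion qualitatively via the bounded convergence theorem.
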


\begin{proof}
In Appendix \ref{app:proofs}
\end{proof}

Under these assumptions, as $\widehat{\text{MMD}}_k(\mathcal{D}, \hat{\mathcal{D}_{t}}) \rightarrow 0$, as $t \rightarrow \infty$, it implies that the sequence of generated $\hat{P}_t$ converges in distribution toward  the distribution $P$ of the observed sample \citep{gretton2007kernel}. This result highlights the generality of this approach as we can model any kind of continuous FCM from observational data (assuming access to infinite observational data). Our class of model is not restricted to  simplistic assumptions on the data generative process such as the additivity of the noise or linear causal mechanisms. But this strength comes with a new challenge relative to identifiability of such CGNNs as the result of proposition \ref{prop3} holds for any DAG  $\widehat{\cal G}$ such that $P$ can be factorized along $\cal G$ and then for any any DAG in the Markov equivalence class of ${\cal G}$ (under classical assumption of CMA, CFA and CSA).  In particular in the pairwise setting, when only 2 variables $X$ and $Y$ are observed, the joint distribution $P(X,Y)$ can be factorized in two Markov equivalent DAGs $X\rightarrow Y$ or $Y\rightarrow X$ as $P(X,Y) = P(X) P(Y|X)$ and $P(X,Y) = P(Y) P(X|Y)$. Then the CGNN can reproduce equally well the observational distribution in both directions (under the assumption of proposition \ref{prop1}).  We refer the reader to \cite{zhang2009identifiability} for more details on this problem of identifiability in the  bivariate case. 

As shown in Section \ref{sec:identif}, the proposed approach enforces the discovery of causal models in the Markov equivalence class. Within this class, the non-identifiability issue is empirically mitigated by restricting the class of CGNNs considered, and specifically limiting the number $n_h$ of hidden neurons in each causal mechanism (Eq. \ref{eq:fcm_neural}). Formally, we restrict ourselves to the sub-class of CGNNs, noted $\mathcal{C}_{\hat{\cal G},\hat{f}^{n_h}} = (\widehat{\mathcal{G}}, \hat{f}^{n_h}, \mathcal{E})$ with exactly $n_h$ hidden neurons in each  $\hat{f}_i$ mechanism. Accordingly, any candidate  $\hat{\mathcal{G}}$ with number of edges $|\hat{\mathcal{G}}|$ involves the same number of parameters: $(2d+|\hat{\mathcal{G}}|) \times n_h$ weights and $d \times (n_h + 1 )$ bias parameters. As shown experimentally in Section \ref{sec:expe}, this parameter $n_h$ is crucial as it governs the CGNN ability to model the causal mechanisms: too small $n_h$, and data patterns may be missed; too large $n_h$, and overly complicated causal mechanisms may be retained.

\subsection{Model optimization \label{sec:optim}}

 Model optimization consists at finding a (nearly) optimum solution $(\hat{\mathcal{G}}, \hat{f})$ in the sense of the score defined in the previous section. 
  The so-called {\em parametric} optimization of the CGNN, where structure estimate $\hat{\mathcal{G}}$ is fixed and the goal is to find the best neural estimates $\hat{f}$ conditionally to $\hat{\mathcal{G}}$ is tackled in Section \ref{sec:param}. The {\em non-parametric} optimization, aimed at finding the best structure estimate, is considered in Section \ref{sec:nonparam}. In Section \ref{sec:identif}, we present an identifiability result for CGNN up to Markov equivalence classes.

\subsubsection{Parametric (weight) optimization \label{sec:param}}

 Given the acyclic structure estimate $\hat{\mathcal{G}}$, the neural networks $\hat{f}_1, \ldots, \hat{f}_d$ of the CGNN are learned end-to-end using backpropagation with Adam optimizer \citep{2014arXiv1412.6980K} by minimizing losses $\widehat{\text{MMD}_k}$ (Eq. \eqref{eq:mmd}, referred to as \textbf{CGNN} ($\widehat{\text{MMD}}_k$) ) or $\widehat{\text{MMD}}_k^m$ (see Appendix \ref{app:mmd},\textbf{CGNN} ($\widehat{\text{MMD}}^m_k$)).

The procedure closely follows that of supervised continuous learning (regression), except for the fact that the loss to be minimized is the MMD loss instead of the mean squared error.
Neural nets $\hat{f}_i$, $i \in [[1,d]]$ are trained during $n_\text{train}$ epochs, where the noise samples, independent and identically distributed, are drawn in each epoch. In the $\widehat{\text{MMD}}_k^m$ variant, the parameters of the random kernel are resampled from their respective distributions in each training epoch (see Appendix \ref{app:mmd}). After training, the score is computed and averaged over $n_{eval}$ estimated samples of size $n$. Likewise, the noise samples are re-sampled anew for each evaluation sample. The overall process with training and evaluation is repeated $nb_{run}$ times to reduce stochastic effects relative to random initialization of neural network weights and stochastic gradient descent.

\subsubsection{Non-parametric (structure) optimization}  \label{search_structure}\label{ModelSearch} \label{sec:nonparam}

The number of directed acyclic graphs $\mathcal{\hat G}$ over $d$ nodes is super-exponential in $d$, making the non-parametric optimization of the CGNN structure an intractable computational and statistical problem. 
Taking inspiration from \cite{tsamardinos2006max,nandy2015high}, we start from a graph skeleton recovered by other methods such as  feature selection \citep{yamada2014high}. We  focus on optimizing the edge orientations. Letting $L$ denote the number of edges in the graph, it defines a combinatorial optimization problem of  complexity ${\cal O}(2^L)$ (note however that not all orientations are admissible since the eventual oriented graph must be a DAG). 

The motivation for this approach is to decouple the edge selection task and the causal modeling (edge orientation) tasks, and enable their independent assessment.

Any $X_i-X_j$ edge in the graph skeleton stands for a direct dependency between variables $X_i$ and $X_j$.
Given Causal Markov and Faithfulness assumptions, such a direct dependency either reflects a direct causal relationship between the two variables ($X_i \rightarrow X_j$ or $X_i \leftarrow X_j$), or is due to the fact that $X_i$ and $X_j$ admit a latent (unknown) common cause ($X_i \leftrightarrow X_j$). Under the assumption of \textit{causal sufficiency}, the latter does not hold. Therefore the $X_i-X_j$ link is associated with a causal relationship in one or the other direction. The causal sufficiency assumption will be relaxed in Section \ref{confounder}.

The edge orientation phase proceeds as follows:

\begin{itemize}
\item[$\bullet$] Each  $X_i-X_j$ edge is first considered in isolation, and its orientation is evaluated using CGNN. Both score $S(\mathcal{C}_{X_i \rightarrow X_j,\hat{f}}, \mathcal{D}_{ij})$ and $S(\mathcal{C}_{X_j \rightarrow X_i,\hat{f}}, \mathcal{D}_{ij})$ are computed, where $\mathcal{D}_{ij} = \{[x_{i,l}, x_{j,l}]\}_{l=1}^{n}$. The best orientation corresponding to a minimum score is retained.  After this step, an initial graph is built with complexity $2L$ with $L$ the number of edges in the skeleton graph. 
\item[$\bullet$] The initial graph is revised to remove all cycles. Starting from a set of random nodes, all paths are followed iteratively until all nodes are reached; an edge pointing toward an already visited node and forming a cycle is reversed. The resulting DAG is used as initial DAG for the structured optimization, below. 
\item[$\bullet$] The optimization of the DAG structure is achieved using a hill-climbing algorithm aimed to optimize  the global score $ S(\mathcal{C}_{\hat{\cal G},\hat{f}}, \mathcal{D})$. Iteratively, i) an edge $X_i-X_j$ is uniformly randomly selected in the current graph; ii) the graph obtained by reversing this edge is considered (if it is still a DAG and has not been considered before) and the associated global CGNN is retrained; iii) if this graph obtains a lower global score than the former one, it becomes the current graph and the process is iterated until reaching a (local) optimum. More sophisticated combinatorial optimization approaches, e.g. Tabu search, will be considered in further work. In this paper, hill-climbing is used for a proof of concept of the proposed approach, achieving a decent trade-off  between computational time and accuracy.
\end{itemize}

At the end of the process each causal  edge $X_i \rightarrow X_j$ in $\mathcal{G}$ is associated with a score, measuring its contribution to the global score:

\begin{equation}
S_{X_i \rightarrow X_j} = S(\mathcal{C}_{\hat{\mathcal{G}} - \{X_i \rightarrow X_j\},\hat{f}}, \mathcal{D}) - S(\mathcal{C}_{\hat{\cal G},\hat{f}}, \mathcal{D}) 
\label{score_edges}
\end{equation}

During the structure (non-parametric) optimization, the graph skeleton is fixed; no edge is added or removed. The penalization term $\lambda |\widehat{\mathcal{G}}|$ entering in the score evaluation (eq. \ref{eq:the_loss}) can thus be neglected at this stage and only the MMD-losses are used to compare two graphs. The penalization term will be used in Section \ref{confounder} to compare structures with different skeletons, as the potential confounding factors will be dealt with by removing edges.

\subsubsection{ Identifiability of CGNN up to Markov equivalence classes \label{sec:identif}}

Assuming an infinite number of observational data, and assuming further that the generative distribution belongs to the CGNN class $\mathcal{C}_{\mathcal{G},f}$, then there exists a DAG reaching an MMD score of 0 in the Markov equivalence class of $\mathcal{G}$:  

\begin{prop}\label{prop4}
Let $\textbf{X} = [X_1, \ldots, X_d]$ denote a set of continuous random variables with joint distribution $P$, generated by a CGNN $\mathcal{C}_{\mathcal{G},f} = (\mathcal{G}, f, \mathcal{E})$ with ${\cal G}$ a directed acyclic graph. Let $\mathcal{D}$ be an infinite observational sample generated from this CGNN. We assume that $P$ is Markov and faithful to the graph ${\cal G}$, and that every  pair of variables $(X_i,X_j)$ that are  d-connected in the graph are not independent. We note $\widehat{\mathcal{D}}$ an infinite sample generated by a candidate CGNN, $\mathcal{C}_{\widehat{\mathcal{G}},\hat{f}} = (\widehat{\mathcal{G}}, \hat{f}, \mathcal{E})$. Then, \\
(i)\ \ If $\widehat{\mathcal{G}} = {\cal G}$ and $\hat{f} = f$, then $\widehat{\text{MMD}}_k(\mathcal{D}, \widehat{\mathcal{D}}) = 0$.\\
(ii) For any  graph $\widehat{\mathcal{G}}$ characterized by the same adjacencies but not belonging to the Markov equivalence class of $\mathcal{G}$, for all $\hat{f}$, $\widehat{\text{MMD}}_k(\mathcal{D}, \widehat{\mathcal{D}}) \neq 0$.
\end{prop}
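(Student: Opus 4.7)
Part (i) is essentially immediate: with $\widehat{\mathcal{G}}=\mathcal{G}$ and $\hat f = f$, the candidate CGNN defines exactly the same generative process as the true FCM, so $\hat P = P$ as joint distributions. Because the Gaussian kernel is characteristic, $\text{MMD}_k(P,P)=0$, and the infinite-sample limit of $\widehat{\text{MMD}}_k(\mathcal{D},\widehat{\mathcal{D}})$ matches this population value by the convergence results of \citet{gretton2007kernel}. So (i) reduces to invoking the characteristic property of the kernel.

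For part (ii) I would argue by contradiction. Assume some $\hat f$ achieves $\widehat{\text{MMD}}_k(\mathcal{D},\widehat{\mathcal{D}})=0$, which in the infinite-sample limit forces $\hat P = P$. Since $\widehat{\mathcal{G}}$ and $\mathcal{G}$ share the same skeleton but do not belong to the same Markov equivalence class, the Verma--Pearl characterization supplies at least one unshielded triple $(A,B,C)$ on which the two graphs disagree about whether $B$ is a collider. The core step is to exhibit a conditioning set $S$ for which $A$ and $C$ are d-separated given $S$ in $\widehat{\mathcal{G}}$ but d-connected given $S$ in $\mathcal{G}$. Once such an $S$ is in hand, the Markov property of $\hat P$ with respect to $\widehat{\mathcal{G}}$ (which holds automatically because the $E_i$ are mutually independent and $\widehat{\mathcal{G}}$ is a DAG) yields $A \perp C \mid S$ under $\hat P$, while the faithfulness of $P$ to $\mathcal{G}$, together with the strengthened dependence hypothesis in the statement, forces $A \not\perp C \mid S$ under $P$. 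This contradicts $\hat P = P$ and establishes (ii).

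The main obstacle is to actually construct the witness set $S$ in the general multi-variable case, where many other paths may connect $A$ and $C$ in each graph. When the collider appears in $\mathcal{G}$ but not in $\widehat{\mathcal{G}}$, I would take $S$ to contain $B$ together with enough non-descendants of $A$ and $C$ to block every alternative path; since the skeletons are identical, these alternative paths traverse the same nodes in both graphs, so blocking them in $\widehat{\mathcal{G}}$ does not reopen them in $\mathcal{G}$ via the disputed triple. When the collider appears instead in $\widehat{\mathcal{G}}$, I would pick $S$ to avoid $B$ and its descendants while blocking the remaining paths, so that the triple is blocked in $\widehat{\mathcal{G}}$ but open in $\mathcal{G}$. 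The asymmetry of exploiting only the Markov property of $\hat P$, but full faithfulness of $P$, is precisely what enables the argument without assuming faithfulness of $\hat P$ to $\widehat{\mathcal{G}}$; and the strengthened hypothesis that d-connection in $\mathcal{G}$ implies actual dependence in $P$ is what guarantees that the active path identified in $\mathcal{G}$ produces a measurable conditional dependency, closing the contradiction.
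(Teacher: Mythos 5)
Your proposal follows essentially the same route as the paper's proof: part (i) is immediate from $\hat P = P$, and part (ii) invokes the Verma--Pearl characterization to locate an unshielded triple whose collider status the two graphs dispute, then plays the automatic Markov property of $\hat P$ with respect to $\widehat{\mathcal{G}}$ (yielding a conditional independence) against the faithfulness of $P$ to $\mathcal{G}$ and the strengthened d-connection hypothesis (yielding the corresponding dependence), so that $\hat P \neq P$ and the MMD cannot vanish. If anything, your explicit attention to constructing a witness conditioning set $S$ that blocks all alternative paths is more careful than the paper's own argument, which in one case simply conditions on the middle node $Y$ without discussing other paths between the endpoints.
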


\begin{proof} 
In Appendix \ref{app:proofs}
\end{proof}

This result does not establish the  CGNN identifiability within the Markov class of equivalence, that is left for future work. As shown experimentally in Section \ref{expe_setting}, there is a need to control the model capacity in order to recover the directed graph in the Markov equivalence class.\footnote{In some specific cases, such as in the bivariate linear FCM with Gaussian noise and Gaussian input, even by restricting the class of functions considered, the DAG cannot be identified from purely observational data \citep{mooij2016distinguishing}.}

\section{Experiments}\label{sec:expe}

This section reports on the empirical validation of CGNN compared to the state of the art under the no confounding assumption. The experimental setting is first discussed. Thereafter, the results obtained in the bivariate case, where only asymmetries in the joint distribution can be used to infer the causal relationship, are discussed. The variable triplet case, where conditional independence can be used to uncover causal orientations, and the general case of $d > 2$ variables are finally considered. 
All computational times are measured on Intel Xeon 2.7Ghz (CPU) or on Nvidia GTX 1080Ti graphics card (GPU).

\subsection{Experimental setting \label{expe_setting}}

The CGNN architecture is a 1-hidden layer network with ReLU activation function. The multi-scale Gaussian kernel used in the MMD scores has bandwidth $\gamma$ ranging in $\{0.005, 0.05, 0.25,0.5, 1,5,50\}$. The number $nb_{run}$ used to average the score is set to 32 for CGNN-MMD (respectively 64 for CGNN-Fourier).  In this section the distribution $\mathcal{E}$ of the noise variables is set to $\mathcal{N}(0,1)$. 
The number $n_h$ of neurons in the hidden layer, controlling the identifiability of the model, is the most sensitive hyper-parameter of the presented approach. Preliminary experiments are conducted to adjust its range, as follows. A 1,500 sample dataset is generated from the linear structural equation model with additive uniform noise $Y=X+\mathcal{U}(0, 0.5), X \sim U([-2,2])$ (Fig. \ref{figure:parallelogram}). Both CGNNs associated to $X \rightarrow Y$ and $Y \rightarrow X$ are trained until reaching convergence ($n_{epoch} = 1,000$) using Adam \citep{2014arXiv1412.6980K} with a learning rate of $0.01$ and evaluated over $n_{eval} = 500$ generated samples. The distributions generated from both generative models are displayed on Fig. \ref{figure:parallelogram} for $n_h = 2, 5, 20, 100$. The associated scores (averaged on 32 runs) are  displayed on Fig. \ref{figure:parallelogramme-pareto}, confirming that the model space must be restricted for the sake of identifiability (cf. Section \ref{sec:identif} above).

\begin{figure}[h!]
\begin{center}
\includegraphics[width=12cm]{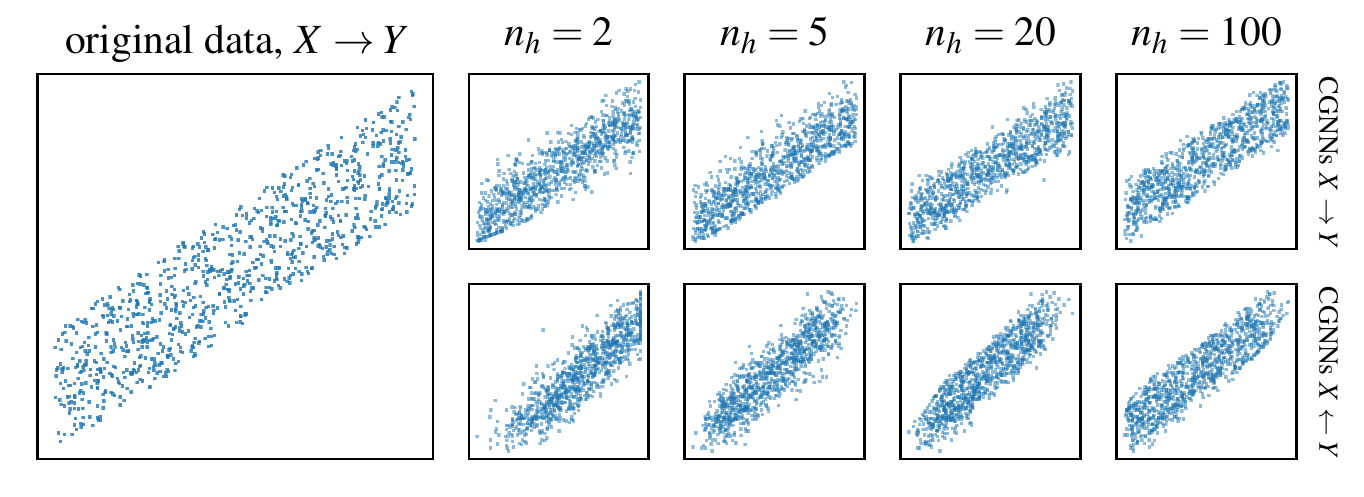}
\caption{Leftmost: Data samples. Columns 2 to 5: Estimate samples generated from CGNN with direction $X \rightarrow Y$ (top row)  and $Y \rightarrow X$ (bottom row) for number of hidden neurons $n_h = 2, 5, 20, 100$.}
\label{figure:parallelogram}
\end{center}
\end{figure} 

\begin{figure}[!h]
       \begin{subfigure}{0.59\textwidth}
       \centering
       \includegraphics[width=6cm]{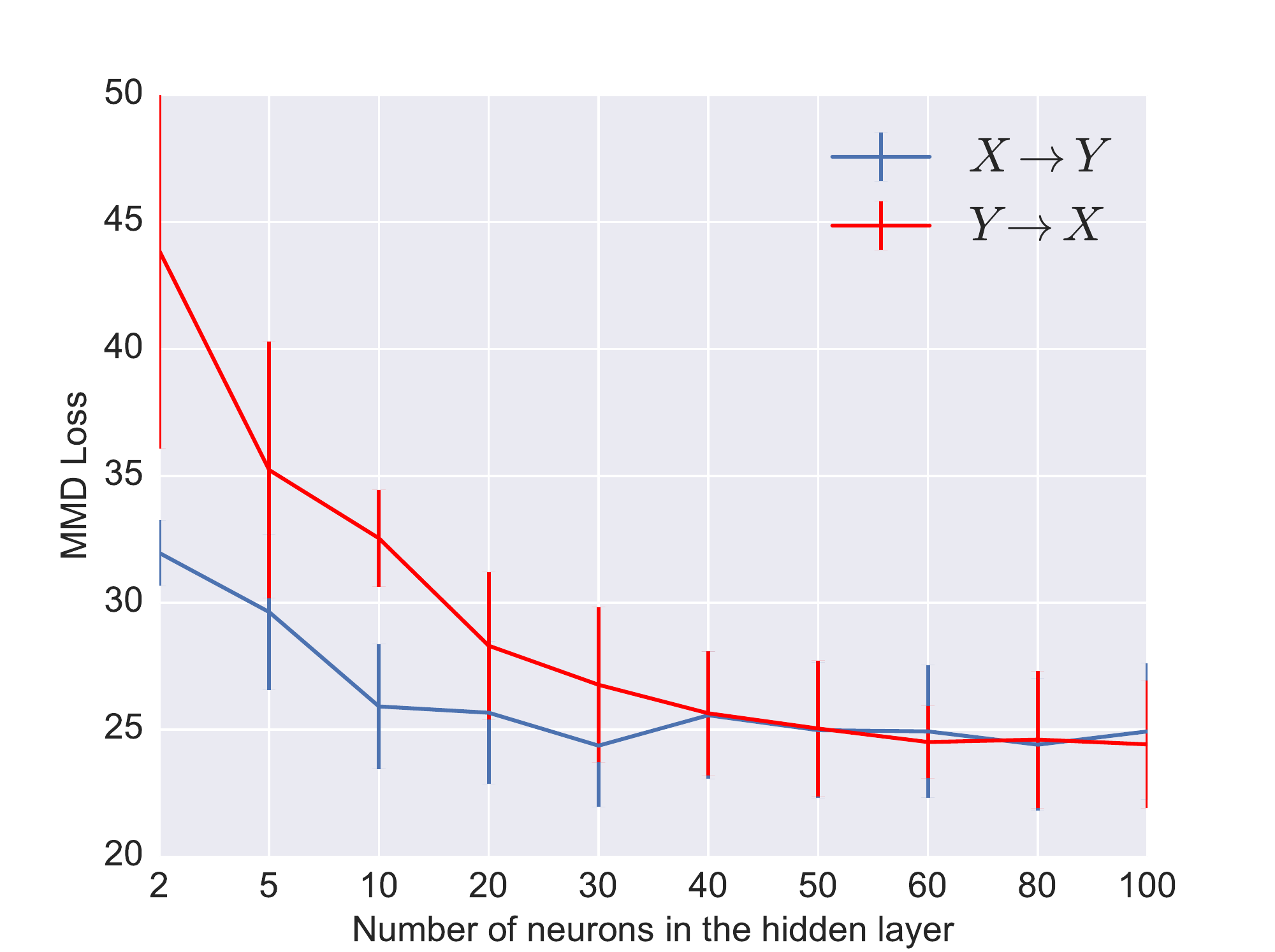}
       \caption{$C_{X\rightarrow Y}$, $C_{Y\rightarrow X}$ with various $n_h$ values.}
       \label{figure:parallelogramme-pareto}
   \end{subfigure}
   \begin{subtable}{0.39\textwidth}
         \footnotesize
         \caption{Scores  $C_{X\rightarrow Y}$ and $C_{Y\rightarrow X}$ with their difference. $^{\star \star \star }$ denotes the significance at the  0.001 threshold with the t-test.}
         \label{table:diff_scores_parallelogram}
           \begin{tabular}{llll}
           \toprule
            $n_h$ & $C_{X\rightarrow Y}$ & $C_{Y\rightarrow X}$ & Diff.\\
           \midrule
           2 & $32.0$ & $43.9$ & $11.9^{\star \star \star }$ \\
           5 & $29.6$ & $35.2$ & $5.6^{\star \star \star }$ \\
           10 & $25.9$ & $32.5$ & $6.6^{\star \star \star }$ \\
           20 & $25.7$ & $28.3$ & $2.6^{\star \star \star }$ \\
           30 & $24.4$ & $26.8$ & $2.4^{\star \star \star }$ \\
           40 &  $25.6$ & $25.6$ & $0.7$ \\
           50 &  $25.0$ & $25.0$ & $0.6$ \\
           100 & $24.9$ & $24.4$ & $-0.5$ \\
           \bottomrule
      \end{tabular} 
    \end{subtable}
    \caption{CGNN sensitivity w.r.t. the number of hidden neurons $n_h$: Scores associated to both causal models (average and standard deviation over 32 runs).}
\end{figure}

\subsection{Learning bivariate causal structures \label{bivariate_structure}}\label{sec:bivariate}

As said, under the no-confounder assumption a dependency between variables $X$ and $Y$ exists iff either $X$ causes $Y$ ($Y = f(X,E)$) or $Y$ causes $X$ ($X = f(Y,E)$). The identification of a \textit{Bivariate Structural Causal Model}  is based on comparing the model scores (Section \ref{ModelEval}) attached to both CGNNs.

\paragraph{Benchmarks.} Five datasets with continuous variables are considered:\footnote{The first four datasets are available at \url{http://dx.doi.org/10.7910/DVN/3757KX}. The \textit{Tuebingen cause-effect pairs} dataset is available at \url{https://webdav.tuebingen.mpg.de/cause-effect/}}\\
$\bullet$ \textbf{CE-Cha}: 300 continuous variable pairs from the cause effect pair challenge \citep{guyon2013cepc}, restricted to pairs with label  $+1$ ($X \rightarrow Y$) and $-1$ ($Y \rightarrow X$). \\
$\bullet$ \textbf{CE-Net}: 300 artificial pairs generated with a neural network initialized with random weights and random distribution for the cause (exponential, gamma, lognormal, laplace...).\\
$\bullet$ \textbf{CE-Gauss}: 300 artificial pairs without confounder sampled with the generator of  \cite{mooij2016distinguishing}:
$Y = f_Y(X,E_Y)$ and $X = f_X(E_X)$ with  $E_X \sim p_{E_X}$ and  $E_Y \sim p_{E_Y}$. $p_{E_X}$ and $p_{E_Y}$ are randomly generated Gaussian mixture distributions. Causal mechanism $f_X$ and $f_Y$ are randomly generated Gaussian processes. \\
$\bullet$ \textbf{CE-Multi}: 300 artificial pairs generated with linear and polynomial mechanisms. The effect variables are built with post additive noise setting ($Y = f(X) + E$), post multiplicative noise ($Y = f(X) \times E$), pre-additive noise ($Y = f(X + E)$) or pre-multiplicative noise ($Y = f(X \times E)$).\\
$\bullet$ \textbf{CE-Tueb}: 99 real-world cause-effect pairs from the \textit{Tuebingen cause-effect pairs} dataset, version August 2016 \citep{mooij2016distinguishing}. This version of this dataset is taken from 37 different data sets coming from various domain: climate, census, medicine data.

For all variable pairs, the size $n$ of the data sample is set to 1,500 for the sake of an acceptable overall computational load. 

\paragraph{Baseline approaches.}
CGNN is assessed comparatively to the following algorithms:\footnote{Using the  R program available at \url{https://github.com/ssamot/causality} for ANM, IGCI, PNL, GPI and LiNGAM.} i) ANM \citep{mooij2016distinguishing} with Gaussian process regression and HSIC independence test of the residual; ii) a pairwise version of LiNGAM \citep{shimizu2006linear} relying on Independent Component Analysis to identify the linear relations between variables; iii) IGCI \citep{daniusis2012inferring} with entropy estimator and Gaussian reference measure; iv) the post-nonlinear model (PNL) with HSIC test \citep{zhang2009identifiability}; v) GPI-MML \citep{stegle2010probabilistic}; where the Gaussian process regression with
higher marginal likelihood is selected as causal direction; vi) CDS, retaining the causal orientation with lowest variance of the conditional probability distribution; vii) Jarfo \citep{fonollosa2016conditional}, using a random forest causal classifier trained from the ChaLearn Cause-effect pairs on top of 150 features including ANM, IGCI, CDS, LiNGAM, regressions, HSIC tests.

\paragraph{Hyper-parameter selection.}
For a fair comparison, a leave-one-dataset-out procedure is used to select the key best hyper-parameter for each algorithm. To avoid computational explosion, a single hyper-parameter per algorithm is adjusted in this way; other hyper-parameters are set to their default value. 
For CGNN, $n_h$ ranges over $\{5,\ldots,100\}$. The leave-one-dataset-out procedure sets this hyper-parameter $n_h$ to values between 20 and 40 for the different datasets. 
For ANM and the bivariate fit, the kernel parameter for the Gaussian process regression ranges over $\{0.01, \ldots, 10\}$. For PNL, the threshold parameter alpha for the HSIC independence test ranges over $\{0.0005, \ldots,0.5\}$. For CDS, the $ffactor$ involved in the discretization step ranges over $[[1,10]]$. For GPI-MML, its many parameters are set to their default value as none of them appears to be more critical than others. 
Jarfo is trained from  4,000 variable pairs datasets with same generator used for  $\textbf{CE-Cha-train}$, \textbf{CE-Net-train}, \textbf{CE-Gauss-train} and \textbf{CE-Multi-train}; the causal classifier is trained on all datasets except the test set.

\paragraph{Empirical results.} Figure \ref{fig:score_pairwise} reports the area under the precision/recall curve for each benchmark and all algorithms. 

\begin{figure}[h!]
\centering
\includegraphics[width=\textwidth]{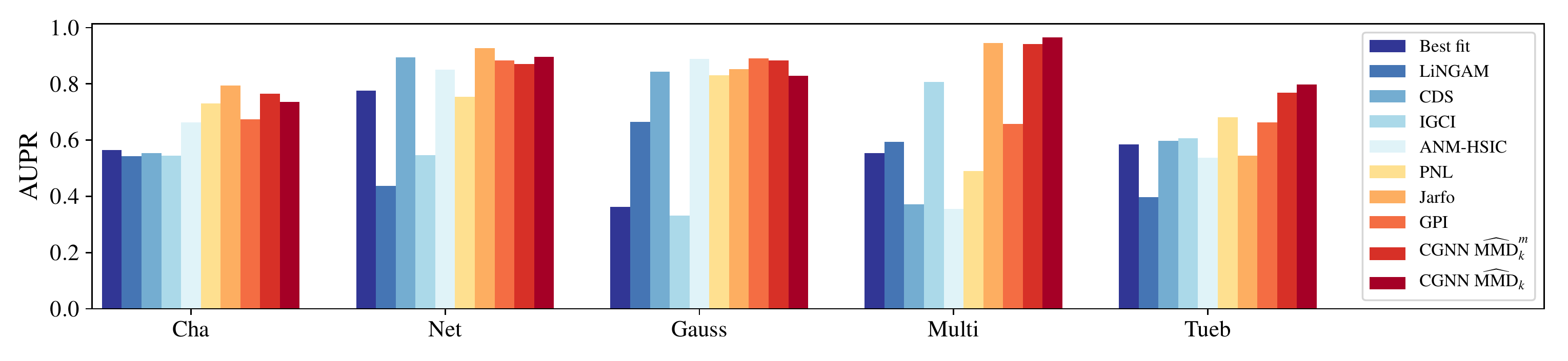}
\caption{Bivariate Causal Modelling: Area under the precision/recall curve for the five datasets. A full table of the scores is given in Appendix \ref{TableResultsPairwise}.}
\label{fig:score_pairwise}
\end{figure}

Methods based on simple regression like the bivariate fit and Lingam are outperformed as they underfit the data generative process. CDS and IGCI obtain very good results on few datasets. 
Typically, IGCI takes advantage of some specific features of the dataset, (e.g. the cause entropy being lower than the effect entropy  in \textbf{CE-Multi}), but remains at chance level otherwise.   ANM-HSIC yields good results when the additive assumption holds (e.g. on $ \textbf{CE-Gauss}$), but fails otherwise. PNL, less restrictive than ANM, yields overall good  results compared to the former methods.  Jarfo, a voting procedure, can in principle yield the best of the above methods and does obtain good results on artificial data. However, it does not perform well on the real dataset \textbf{CE-Tueb}; this counter-performance is blamed on the differences between all five benchmark distributions and the lack of generalization / transfer learning.

Lastly, generative methods {GPI} and \textbf{CGNN} ($\widehat{\text{MMD}}_k$) perform well on most datasets, including the real-world cause-effect pairs
{CE-T\"ub}, in counterpart for a higher computational cost (resp. 32 min on CPU for GPI and 24 min on GPU for CGNN). Using the linear MMD approximation \cite{dlp}, \textbf{CGNN} ($\widehat{\text{MMD}}^m_k$ as explained in appendix \ref{app:mmd})  reduces the cost by a factor of 5 without hindering the performance. 

Overall, CGNN demonstrates competitive performance on the cause-effect inference problem, where it is necessary to discover distributional asymmetries.

\subsection{Identifying v-structures }

A second series of experiments is conducted to investigate the method performances on variable triplets, where  multivariate effects and conditional variable independence must be taken into account to identify the Markov equivalence class of a DAG. 
The considered setting is that of variable triplets $(A, B, C)$ in the linear Gaussian case, where asymmetries between cause and effect cannot be exploited \citep{shimizu2006linear} and conditional independence tests are required. In particular strict pairwise methods can hardly be used due to un-identifiability (as each pair involves a linear mechanism with Gaussian input and additive Gaussian noise)  \citep{hoyer2009nonlinear}.

With no loss of generality, the graph skeleton involving variables $(A, B, C)$ is $A-B-C$. All three causal models (up to variable renaming) based on this skeleton are used to generate 500-sample datasets, where the random noise variables are independent centered Gaussian variables.

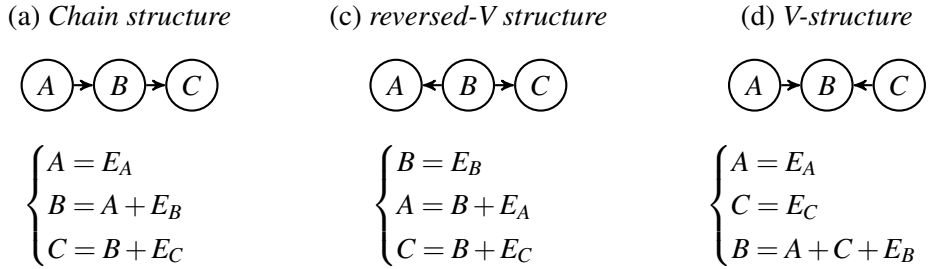
\begin{figure}[!h]
    \centering
    \begin{tikzpicture}[node distance=0.25cm, auto,]
        \node[punkt] (A) at (0,0)  {$A$};
        \node[punkt, right=of A] (B) {$B$};
        \node[punkt, right=of B] (C) {$C$};
        \node[above=of B] (cap) {(a) \textit{Chain structure}};
        \node[below=of B] (cap) {$\begin{cases} A = E_A\\
                                 B= A + E_B\\
                                 C= B + E_C\\
                                 \end{cases}$}; 
        \draw[pil] (A) -- (B);
        \draw[pil] (B) -- (C);
    \end{tikzpicture}
    \hspace{2pc}
    \begin{tikzpicture}[node distance=0.25cm, auto,]
        \node[punkt] (A) at (0,0)  {$A$};
        \node[punkt, right=of A] (B) {$B$};
        \node[punkt, right=of B] (C) {$C$};
        \node[above=of B] (cap) {(c) \textit{reversed-V structure}};
        \node[below=of B] (cap) {$\begin{cases} B= E_B\\
                 A = B + E_A\\
                 C = B + E_C\\
                 \end{cases}$}; 
        \draw[pil] (B) -- (A);
        \draw[pil] (B) -- (C);
    \end{tikzpicture}
    \hspace{2pc}
    \begin{tikzpicture}[node distance=0.25cm, auto,]
        \node[punkt] (A) at (0,0)  {$A$};
        \node[punkt, right=of A] (B) {$B$};
        \node[punkt, right=of B] (C) {$C$};
        \node[above=of B] (cap) {(d) \textit{V-structure}};
        \node[below=of B] (cap) {$\begin{cases} A = E_A\\
         C = E_C\\
         B = A + C + E_B\\
         \end{cases}$}; 
        \draw[pil] (A) -- (B);
        \draw[pil] (C) -- (B);
    \end{tikzpicture}
    \caption{Datasets generated from the three DAG configurations with skeleton $A-B-C$ }
    \label{DAG3variables}
\end{figure}

Given skeleton $A-B-C$, each dataset is used to model the 
possible four CGNN structures (Fig. \ref{DAG3variables}, with generative SEMs): 

\begin{itemize}
\item Chain structures $ABC$ ($A = f_1(E_1)$, $B = f_2(A,E_2)$ , $C = f_3(B,E_3)$ and $CBA$ ($C = f_1(E_1)$, $B = f_2(C,E_2)$ , $A = f_3(B,E_3)$)
\item V structure: $A = f_1(E_1)$, $C = f_2(E_2)$ , $B = f_3(A,C,E_3)$
\item reversed V structure: $B = f_1(E_1)$, $A = f_2(B,E_2)$ , $C = f_3(B,E_3)$
\end{itemize}

Let $C_{ABC}$, $C_{CBA}$, 
$C_{V-structure}$ and $C_{reversed V}$ denote the scores of the CGNN models respectively attached to these structures. The scores computed on all three  datasets are displayed in Table \ref{resultMarkov} (average over 64 runs; the standard deviation is indicated in parenthesis).

\begin{table}[!h]
 
  \label{table:acc_graph}
  \centering
  \begin{tabular}{l|cc|ccc}
    \toprule

     & \multicolumn{2}{c|}{non V-structures} & V structure \\
   Score & Chain str. & 
    Reversed-V str. & V-structure \\
    \midrule
    $C_{ABC}$ & \textit{0.122 (0.009)} & 
    \textit{0.124 (0.007)} &  0.172 (0.005) \\
    $C_{CBA}$ & \textit{0.121 (0.006)} & 
    \textit{0.127 (0.008)} & 0.171 (0.004)  \\
    $C_{reversed V}$ & \textit{0.122 (0.007)} & 
    \textit{0.125 (0.006)} &  0.172 (0.004) \\ \hline
    $C_{Vstructure}$ & 0.202 (0.004) & 
    0.180 (0.005)   & \textbf{0.127} (0.005) \\
  \end{tabular}
  \caption{CGNN-MMD scores for all models on all datasets. Smaller scores indicate a better match. CGNN correctly identifies V-structure vs. other structures.}
  \label{resultMarkov}
\end{table}

CGNN scores support a clear and significant discrimination between the V-structure and all other structures (noting that the other structures are Markov equivalent and thus can hardly be distinguished). 

This second series of experiments thus shows that CGNN can effectively detect, and take advantage of, conditional independence between variables. 

\subsection{Multivariate causal modeling under Causal Sufficiency Assumption \label{LearningCausalMarkov}}\label{sec:multi}

Let ${\mathbf X} = [X_1,...,X_d]$ be a set of continuous variables, satisfying the Causal Markov, faithfulness and causal sufficiency assumptions. To that end, all experiments provide all algorithms {\em the true graph skeleton}, so their ability to orient edges is compared in a fair way. This allows us to separate the task of orienting the graph from that of uncovering the skeleton.

\subsubsection{Results on artificial graphs with additive and multiplicative noises} 

We draw $500$ samples from $20$ training artificial causal graphs and $20$ test artificial causal graphs on 20 variables. Each variable has a number of parents uniformly drawn in $[[0,5]]$; $f_i$s are randomly generated polynomials involving additive/multiplicative noise.\footnote{The data generator is available at \url{https://github.com/GoudetOlivie/CGNN}. The datasets considered are available at \url{http://dx.doi.org/10.7910/DVN/UZMB69}.}

We compare CGNN to the PC algorithm \cite{spirtes1993search}, the score-based methods GES \cite{chickering2002optimal}, LiNGAM  \cite{shimizu2006linear}, causal additive model (CAM) \cite{buhlmann2014cam}
and with the pairwise methods ANM and Jarfo. For PC, we employ the better-performing, order-independent version of the PC algorithm proposed by \cite{colombo2014order}. PC needs the specification of a conditional independence test. We compare PC-Gaussian, which
employs a Gaussian conditional independence test on Fisher z-transformations,
and PC-HSIC, which uses the HSIC conditional independence test with the Gamma approximation \cite{gretton2005kernel}. PC and GES are implemented in the \textit{pcalg} package \cite{kalisch2012causal}.

All hyperparameters are set on the training graphs in order to maximize the Area Under the Precision/Recall score (AUPR). For the Gaussian conditional independence test and the HSIC conditional independence test, the significance level achieving best result on the training set are respectively $0.1$ and $0.05$ .  For GES, the penalization parameter is set to $3$ on the training set.  For CGNN, $n_h$ is set to 20 on the training set. For CAM, the cutoff value is set to $0.001$.

Figure \ref{fig:score_graph} (left) displays the performance of all algorithms obtained by starting from the exact skeleton on the test set of artificial graphs and  measured from the AUPR (Area Under the Precision/Recall curve), the Structural Hamming Distance (SHD, the number of edge modifications to transform one graph into another)
and the Structural Intervention Distance (SID, the number of equivalent two-variable interventions between two graphs)
\cite{peters2013structural}.

\begin{figure}[h!]
\centering
\includegraphics[width=\textwidth]{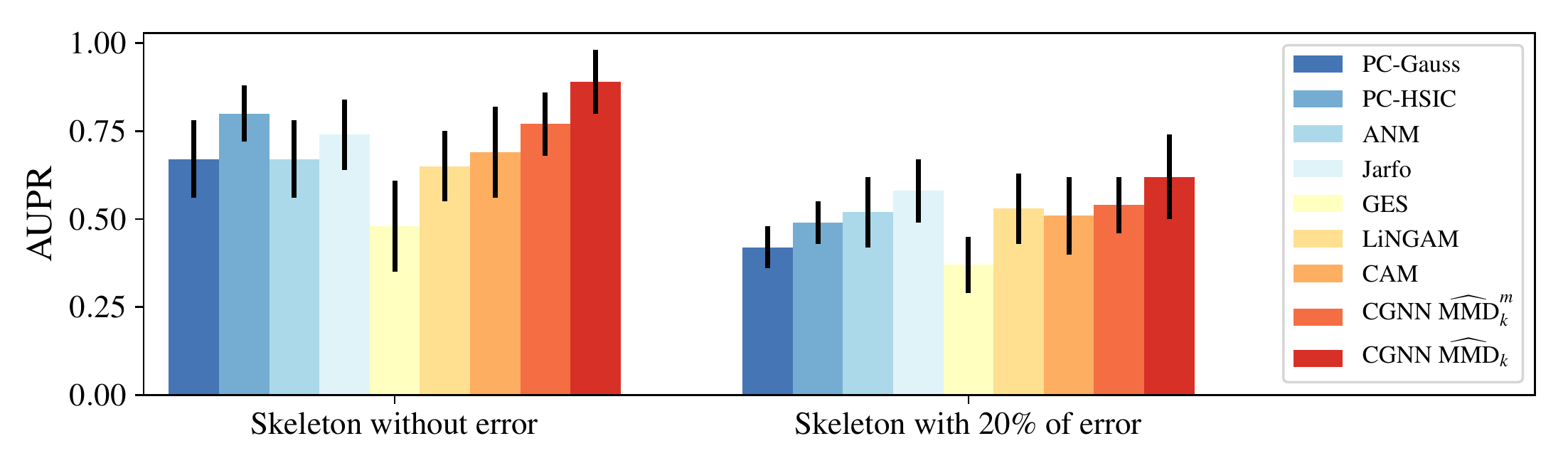}
\caption{Average (std. dev.) AUPR results for the orientation of 20 artificial graphs given true skeleton (left) and artificial graphs given skeleton with 20\% error (right). A full table of the scores, including the metrics Structural Hamming Distance (SHD) and Structural Intervention (SID) \citep{peters2013structural} is shown on Table 4 in section in section "Table of Scores for the Experiments on Graphs" in Appendix. }
\label{fig:score_graph}
\end{figure}

CGNN obtains significant better results with SHD and SID  compared to the other algorithms when the task is to discover the causal from the true skeleton. One resulting graph is shown on Figure \ref{fig:graph_multadd_cgnn}. There are 3 mistakes on this graph (red edges) (in lines with an SHD on average of 2.5).

Constraints based method PC with powerful HSIC conditional independence test is the second best performing method. It highlights the fact that when the skeleton is known, exploiting the structure of the graph leads to good results compared to pairwise methods using only local information. Notably, as seen on Figure \ref{fig:graph_multadd_cgnn}, this type of DAG has a lot of v-structure, as many nodes have more than one parent in the graph, but this is not always the case as shown in the next subsection.

Overall CGNN and PC-HSIC are the most computationally expensive methods, taking an average of 4 hours on GPU and 15 hours on CPU, respectively.

The robustness of the approach is validated by randomly perturbing 20\% edges in the graph skeletons provided to all algorithms (introducing about 10 false edges over 50 in each skeleton). As shown on Table 4 (right) in Appendix, and as could be expected, the scores of all algorithms are lower when spurious edges are introduced. Among the least robust methods are constraint-based methods; a tentative explanation is that they heavily rely on the graph structure to orient edges. By comparison pairwise methods are more robust because each edge is oriented separately. As CGNN leverages conditional independence but also distributional asymmetry like pairwise methods, it obtains overall more robust results when there are errors in the skeleton compared to PC-HSIC. However one can notice that a better SHD score is obtained by CAM, on the skeleton with 20\% error. This is due to the exclusive last edge pruning step of CAM, which removes spurious links in the skeleton.

CGNN obtains overall good results on these artificial datasets. It offers the advantage to  deliver a full generative model useful for simulation (while e.g., Jarfo and PC-HSIC only give the causality graph). 
 To explore the scalability of the approach, 5 artificial graphs with $100$ variables have been considered, achieving an AUPRC of $85.5 \pm 4$, in 30 hours of computation on four NVIDIA 1080Ti GPUs.  

\begin{figure}[h!]
\centering
\includegraphics[width=0.35\textwidth]{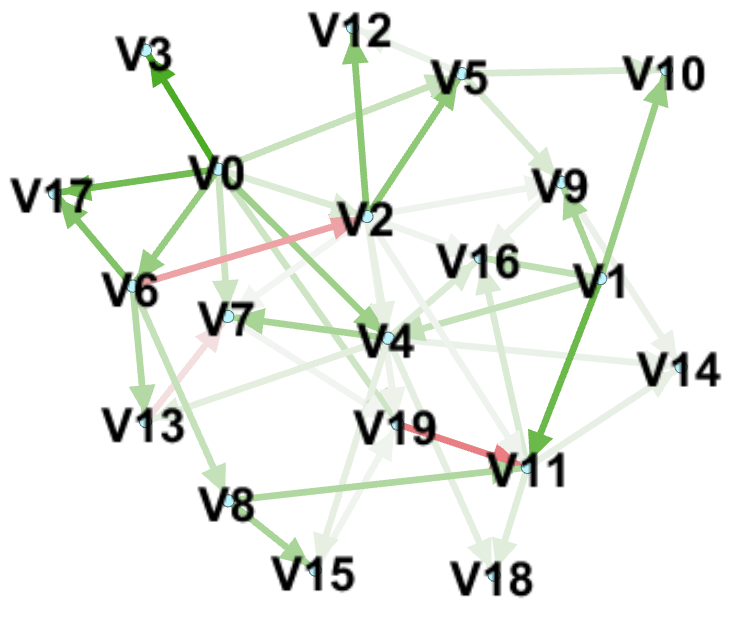}
\caption{Orientation by CGNN of artificial graph with 20 nodes. Green edges are good orientation and red arrows false orientation. 3 edges are red and 42 are green. The strength of the line refers to the confidence of the algorithm.}
\label{fig:graph_multadd_cgnn}
\end{figure}

\subsubsection{Result on biological data}

We now evaluate CGNN on biological networks. First we apply it on simulated gene expression data and then on real protein network.

\paragraph{Syntren artificial simulator}

 First we apply CGNN on SynTREN \citep{van2006syntren} from sub-networks of E. coli \citep{shen2002network}.
 SynTREN creates synthetic transcriptional regulatory networks and produces simulated gene expression data that approximates experimental data. Interaction kinetics are modeled by complex mechanisms based on Michaelis-Menten and Hill kinetics \citep{mendes2003artificial}.
 
With Syntren, we simulate 20 subnetworks of 20 nodes and 5 subnetworks with 50 nodes. For the sake of reproducibility, we use the random seeds  of $0,1 \ldots 19$ and $0,1 \ldots 4$ for each graph generation with respectively 20 nodes and 50 nodes.  The default Syntren parameters are used: a probability of 0.3 for complex 2-regulator interactions and a value of 0.1 for Biological noise, experimental noise and Noise on correlated inputs. For each graph, Syntren give us expression datasets with 500 samples. 

\begin{figure}[h!]
\centering
\includegraphics[width=\textwidth]{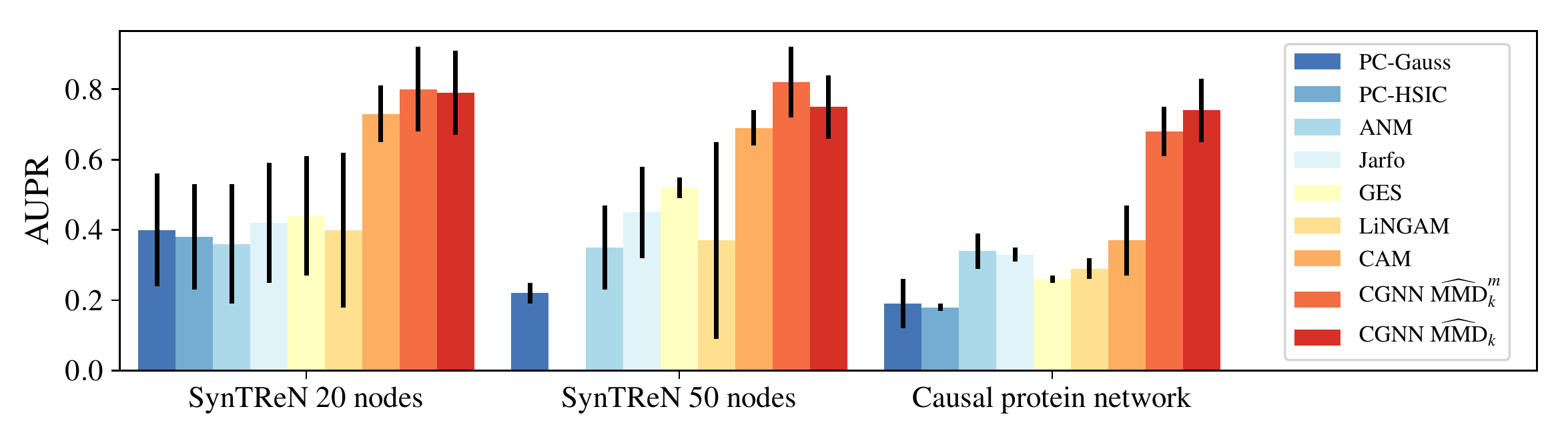}
\caption{Average (std. dev.) AUPR results for the orientation of 20 artificial graphs generated with the SynTReN simulator with 20 nodes (left), 50 nodes (middle), and real protein network given true skeleton (right). A full table of the scores, including the metrics Structural Hamming Distance (SHD) and Structural Intervention (SID) \citep{peters2013structural} is included in section "Table of Scores for the Experiments on Graphs" in Appendix.}
\label{fig:score_bio}
\end{figure}

Figure \ref{fig:score_bio} (left and middle) and Table \ref{table:syntren_network} in section "Table of Scores for the Experiments on Graphs" in Appendix display the performance of all algorithms obtained by starting from the exact skeleton of the causal graph with same hyper-parameters as in the previous subsection. As a note, we canceled the PC-HSIC algorithm after 50 hours of running time.

 Constraint based methods obtain low score on this type of graph dataset. It may be explained by the type of structure involved. Indeed as seen of Figure \ref{fig:graph_syntren_cgnn}, there are very few v-structures in this type of network, making impossible the orientation of an important number of edges by using only conditional independence tests. Overall the methods CAM and CGNN that take into account of both distributional asymmetry and multivariate interactions, get the best scores. CGNN obtain the best results in AUPR, SHD and SID for graph with 20 nodes and 50 nodes, showing that this method can be used to infer networks having complex distribution, complex causal mechanisms and interactions. The Figure \ref{fig:graph_syntren_cgnn} shows the resulting graph obtain with CGNN. Edges with good orientation are displayed in green and edge with false orientation in red.

\begin{figure}[h!]
\centering
\includegraphics[width=\textwidth]{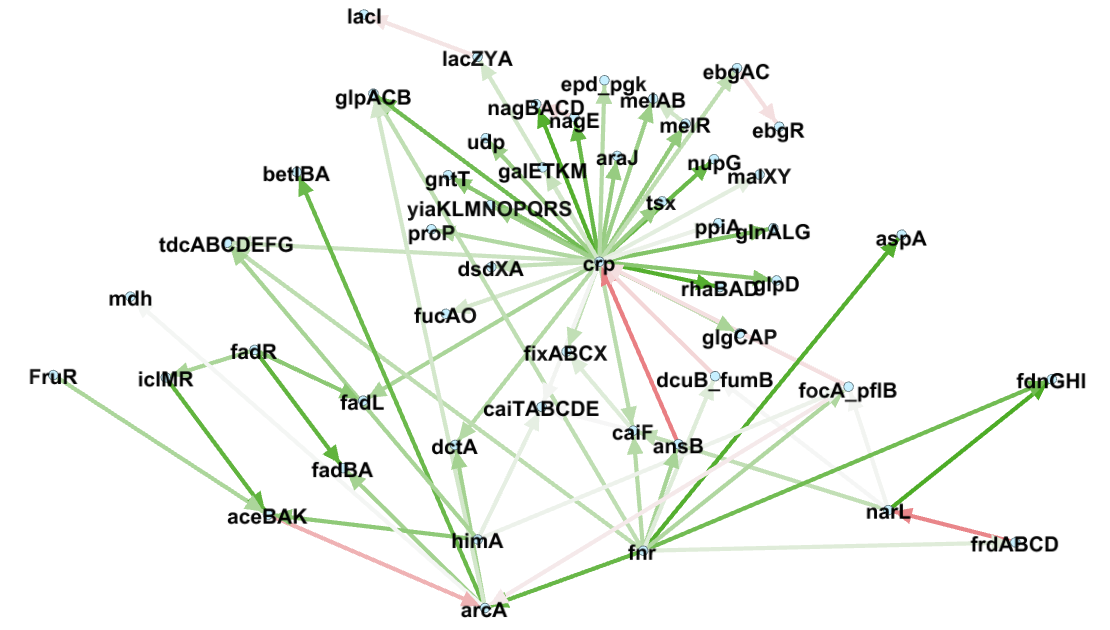}
\caption{Orientation by CGNN of E. coli subnetwork with 50 nodes and corresponding to  Syntren simulation with random seed 0. Green edges are good orientation and red arrows false orientation. The strength of the line refers to the confidence of the algorithm.}
\label{fig:graph_syntren_cgnn}
\end{figure}

\subsubsection{Results on biological real-world data}

CGNN is applied to the protein network problem \cite{sachs2005causal}, using the Anti-CD3/CD28 dataset with  853 observational data points corresponding to general perturbations without specific interventions. All algorithms were given the skeleton of the causal graph \cite[Fig. 2]{sachs2005causal} with same hyper-parameters as in the previous subsection. We run each algorithm on 10-fold cross-validation. Table 6 in Appendix reports average (std. dev.) results.

 Constraint-based algorithms obtain surprisingly low scores, because they cannot identify many V-structures in this graph. We confirm this by evaluating conditional independence tests for the adjacent tuples of nodes \textit{pip3}-\textit{akt}-\textit{pka}, \textit{pka}-\textit{pmek}-\textit{pkc}, \textit{pka}-\textit{raf}-\textit{pkc} and we do not find strong evidences for V-structure. Therefore methods based on distributional asymmetry between cause and effect seem better suited to this dataset. CGNN obtains good results compared to the other algorithms. Notably, Figure \ref{CGNN_cyto} shows that CGNN is able to recover the strong signal transduction pathway \textit{raf}$\rightarrow$\textit{mek}$\rightarrow$\textit{erk} reported in \cite{sachs2005causal} and corresponding to clear direct enzyme-substrate causal effect. CGNN gives important scores for edges with good orientation (green line), and low scores (thinnest edges) to the wrong edges (red line), suggesting that false causal discoveries may be controlled by using the confidence scores defined in Eq. \eqref{score_edges}.
  
\begin{figure}

   \begin{subfigure}{0.5\textwidth}
   \begin{center}
        \begin{tikzpicture}
          \def \h {11}
          \def \radius {1.8cm}
          \def \radiuscircle {0.2cm}
          \fontsize{6.0pt}{6.5pt}\selectfont
          \node[draw, circle,minimum size=\radiuscircle](praf) at ({360/\h * (1 - 1)}:\radius) {$raf$};
          \node[draw, circle,minimum size=\radiuscircle](pmek) at ({360/\h * (2 - 1)}:\radius) {$mek$};
          \node[draw, circle,minimum size=\radiuscircle](plcg) at ({360/\h * (3 - 1)}:\radius) {$plcg$};
          \node[draw, circle,minimum size=\radiuscircle](PIP2) at ({360/\h * (4 - 1)}:\radius) {$pip_2$};
          \node[draw, circle,minimum size=\radiuscircle](PIP3) at ({360/\h * (5 - 1)}:\radius) {$pip_3$};
          \node[draw, circle,minimum size=\radiuscircle](p44/42) at ({360/\h * (6 - 1)}:\radius) {$erk$};
          \node[draw, circle,minimum size=\radiuscircle](pakts473) at ({360/\h * (7 - 1)}:\radius) {$akt$};
          \node[draw, circle,minimum size=\radiuscircle](PKA) at ({360/\h * (8 - 1)}:\radius) {$pka$};
          \node[draw, circle,minimum size=\radiuscircle](PKC) at ({360/\h * (9 - 1)}:\radius) {$pkc$};
          \node[draw, circle,minimum size=\radiuscircle](P38) at ({360/\h * (10 - 1)}:\radius) {$p38$};
          \node[draw, circle,minimum size=\radiuscircle](pjnk) at ({360/\h * (11 - 1)}:\radius) {$jnk$};
          \draw[pil] (PKA) -- (praf);
          \draw[pil] (PKA) -- (P38);
          \draw[pil] (PKC) -- (praf);
          \draw[pil] (PKA) -- (pjnk);
          \draw[pil] (plcg) -- (PKC);
          \draw[pil] (PIP3) -- (plcg);
          \draw[pil] (PKC) -- (P38);
          \draw[pil] (plcg) -- (PIP2);
          \draw[pil] (PKC) -- (pjnk);
          \draw[pil] (PIP2) -- (PKC);
          \draw[pil] (praf) -- (pmek);
          \draw[pil] (PKC) -- (pmek);
          \draw[pil] (PKA) -- (pmek);
          \draw[pil] (PIP2) -- (PIP3);
          \draw[pil] (PIP3) -- (pakts473);
          \draw[pil] (pmek) -- (p44/42);
          \draw[pil] (PKA) -- (pakts473);
          \draw[pil] (PKA) -- (p44/42);
         \end{tikzpicture}
         \caption{Ground truth}
         \end{center}
\end{subfigure}
   \begin{subfigure}{0.5\textwidth}
    \begin{center}
        \begin{tikzpicture}
          \def \h {11}
          \def \radius {1.8cm}
          \def \radiuscircle {0.2cm}
          \fontsize{6.0pt}{6.5pt}\selectfont
          \node[draw, circle,minimum size=\radiuscircle](praf) at ({360/\h * (1 - 1)}:\radius) {$raf$};
          \node[draw, circle,minimum size=\radiuscircle](pmek) at ({360/\h * (2 - 1)}:\radius) {$mek$};
          \node[draw, circle,minimum size=\radiuscircle](plcg) at ({360/\h * (3 - 1)}:\radius) {$plcg$};
          \node[draw, circle,minimum size=\radiuscircle](PIP2) at ({360/\h * (4 - 1)}:\radius) {$pip_2$};
          \node[draw, circle,minimum size=\radiuscircle](PIP3) at ({360/\h * (5 - 1)}:\radius) {$pip_3$};
          \node[draw, circle,minimum size=\radiuscircle](p44/42) at ({360/\h * (6 - 1)}:\radius) {$erk$};
          \node[draw, circle,minimum size=\radiuscircle](pakts473) at ({360/\h * (7 - 1)}:\radius) {$akt$};
          \node[draw, circle,minimum size=\radiuscircle](PKA) at ({360/\h * (8 - 1)}:\radius) {$pka$};
          \node[draw, circle,minimum size=\radiuscircle](PKC) at ({360/\h * (9 - 1)}:\radius) {$pkc$};
          \node[draw, circle,minimum size=\radiuscircle](P38) at ({360/\h * (10 - 1)}:\radius) {$p38$};
          \node[draw, circle,minimum size=\radiuscircle](pjnk) at ({360/\h * (11 - 1)}:\radius) {$jnk$};
          \draw[green, pil] (PKA) -- (praf);
          \draw[green, pil] (PKA) -- (P38);
          \draw[green, pil] (PKC) -- (praf);
          \draw[green, pil] (PKA) -- (pjnk);
          \draw[red, pil] (PKC) -- (plcg);
          \draw[red, pil] (plcg) -- (PIP3);
          \draw[red, pil] (P38) -- (PKC);
          \draw[blue] (PIP2) -- (plcg);
          \draw[red, pil] (pjnk) -- (PKC);
          \draw[red, pil] (PKC) -- (PIP2);
          \draw[green, pil] (praf) -- (pmek);
          \draw[blue] (PKC) -- (pmek);
          \draw[blue] (PKA) -- (pmek);
          \draw[green, pil] (PIP2) -- (PIP3);
          \draw[blue] (PIP3) -- (pakts473);
          \draw[blue] (pmek) -- (p44/42);
          \draw[red, pil] (pakts473) -- (PKA);
          \draw[red, pil] (p44/42) -- (PKA);
         \end{tikzpicture}
         \caption{GES}
         \end{center}
\end{subfigure}
   \begin{subfigure}{0.5\textwidth}
    \begin{center}
        \begin{tikzpicture}
          \def \h {11}
          \def \radius {1.8cm}
          \def \radiuscircle {0.2cm}
          \fontsize{6.0pt}{6.5pt}\selectfont
          \node[draw, circle,minimum size=\radiuscircle](praf) at ({360/\h * (1 - 1)}:\radius) {$raf$};
          \node[draw, circle,minimum size=\radiuscircle](pmek) at ({360/\h * (2 - 1)}:\radius) {$mek$};
          \node[draw, circle,minimum size=\radiuscircle](plcg) at ({360/\h * (3 - 1)}:\radius) {$plcg$};
          \node[draw, circle,minimum size=\radiuscircle](PIP2) at ({360/\h * (4 - 1)}:\radius) {$pip_2$};
          \node[draw, circle,minimum size=\radiuscircle](PIP3) at ({360/\h * (5 - 1)}:\radius) {$pip_3$};
          \node[draw, circle,minimum size=\radiuscircle](p44/42) at ({360/\h * (6 - 1)}:\radius) {$erk$};
          \node[draw, circle,minimum size=\radiuscircle](pakts473) at ({360/\h * (7 - 1)}:\radius) {$akt$};
          \node[draw, circle,minimum size=\radiuscircle](PKA) at ({360/\h * (8 - 1)}:\radius) {$pka$};
          \node[draw, circle,minimum size=\radiuscircle](PKC) at ({360/\h * (9 - 1)}:\radius) {$pkc$};
          \node[draw, circle,minimum size=\radiuscircle](P38) at ({360/\h * (10 - 1)}:\radius) {$p38$};
          \node[draw, circle,minimum size=\radiuscircle](pjnk) at ({360/\h * (11 - 1)}:\radius) {$jnk$};
          \draw[red, pil] (praf) -- (PKA);
          \draw[green, pil] (PKA) -- (P38);
          \draw[red, pil] (praf) -- (PKC);
          \draw[red, pil] (pjnk) -- (PKA);
          \draw[green, pil] (plcg) -- (PKC);
          \draw[green, pil] (PIP3) -- (plcg);
          \draw[green, pil] (PKC) -- (P38);
          \draw[green] (plcg) -- (PIP2);
          \draw[red, pil] (pjnk) -- (PKC);
          \draw[red, pil] (PKC) -- (PIP2);
          \draw[green, pil] (praf) -- (pmek);
          \draw[red, pil] (pmek) -- (PKC);
          \draw[red, pil] (pmek) -- (PKA);
          \draw[green, pil] (PIP2) -- (PIP3);
          \draw[red, pil] (pakts473) -- (PIP3);
          \draw[green, pil] (pmek) -- (p44/42);
          \draw[green, pil] (PKA) -- (pakts473);
          \draw[green, pil] (PKA) -- (p44/42);
         \end{tikzpicture}
         \caption{CAM}
         \end{center}
\end{subfigure}
    \begin{subfigure}{0.5\textwidth}
    \begin{center}
        \begin{tikzpicture}
          \def \h {11}
          \def \radius {1.8cm}
          \def \radiuscircle {0.2cm}
          \fontsize{6.0pt}{6.5pt}\selectfont
          \node[draw, circle,minimum size=\radiuscircle](praf) at ({360/\h * (1 - 1)}:\radius) {$raf$};
          \node[draw, circle,minimum size=\radiuscircle](pmek) at ({360/\h * (2 - 1)}:\radius) {$mek$};
          \node[draw, circle,minimum size=\radiuscircle](plcg) at ({360/\h * (3 - 1)}:\radius) {$plcg$};
          \node[draw, circle,minimum size=\radiuscircle](PIP2) at ({360/\h * (4 - 1)}:\radius) {$pip_2$};
          \node[draw, circle,minimum size=\radiuscircle](PIP3) at ({360/\h * (5 - 1)}:\radius) {$pip_3$};
          \node[draw, circle,minimum size=\radiuscircle](p44/42) at ({360/\h * (6 - 1)}:\radius) {$erk$};
          \node[draw, circle,minimum size=\radiuscircle](pakts473) at ({360/\h * (7 - 1)}:\radius) {$akt$};
          \node[draw, circle,minimum size=\radiuscircle](PKA) at ({360/\h * (8 - 1)}:\radius) {$pka$};
          \node[draw, circle,minimum size=\radiuscircle](PKC) at ({360/\h * (9 - 1)}:\radius) {$pkc$};
          \node[draw, circle,minimum size=\radiuscircle](P38) at ({360/\h * (10 - 1)}:\radius) {$p38$};
          \node[draw, circle,minimum size=\radiuscircle](pjnk) at ({360/\h * (11 - 1)}:\radius) {$jnk$};
          \draw[red, pil, line width = 0.30pt] (praf) -- (PKA);
          \draw[green, pil, line width = 0.32pt] (PKA) -- (P38);
          \draw[green, pil, line width = 0.33pt] (PKC) -- (praf);
          \draw[red, pil, line width = 0.35pt] (pjnk) -- (PKA);
          \draw[red, pil, line width = 0.36pt] (PKC) -- (plcg);
          \draw[green, pil, line width = 0.36pt] (PIP3) -- (plcg);
          \draw[green, pil, line width = 0.38pt] (PKC) -- (P38);
          \draw[red,pil, line width = 0.40pt] (PIP2) -- (plcg);
          \draw[green, pil, line width = 0.43pt] (PKC) -- (pjnk);
          \draw[green, pil, line width = 0.44pt] (PIP2) -- (PKC);
          \draw[green, pil, line width = 0.48pt] (praf) -- (pmek);
          \draw[green, pil, line width = 0.52pt] (PKC) -- (pmek);
          \draw[green, pil, line width = 0.62pt] (PKA) -- (pmek);
          \draw[green, pil, line width = 0.89pt] (PIP2) -- (PIP3);
          \draw[green, pil, line width = 1.38pt] (PIP3) -- (pakts473);
          \draw[green, pil, line width = 1.47pt] (pmek) -- (p44/42);
          \draw[green, pil, line width = 1.55pt] (PKA) -- (pakts473);
          \draw[green, pil, line width = 1.60pt] (PKA) -- (p44/42);
         \end{tikzpicture}
         \caption{CGNN}
         \end{center}
\end{subfigure}

  \caption{Causal protein network } 
  %Solid/dash arrows correspond to correct/incorrect edge orientations. The width of an arrow refers to the confidence of CGNN in the prediction~\eqref{eq:conf}.}
\label{CGNN_cyto}
\end{figure}
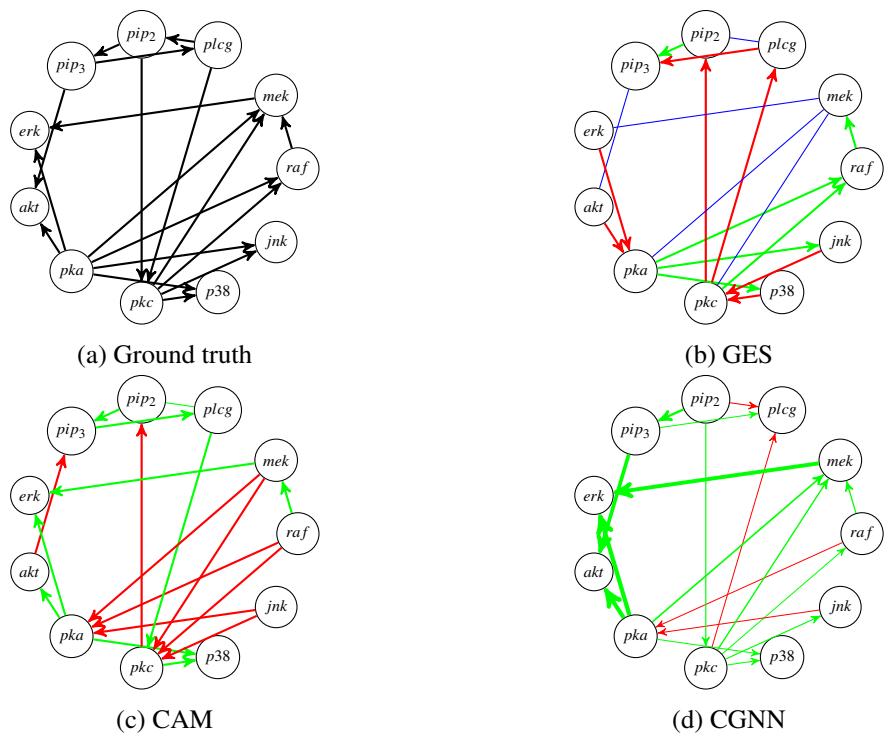

\section{Towards predicting confounding effects
\label{confounder}}

In this subsection we propose an extension of our algorithm relaxing the causal sufficiency assumption. We are still assuming the Causal Markov and faithfulness assumptions, thus three options  have to be considered for each edge $(X_i,X_j)$ of the skeleton representing a direct dependency: $X_i \rightarrow X_j$, $X_j \rightarrow X_i$ and $X_i\leftrightarrow X_j$ (both variables are consequences of common hidden variables).

\subsection{Principle}

Hidden common causes are modeled through correlated random noise. Formally, an additional noise variable $E_{i,j}$ is associated to each $X_i-X_j$ edge in the graph skeleton.

We use such new models with correlated noise to study the robustness of our graph reconstruction algorithm to increasing violations of causal sufficiency, by occluding variables from our datasets. For example, consider the FCM on $\mathbf{X} = [X_1, \ldots, X_5]$ that was presented on Figure \ref{figure:causalnetwork}. If variable $X_1$ would be missing from data, the correlated noise $E_{2,3}$ would be responsible for the existence of a double headed arrow connection  $X_2\leftrightarrow X_3$ in the skeleton of our new type of model. The resulting FCM is shown in Figure  \ref{figure:FCM_confounders}. Notice that direct causal effects such as $X_3 \rightarrow X_5$ or $X_4 \rightarrow X_5$ may persist, even in presence of possible confounding effects.

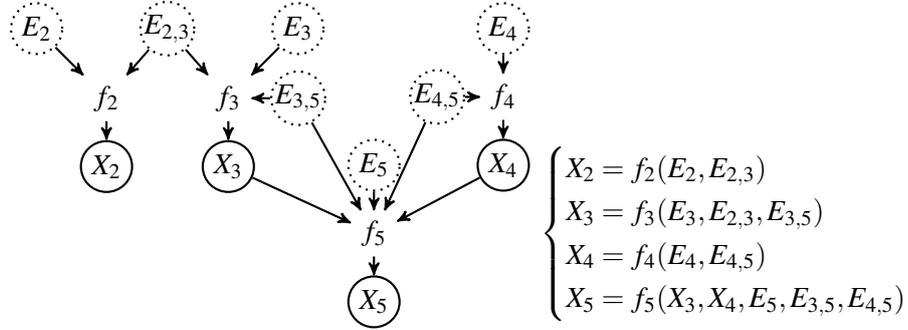
\begin{figure}[h]
    \centering
    \begin{tikzpicture}[node distance=0.25cm, auto,]

        \node[punkt, dotted] (e23) at (0,0) {$E_{2,3}$};
        \node[punkt, dotted, right=1cm of e23] (e3) {$E_3$};
        \node[punkt, dotted, left=1cm of e23] (e2) {$E_2$};
        \node[punkt, dotted, right=2cm of e3] (e4) {$E_4$};
        \node[below=of e4] (f4) {$f_4$};
        \node[punkt, below=of f4] (x4) {$X_4$};
        \node[punkt, dotted, left=1cm of x4] (e5) {$E_5$};
        \node[punkt, dotted, left=0.2cm of f4] (e45) {$E_{4,5}$};
        \node[below=of e23, shift={(-0.8cm,0)}] (f2) {$f_2$};
        \node[right=1cm of f2] (f3) {$f_3$};
        \node[punkt, dotted, right=0.25cm of f3] (e35) {$E_{3,5}$};
        \node[punkt, below=of f3] (x3) {$X_3$};
        \node[below=of e5] (f5) {$f_5$};
        \node[punkt, below=of f5] (x5) {$X_5$};
        \node[punkt, below=of f2] (x2) {$X_2$};
        
        \draw[pil] (e2) -- (f2);
        \draw[pil] (e3) -- (f3);
        \draw[pil] (e4) -- (f4);
        \draw[pil] (e5) -- (f5);
        
        \draw[pil] (e23) -- (f2);
        \draw[pil] (e23) -- (f3);
        \draw[pil] (e35) -- (f3);
        \draw[pil] (e45) -- (f4);
        \draw[pil] (e35) -- (f5);
        \draw[pil] (e45) -- (f5);
        
        \draw[pil] (f2) -- (x2);
        \draw[pil] (f3) -- (x3);
        \draw[pil] (f4) -- (x4);
        \draw[pil] (f5) -- (x5);
        
        \draw[pil] (x3) -- (f5);
        \draw[pil] (x4) -- (f5);
        
        \node[right=1.75cm of f5] {
                $\begin{cases}   
                X_2 = f_2(E_2, E_{2,3}) \\
                X_3 = f_3(E_3, E_{2,3},E_{3,5}) \\
                X_4 = f_4(E_4, E_{4,5}) \\
                X_5 = f_5(X_3,X_4,E_5,E_{3,5}, E_{4,5}) \\
                \end{cases}$};
    \end{tikzpicture}
    \caption{The Functional Causal Model (FCM) on $\mathbf{X} = [X_1, \ldots, X_5]$ with the missing variable $X_1$}
    \label{figure:FCM_confounders}
\end{figure}

Formally, given a graph skeleton $\mathcal{S}$, the FCM with correlated noise variables is defined as:

\begin{equation}
    \label{eq:FCM_confounders}
    X_i \leftarrow f_i(X_{\Pa{i}}, E_i, E_{\Ne{i}}),
\end{equation}

where $\Ne{i}$ is the set of indices of all the variables adjacent to variable $X_i$ in the skeleton $\mathcal{S}$. 

One can notice that this model corresponds to the most general formulation of the FCM with potential confounders for each pair of variables in a given skeleton (representing direct dependencies) where each random variable $E_{i,j}$ summarizes all the unknown influences of (possibly multiple) hidden variables  influencing the two variables $X_i$ and $X_j$.

Here we make a clear distinction between the directed acyclic graph denoted $\mathcal{G}$ and the skeleton $\mathcal{S}$. Indeed, due to the presence of confounding correlated noise, any variable in $\mathcal{G}$ can be removed without altering $\mathcal{S}$. We use the same generative neural network to model the new FCM presented in Equation \ref{eq:FCM_confounders}. The difference is the new noise variables having effect on pairs of variables  simultaneously. However, since the correlated noise FCM is still defined over a directed acyclic graph $\mathcal{G}$, the functions $\hat{f}_1, \ldots, \hat{f}_d$ of the model, which we implement as neural networks, the model can still be learned end-to-end using backpropagation based on the CGNN loss.   

All edges are evaluated with these correlated noises, the goal being to see whether introducing a correlated noise explains the dependence between the two variables $X_i$ and $X_j$. 

As mentioned before, the score used by CGNN is:
\begin{equation}
  S(\mathcal{C}_{\hat{\cal G},\hat{f}}, \mathcal{D}) =
   \widehat{\text{MMD}}_k(\mathcal{D}, \widehat{\mathcal{D}}) + \lambda 
    |\widehat{\mathcal{G}}|
  \label{loss_cgnn}
\end{equation}

where $|\widehat{\mathcal{G}}|$ is the total number of edges in the DAG. 
In the graph search, for any given edge, we compare the score associated to the graph considered with and without this edge. If the contribution of this edge is negligible compared to a given threshold lambda, the edge is considered as spurious.

The non-parametric optimization of the $\hat{\mathcal{G}}$ structure is also achieved using a Hill-Climbing algorithm; in each step an edge of $\mathcal{S}$ is randomly drawn and modified in $\hat{\mathcal{G}}$ using one out of the possible three operators: reverse the edge, add an edge and remove an edge. Other algorithmic details are as in Section \ref{sec:nonparam}: the greedy search optimizes the penalized loss function (Eq. \ref{loss_cgnn}).
For CGNN, we set the hyperparameter $\lambda = 5 \times 10^{-5}$ fitted on the training graph dataset. 

The algorithm stops when no improvement is obtained. Each causal  edge $X_i \rightarrow X_j$ in $\mathcal{G}$ is associated with a score, measuring its contribution to the global score:

\begin{equation}
S_{X_i \rightarrow X_j} = S(\mathcal{C}_{\hat{\mathcal{G}} - \{X_i \rightarrow X_j\},\hat{f}}, \mathcal{D}) - S(\mathcal{C}_{\hat{\cal G},\hat{f}}, \mathcal{D}) 
\end{equation}

Missing edges are associated with a score 0.

\subsection{Experimental validation}
\paragraph{Benchmarks.} The empirical validation of this extension of CGNN is conducted on same benchmarks as in Section \ref{sec:multi} ($\mathcal{G}_i$, $i \in [[2,5]]$), where 3 variables (causes for at least two other variables in the graph) have been randomly removed.\footnote{The datasets considered are available at \url{http://dx.doi.org/10.7910/DVN/UZMB69}} The true graph skeleton is augmented with edges $X-Y$ for all $X,~Y$ that are consequences of a same removed cause. All algorithms are provided with the same graph skeleton for a fair comparison.  The task is to both orient the edges in the skeleton, and remove the spurious direct dependencies created by latent causal variables.

\paragraph{Baselines.} CGNN is compared with state of art methods: i) constraint-based RFCI \citep{colombo2012learning}, extending the PC method equipped with Gaussian conditional independence test (RFCI-Gaussian) and the gamma HSIC conditional independence test \citep{gretton2005kernel} (RFCI-HSIC).  We use the order-independent constraint-based version proposed by \cite{colombo2014order} and the majority rules for the orientation of the edges. For CGNN, we set the hyperparameter $\lambda = 5 \times 10^{-5}$ fitted on the training graph dataset.  Jarfo is trained on the 16,200 pairs of the cause-effect pair challenge \citep{guyon2013cepc,IGuyon2014} to detect for each pair of variable if $X_i \rightarrow Y_i$, $Y_i \rightarrow X_i$ or $X_i \leftrightarrow Y_i$. 

\begin{table}[h!]
\footnotesize
  \caption{AUPR, SHD and SID on causal discovery with confounders. $^*$ denotes significance at $p=10^{-2}$.}
  \centering
  \begin{tabular}{lcccr}
    \toprule
    method & AUPR &  SHD & SID \\
    \midrule
    RFCI-Gaussian & 0.22 (0.08) & 21.9 (7.5) & 174.9 (58.2) \\
    RFCI-HSIC & 0.41 (0.09) & 17.1 (6.2) & 124.6 (52.3) \\
    Jarfo & 0.54 (0.21) & 20.1 (14.8) & 98.2 (49.6) \\
    \midrule
    \textbf{CGNN} ($\widehat{\text{MMD}}_k$) & \underline{0.71}* (0.13) & \underline{11.7}* (5.5) & \underline{53.55}* (48.1)\\
    \bottomrule
  \end{tabular}
  \label{table:acc_graph_confounder}
\end{table} 

\paragraph{Results.} comparative performances are shown in Table \ref{table:acc_graph_confounder}, reporting the area under the precision/recall curve. Overall, these results confirm the robustness of the CGNN proposed approach w.r.t. confounders, and its competitiveness w.r.t. RFCI with powerful conditional independence test (RFCI-HSIC). Interestingly, the effective causal relations between the visible variables are associated with a high score; spurious links due to hidden latent variables get a low score or are removed.

\section{Discussion and Perspectives \label{discussion}}

This paper introduces CGNN, a new framework and methodology for functional causal model learning, leveraging the power and non-parametric flexibility of Generative Neural Networks. 

CGNN seamlessly accommodates causal modeling in presence of confounders, and its extensive empirical validation demonstrates its merits compared to the state of the art on medium-size problems. We believe that our approach opens new avenues of research, both from the point of view of leveraging the power of deep learning in causal discovery and from the point of view of building deep networks with better structure \textit{interpretability}. Once the model is learned, the CGNNs present the advantage to be fully parametrized and may be used to simulate interventions on one or more variables of the model and evaluate their impact on a set of target variables. This usage is relevant in a wide variety of domains, typically among medical and sociological domains.

The main limitation of CGNN is its computational cost, due to the quadratic complexity of the CGNN learning criterion w.r.t. the data size, based on the Maximum Mean Discrepancy between the generated and the observed data. A linear approximation thereof has been proposed, with comparable empirical performances.

The main perspective for further research aims at a better scalability of the approach from medium to large problems. On the one hand, the computational scalability could be tackled by using embedded framework for the structure optimization (inspired by lasso methods). Another perspective regards the extension of the approach to categorical variables.

% BibTeX users please use one of
\bibliographystyle{apalike}
\bibliography{biblio.bib}   % name your BibTeX data base

\clearpage
\newpage

\section{Appendix}

\subsection{The Maximum Mean Discrepancy (MMD) statistic}
\label{app:mmd}

The Maximum Mean Discrepancy (MMD) statistic \citep{gretton2007kernel}
measures the distance between two probability distributions $P$ and $\hat{P}$,
defined over $\mathbb{R}^d$, as the real-valued quantity
\begin{equation*}
    \text{MMD}_k(P, \hat{P}) = \left\| \mu_k(P) - \mu_k(\hat{P})
    \right\|_{\mathcal{H}_k}.
\end{equation*}
Here, $\mu_k = \int k(x, \cdot) \mathrm{d} P(x)$ is the \emph{kernel mean
embedding} of the distribution $P$, according to the real-valued symmetric
kernel function $k(x, x') = \langle k(x, \cdot), k(x', \cdot)
\rangle_{\mathcal{H}_k}$ with associated reproducing kernel Hilbert space
$\mathcal{H}_k$.  Therefore, $\mu_k$ summarizes $P$ as the expected value of
the features computed by $k$ over samples drawn from $P$. 

In practical applications, we do not have access to the distributions $P$ and
$\hat{P}$, but to their respective sets of samples $\mathcal{D}$ and
$\hat{\mathcal{D}}$, defined in Section~\ref{ScoringMetric}. In this case, we
approximate the kernel mean embedding $\mu_k(P)$ by the \emph{empirical kernel
mean embedding} $\mu_k(\mathcal{D}) = \frac{1}{|\mathcal{D}|} \sum_{x \in
\mathcal{D}} k(x, \cdot)$, and respectively for $\hat{P}$. Then, the empirical
MMD statistic is 
\begin{align*}
\widehat{\text{MMD}}_k(\mathcal{D}, \hat{\mathcal{D}}) &= \left\|
\mu_k(\mathcal{D}) - \mu_k(\hat{\mathcal{D}}) \right\|_{\mathcal{H}_k}
\hspace{-0.25cm}
=\frac{1}{n^2} \sum_{i, j}^{n} k(x_i, x_j) +
\frac{1}{n^2} \sum_{i, j}^{n} k(\hat{x}_i, \hat{x}_j)
- \frac{2}{n^2} \sum_{i,j}^n k(x_i, \hat{x}_j).
\end{align*}

Importantly, the empirical MMD tends to zero as $n \to \infty$ if and only if
$P = \hat{P}$, as long as $k$ is a characteristic kernel
\citep{gretton2007kernel}. This property makes the MMD an excellent choice to
model how close the observational distribution $P$ is to the estimated
observational distribution $\hat{P}$. Throughout this paper, we will employ
a particular characteristic kernel: the Gaussian kernel $k(x, x') = \exp(-
\gamma \| x - x' \|_2^2)$, where $\gamma > 0$ is a hyperparameter controlling
the smoothness of the features. 

In terms of computation, the evaluation of $\text{MMD}_k(\mathcal{D},
\hat{\mathcal{D}})$ takes $O(n^2)$ time, which is prohibitive for large $n$.
When using a shift-invariant kernel, such as the Gaussian kernel, one can
invoke Bochner's theorem \citep{edwards1964fourier} to obtain a linear-time
approximation to the empirical MMD \citep{lopez2015towards}, with form 
\begin{equation*}
\widehat{\text{MMD}}^m_k(\mathcal{D}, \hat{\mathcal{D}}) = 
\left\|
  \hat{\mu}_k(\mathcal{D}) - \hat{\mu}_k(\hat{\mathcal{D}})
\right\|_{\mathbb{R}^m}
\label{eq:approx_mmd}
\end{equation*}
and $O(mn)$ evaluation time. Here, the \emph{approximate empirical kernel mean
embedding} has form
\begin{equation*}
  \hat{\mu}_k(\mathcal{D}) = \sqrt{\frac{2}{m}} \frac{1}{|\mathcal{D}|} \sum_{x
  \in \mathcal{D}} \left[ \cos(\langle w_1, x \rangle + b_1), \ldots,
  \cos(\langle w_m, x \rangle + b_m) \right],
\end{equation*}
where $w_i$ is drawn from the normalized Fourier transform of $k$, and $b_i
\sim U[0, 2\pi]$, for $i=1, \ldots, m$.  In our experiments, we compare the
performance and computation times of both $\widehat{\text{MMD}}_k$ and
$\widehat{\text{MMD}}_k^m$. 

\subsection{Proofs \label{app:proofs}}

\setcounter{prop}{0}

\begin{prop}
 Let $\textbf{X} = [X_1, \ldots, X_d]$ denote a set of continuous random variables with joint distribution $P$, and further assume that the joint density function $h$ of $P$ is continuous and strictly positive on a compact and convex subset of $\mathbb{R}^{d}$, and zero elsewhere. Letting $\cal G$ be a DAG such that $P$ can be factorized along $\cal G$, 
 $$ P(X) = \prod_i P(X_i | X_{\Pa{i}})$$
 there exists $f = (f_1, \ldots, f_d)$ with $f_i$ a continuous function with compact support in $\mathbb{R}^{|\Pa{i}|}\times [0,1]$ such that $P(X)$ equals the generative model defined from FCM $({\cal G}, f, {\cal E})$, with ${\cal E} = \mathcal{U}[0,1]$ the uniform distribution on $[0,1]$.
\end{prop}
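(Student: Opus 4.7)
The plan is to build each $f_i$ by the classical inverse-CDF (conditional quantile) trick, processing the variables in a topological order of $\mathcal{G}$, and then to argue that strict positivity and continuity of the joint density $h$ on a compact convex set forces the resulting quantile functions to be jointly continuous with compact support. This is essentially Rosenblatt's transformation adapted to the causal factorization.

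First I would fix a topological order on $\mathcal{G}$ and, for each $i$, consider the conditional density $h_i(x_i \mid x_{\mathrm{Pa}(i)}) = P(X_i \mid X_{\mathrm{Pa}(i)} = x_{\mathrm{Pa}(i)})$ on the projection $K_i$ of the support of $h$ onto the coordinates $(X_i, X_{\mathrm{Pa}(i)})$. Because $h$ is continuous and strictly positive on a compact convex set $K \subset \mathbb{R}^d$ and zero elsewhere, the marginal density of $X_{\mathrm{Pa}(i)}$ is continuous and strictly positive on its (compact, convex) projection, and so $h_i$ is continuous in $(x_i, x_{\mathrm{Pa}(i)})$ and strictly positive on the slice $\{x_i : (x_i, x_{\mathrm{Pa}(i)}) \in K\}$, which by convexity of $K$ is a closed interval $[a_i(x_{\mathrm{Pa}(i)}), b_i(x_{\mathrm{Pa}(i)})]$ with continuously varying endpoints. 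Define $f_i(x_{\mathrm{Pa}(i)}, e) = F_i^{-1}(e \mid x_{\mathrm{Pa}(i)})$, where $F_i(\cdot \mid x_{\mathrm{Pa}(i)})$ is the conditional CDF of $X_i$ given the parents.

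The second step is the distributional identity: by the inverse-transform sampling lemma, if $E_i \sim \mathcal{U}[0,1]$ and is independent of everything that came before, then $f_i(x_{\mathrm{Pa}(i)}, E_i)$ has density $h_i(\cdot \mid x_{\mathrm{Pa}(i)})$ for every fixed $x_{\mathrm{Pa}(i)}$. An induction along the topological order, using the fact that the $E_i$ are mutually independent and each $E_i$ is independent of $\hat{X}_{\mathrm{Pa}(i)}$ (which depends only on $E_j$ for $j$ earlier in the order), gives that the joint density of $(\hat{X}_1, \ldots, \hat{X}_d)$ is $\prod_i h_i(x_i \mid x_{\mathrm{Pa}(i)}) = h(x)$, as required by the factorization along $\mathcal{G}$.

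The main obstacle is verifying that each $f_i$ is genuinely a continuous function on the compact set $\{(x_{\mathrm{Pa}(i)}, e) : x_{\mathrm{Pa}(i)} \in \pi_i(K),\ e \in [0,1]\}$. Continuity in $e$ alone is immediate since $h_i(\cdot \mid x_{\mathrm{Pa}(i)}) > 0$ makes $F_i(\cdot \mid x_{\mathrm{Pa}(i)})$ strictly increasing, hence a homeomorphism of $[a_i, b_i]$ onto $[0,1]$. Joint continuity in $(x_{\mathrm{Pa}(i)}, e)$ is the delicate point: I would prove it by showing that $F_i(x_i \mid x_{\mathrm{Pa}(i)})$ is jointly continuous (dominated convergence applied to the ratio of joint and marginal densities, both continuous, the marginal being bounded below by a positive constant on the compact projection), and then using the standard fact that the inverse of a jointly continuous, strictly monotone family of functions on a compact interval is itself jointly continuous. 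Compactness of the domain, together with continuity, gives compact support of $f_i$, and extending $f_i$ continuously and compactly supported to all of $\mathbb{R}^{|\mathrm{Pa}(i)|} \times [0,1]$ via Tietze's extension theorem completes the construction.
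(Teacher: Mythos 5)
Your proposal follows essentially the same route as the paper's proof: an inductive construction along the topological order of $\mathcal{G}$ using conditional quantile functions $f_i = F_i^{-1}(\cdot \mid x_{\Pa{i}})$, with the key technical step being joint continuity of the conditional CDF (and hence of its inverse) derived from continuity and strict positivity of $h$ on the compact support. Your treatment is if anything slightly more explicit than the paper's on the distributional identity (inverse-transform sampling plus independence of the $E_i$) and on the geometry of the support slices, so there is nothing to correct.
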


\begin{proof}
By induction on the topological order of $\cal G$. Let $X_i$ be such that $|\Pa{i}|=0$ and consider the cumulative distribution $F_i(x_i)$ defined over the domain of $X_i$ ($F_i(x_i) = Pr(X_i < x_i)$). $F_i$ is strictly monotonous as the joint density function is strictly positive therefore its inverse, the quantile function $Q_i: [0,1] \mapsto dom(X_i)$ is defined and continuous. By construction, $Q_i(e_i) =F_i^{-1}(e_i)$ and setting $Q_i = f_i$ yields the result.\\
Assume $f_i$ be defined for all variables $X_i$ with topological order less than $m$.  Let $X_j$ with topological order $m$ and $Z$ the vector of its parent variables. For any noise vector $e = (e_i, i \in \Pa{j})$ let $z = (x_i,  i \in \Pa{j})$ be the value vector of variables in $Z$ defined from $e$. The conditional cumulative distribution $F_j(x_j | Z=z) = Pr(X_j < x_j | Z=z)$ is strictly continuous and monotonous wrt $x_j$, and can be inverted using the same argument as above. Then we can define $f_j(z,e_j) = F_j^{-1}(z,e_j)$.

Let $K_j = dom(X_j)$ and $K_{\Pa{j}} = dom(Z)$. We will show now that the function $f_j$ is continuous on $K_{\Pa{j}} \times [0,1]$, a compact subset of $\mathbb{R}^{|\Pa{j}|}\times [0,1]$.

By assumption, there exist $a_j \in \mathcal{R}$ such that, for $(x_j, z) \in K_j \times K_{\Pa{j}}$, $F(x_j|z) = \int_{a_j}^{x_j} \frac{h_j(u,z)}{h_j(z)} \mathrm{d}u$, with $h_j$ a continuous and strictly positive density function. For $(a,b) \in K_j \times K_{\Pa{j}}$, as the function $(u, z) \rightarrow \frac{h_j(u,z)}{h_j(z)}$ is continuous on the compact $K_j \times K_{\Pa{j}}$, $\lim\limits_{\substack{x_j \rightarrow a}} F(x_j|z) = \int_{a_j}^{a} \frac{h_j(u,z)}{h_j(z)} \mathrm{d}u$ uniformly on $K_{\Pa{j}}$ and $\lim\limits_{\substack{z \rightarrow b}} F(x_j|z) = \int_{a_j}^{x_j} \frac{h_j(u,b)}{h_j(b)}$ on $K_j$, according to exchanging limits theorem, $F$ is continuous on $(a,b)$. 
 
For any sequence $z_n \rightarrow z$, we have that $F(x_j|z_n) \rightarrow F(x_j|z)$ uniformly in $x_j$. Let define two sequences $u_n$ and $x_{j,n}$, respectively on $[0,1]$ and $K_j$, such that $u_n \rightarrow u$ and $x_{j,n} \rightarrow x_{j}$. As $F(x_j|z)=u$ has unique root $x_j = f_j(z,u)$, the root of $F(x_j|z_n)=u_n$, that is, $x_{j,n} = f_j(z_n,u_n)$ converge to $x_j$. Then the function $(z,u) \rightarrow f_j(z,u)$ is continuous on $K_{\Pa{i}} \times [0,1]$.
\end{proof}

\begin{prop}
For $m \in [[1,d]]$, let $Z_m$ denote the set of variables with topological order less than $m$ and let $d_m$ be its size. For any $d_m$-dimensional vector of noise values $e^{(m)}$, let $z_m(e^{(m)})$ (resp. $\widehat{z_m}(e^{(m)})$) be the vector of values computed in topological order from the FCM $({\cal G}, f, {\cal E})$ (resp. the CGNN $({\cal G}, \hat{f}, {\cal E})$). 
For any $\epsilon > 0$, there exists a set of networks $\hat{f}$ with architecture $\cal G$ such that 
\begin{equation}
\forall e^{(m)},  \|z_m(e^{(m)})- \widehat{z_m}(e^{(m)})\| < \epsilon
\label{eq:prop2}
\end{equation}
\end{prop}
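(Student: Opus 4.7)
The plan is to proceed by induction on the topological order $m$, combining Proposition~\ref{prop1} with the universal approximation theorem for one-hidden-layer networks with continuous (non-polynomial) activations and a perturbation argument based on uniform continuity of the true mechanisms. First I would invoke Proposition~\ref{prop1} to obtain a representation of the target distribution by an FCM whose mechanisms $f_i$ are continuous on compact supports $K_i \subset \mathbb{R}^{|\Pa{i}|}\times[0,1]$. Each $f_i$ is then uniformly continuous on $K_i$, and (by Tietze extension, or by a direct construction mirroring the one in the proof of Proposition~\ref{prop1}) admits a continuous extension to a slightly thickened compact $K_i^+$ meant to absorb the small perturbations of inputs coming from upstream approximations.

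For the base case ($m=1$), every source variable $X_i$ has mechanism $f_i:[0,1]\to\mathbb{R}$ continuous on the compact $[0,1]$, so by the universal approximation theorem there exists, for any $\delta_1>0$, a one-hidden-layer network $\hat f_i$ of the form in Eq.~\eqref{eq:fcm_neural} such that $\sup_{e_i\in[0,1]}|f_i(e_i)-\hat f_i(e_i)|<\delta_1$. For the inductive step, suppose $\|z_{m-1}(e)-\widehat z_{m-1}(e)\|<\delta_{m-1}$ uniformly in the noise vector $e$. For a variable $X_j$ of topological order $m$, split
\[
|f_j(z_{\Pa{j}},e_j)-\hat f_j(\widehat z_{\Pa{j}},e_j)|
\le |f_j(z_{\Pa{j}},e_j)-f_j(\widehat z_{\Pa{j}},e_j)|
  + |f_j(\widehat z_{\Pa{j}},e_j)-\hat f_j(\widehat z_{\Pa{j}},e_j)|.
\]
Provided $\delta_{m-1}$ is small enough that $\widehat z_{\Pa{j}}$ remains inside $K_j^+$, the first term is controlled by the modulus of uniform continuity of $f_j$ on $K_j^+$, and the second term is controlled by choosing sufficiently many hidden neurons in $\hat f_j$ to approximate $f_j$ uniformly on $K_j^+$.

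Finally, one allocates the budget $\epsilon/d$ to each of the at most $d$ topological layers and propagates the tolerances backwards: starting from the last layer and using the moduli of continuity of the $f_j$'s to determine how small each upstream $\delta_{m-1}$ must be so that the resulting error at layer $m$ stays within $\epsilon/d$. Summing the layer contributions yields the desired bound $\|z_m(e^{(m)})-\widehat z_m(e^{(m)})\|<\epsilon$ uniformly in $e^{(m)}$.

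The main obstacle I expect is the domain issue: universal approximation gives uniform control only on a fixed compact, while the perturbed inputs of a downstream network may drift slightly off $K_j$. Handling this cleanly either requires extending each $f_j$ to a fattened compact $K_j^+$ and choosing the upstream tolerances small enough that the approximated trajectories stay inside $K_j^+$ for all admissible noise vectors, or, equivalently, exploiting the continuity of $\hat f_j$ itself on a bounded neighbourhood of $K_j$ to avoid ever evaluating $f_j$ outside its original domain. Once this set-up is in place, the rest is a routine backward induction over the DAG layers.
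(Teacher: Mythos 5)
Your proposal is correct and follows the same overall strategy as the paper: induction on topological order, the universal approximation theorem for each one-hidden-layer mechanism, and a triangle inequality that combines an approximation error with a continuity-modulus term controlling the propagation of upstream errors. The one genuine difference is in how you split the triangle inequality: you write $|f_j(z,e_j)-\hat f_j(\hat z,e_j)| \le |f_j(z,e_j)-f_j(\hat z,e_j)| + |f_j(\hat z,e_j)-\hat f_j(\hat z,e_j)|$, applying uniform continuity to the \emph{true} mechanism $f_j$ and evaluating both $f_j$ and $\hat f_j$ at the perturbed point $\hat z$, whereas the paper splits as $|f_j(z,e_j)-\hat f_j(z,e_j)| + |\hat f_j(z,e_j)-\hat f_j(\hat z,e_j)|$, applying the continuity modulus to the \emph{network} $\hat f_j$ instead. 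Your version creates exactly the domain obstacle you identify (the perturbed $\hat z$ may exit the compact on which $f_j$ is defined and approximated), which you must patch with a Tietze-type extension to a fattened compact $K_j^+$; the paper's version sidesteps this entirely, since $f_j$ and the approximation guarantee are only ever invoked at the true trajectory $z$, which stays in the original compact, while $\hat f_j$ is a neural network defined on all of $\mathbb{R}^{|\text{Pa}(j;\mathcal{G})|+1}$. The price the paper pays is an ordering constraint — the upstream tolerance $\delta$ can only be fixed \emph{after} $\hat f_j$ is chosen, since it depends on $\hat f_j$'s modulus of continuity — but since the induction hypothesis holds for every positive tolerance, this is harmless. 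Both routes are sound; the paper's is slightly more economical, yours is slightly more robust to the order in which the networks are constructed.
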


\begin{proof}
By induction on the topological order of $\cal G$. Let $X_i$ be such that $|\Pa{i}|=0$. 
Following the universal approximation theorem \cite{cybenko1989approximation}, as $f_i$ is a continuous function over a compact of $\mathbb{R}$, there exists a neural net $\hat{f_{i}}$ such that $\|f_i - \hat{f_{i}}\|_\infty < \epsilon/d_1$. Thus  Eq. \ref{eq:prop2} holds for the set of networks $\hat{f_i}$ for $i$ ranging over variables with topological order 0.\\
Let us assume that Prop. 2 holds up to $m$, and let us assume for brevity that there exists a single variable $X_j$ with topological order $m +1$. Letting $\hat{f_j}$ be such that $\|f_j - \hat{f_j}\|_\infty < \epsilon/3$ (based on the universal approximation property), letting $\delta$ be such that for all $u$ $\|\hat f_j(u) - \hat f_j(u+\delta)\|< \epsilon/3$ (by absolute continuity) and letting $\hat f_i$ satisfying Eq. \ref{eq:prop2} for $i$ with topological order less than $m$ for $min(\epsilon/3,\delta)/d_{m}$, it comes: 
$\|(z_m,f_j(z_m,e_j)) - (\hat z_m,\hat{f_j}(\hat{z_m}, e_j))\| \le \|z_m - \hat z_m\| + |f_j(z_m,e_j) - \hat{f_j}(z_m, e_j)| +  | \hat{f_j}(z_m,e_j) - \hat{f_j}(\hat{z_m}, e_j)| < \epsilon/3 + \epsilon/3 + \epsilon/3$, which ends the proof.
\end{proof}

\begin{prop}
Let  $\mathcal{D}$ be an infinite observational sample generated from $({\cal G}, f, {\cal E})$.
With same notations as in Prop. 2, for every  sequence $\epsilon_t$ such that $\epsilon_t >0$ goes to zero when $t \rightarrow \infty$, there exists a set $\widehat{f_t} = (\hat f^{t}_1 \ldots \hat f^{t}_d)$ such that $\widehat{\text{MMD}_k}$ between $\cal D$ and an infinite size sample $\widehat{\cal D}_{t}$ generated from the CGNN $({\cal G},\widehat{f_{t}},\cal E)$ is less than $\epsilon_t$.
 \end{prop}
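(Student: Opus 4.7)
My plan is to combine the uniform approximation guarantee of Proposition 2 with the smoothness of the Gaussian kernel and a shared-noise coupling between the true FCM and its CGNN approximation. The infinite-sample hypothesis lets me identify $\widehat{\text{MMD}}_k(\mathcal{D},\widehat{\mathcal{D}}_t)$ with the population quantity $\text{MMD}_k(P,\hat P_t)$ by the law of large numbers (the empirical MMD is consistent for bounded characteristic kernels, see Gretton et al.\ 2007), so the task reduces to bounding $\text{MMD}_k(P,\hat P_t)$.

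First I would fix $t$ and set a target approximation scale $\delta_t>0$ (to be chosen at the end as a function of $\epsilon_t$). Proposition 2 applied with parameter $\delta_t$ gives a tuple of networks $\widehat{f_t}=(\hat f_1^t,\ldots,\hat f_d^t)$ such that, writing $z_d(e)$ and $\widehat{z_d}(e)$ for the values produced in topological order by the true FCM and the CGNN respectively on the same noise vector $e$,
\begin{equation*}
\sup_{e}\|z_d(e)-\widehat{z_d}(e)\|<\delta_t.
\end{equation*}
This is exactly the shared-noise coupling I will exploit: if $E_1,E_2$ are i.i.d.\ draws from $\mathcal{E}^{\otimes d}$, then $X_i:=z_d(E_i)\sim P$ and $Y_i:=\widehat{z_d}(E_i)\sim \hat P_t$ with $\|X_i-Y_i\|<\delta_t$ almost surely, while $X_1\perp X_2$, $Y_1\perp Y_2$ and $X_1\perp Y_2$.

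Next I would use the fact that the multi-scale Gaussian kernel $k(x,x')=\exp(-\gamma\|x-x'\|_2^2)$ is globally Lipschitz in each argument on $\mathbb{R}^d$ with some constant $L_k$ (the derivative $2\gamma t\,e^{-\gamma t^2}$ in the squared-distance $t=\|x-x'\|$ is bounded, and $\bigl|\|x-y\|-\|x-y'\|\bigr|\le \|y-y'\|$; the same bound holds for a finite mixture of Gaussians). Writing out
\begin{equation*}
\text{MMD}_k^2(P,\hat P_t)=E[k(X_1,X_2)]+E[k(Y_1,Y_2)]-2E[k(X_1,Y_2)]
\end{equation*}
and grouping as $\bigl(E[k(X_1,X_2)]-E[k(X_1,Y_2)]\bigr)+\bigl(E[k(Y_1,Y_2)]-E[k(X_1,Y_2)]\bigr)$, each difference is bounded by $L_k\,E\|X_i-Y_i\|\le L_k\delta_t$ via the Lipschitz property applied coordinate-wise in the coupling. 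Therefore
\begin{equation*}
\text{MMD}_k^2(P,\hat P_t)\le 2L_k\delta_t,\qquad\text{MMD}_k(P,\hat P_t)\le\sqrt{2L_k\delta_t}.
\end{equation*}
Choosing $\delta_t:=\epsilon_t^2/(2L_k)$ closes the proof in the population case, and the empirical-to-population step above promotes it to $\widehat{\text{MMD}}_k(\mathcal{D},\widehat{\mathcal{D}}_t)<\epsilon_t$ for infinite samples.

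The main obstacle I anticipate is handling the Lipschitz argument cleanly when the ambient state space for the coupled $(X,Y)$ is only a compact subset of $\mathbb{R}^d$ (as assumed in Prop.~1) and when $k$ is the multi-scale sum rather than a single Gaussian: one must either argue a uniform Lipschitz constant over the finite list of bandwidths, or invoke the dominated-convergence style argument $E|k(X_1,X_2)-k(Y_1,Y_2)|\to 0$ from $\|X-Y\|\to 0$ a.s.\ plus boundedness of $k$. A secondary subtlety is justifying ``infinite sample $\Rightarrow$ $\widehat{\text{MMD}}_k=\text{MMD}_k$'': this is the standard $U$-statistic consistency for bounded characteristic kernels, which I would cite rather than reprove.
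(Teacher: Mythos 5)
Your proof follows essentially the same route as the paper's: apply Proposition~2 to obtain a uniform approximation $\|z(e)-\widehat{z}_t(e)\|<\delta_t$ over the shared noise, write the infinite-sample $\widehat{\text{MMD}}_k$ as the population expectation $\mathbb{E}_{e,e'}\bigl[k(z(e),z(e')) - 2k(z(e),\widehat{z}_t(e')) + k(\widehat{z}_t(e),\widehat{z}_t(e'))\bigr]$ over coupled noise draws, and conclude from there. The only divergence is the final step, where the paper invokes the bounded convergence theorem together with continuity and boundedness of the Gaussian kernel, while you use its Lipschitz constant to get the explicit bound $2L_k\delta_t$ and the explicit choice $\delta_t=\epsilon_t^2/(2L_k)$ --- a harmless and in fact slightly sharper refinement that matches the quantified form of the statement more directly.
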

 
\begin{proof}
According to Prop. \ref{prop2} and with same notations, letting $\epsilon_t > 0$ go to 0 as $t$ goes to infinity, consider  ${\hat f}_t=(\hat f^{t}_1 \ldots \hat f^{t}_d)$ and $\hat{z_t}$ defined from ${\hat f}_t$ such that for all $e \in [0,1]^d$, $\|z(e)- \widehat{z}_t(e)\| < \epsilon_t$. 

Let $\{ \hat{\mathcal{D}_t} \}$ denote the infinite sample generated after $\hat{f_t}$.
The score of the CGNN $(\mathcal{G},\hat{f_t},{\cal E})$ is $ \widehat{\text{MMD}}_k(\mathcal{D}, \hat{\mathcal{D}_t}) =  \mathbb{E}_{e,e'}[k(z(e),z(e')) - 2  k(z(e),\widehat{z}_t(e')) + k(\widehat{z}_t(e), \widehat{z}_t(e'))]$.

As $\hat{f_t}$ converges towards $f$ on the compact $[0,1]^d$, using the bounded convergence theorem on a compact subset of $\mathbb{R}^{d}$,  $\widehat{z_t}(e) \rightarrow z(e)$ uniformly for $t \rightarrow \infty$, it follows from the Gaussian kernel function being bounded and continuous that $\widehat{\text{MMD}}_k(\mathcal{D}, \hat{\mathcal{D}_t}) \rightarrow 0$, when $t \rightarrow \infty$.
\end{proof}

\begin{prop}
Let $\textbf{X} = [X_1, \ldots, X_d]$ denote a set of continuous random variables with joint distribution $P$, generated by a CGNN $\mathcal{C}_{\mathcal{G},f} = (\mathcal{G}, f, \mathcal{E})$ with ${\cal G}$, a directed acyclic graph. And let $\mathcal{D}$ be an infinite observational sample generated from this CGNN. We assume that $P$ is Markov and faithful to the graph ${\cal G}$, and that every  pair of variables $(X_i,X_j)$ that are  d-connected in the graph are not independent. We note $\widehat{\mathcal{D}}$ an infinite sample generated by a candidate CGNN, $\mathcal{C}_{\widehat{\mathcal{G}},\hat{f}} = (\widehat{\mathcal{G}}, \hat{f}, \mathcal{E})$. Then, \\
(i)\ \ If $\widehat{\mathcal{G}} = {\cal G}$ and $\hat{f} = f$, then $\widehat{\text{MMD}}_k(\mathcal{D}, \widehat{\mathcal{D}}) = 0$.\\
(ii) For any  graph $\widehat{\mathcal{G}}$ characterized by the same adjacencies but not belonging to the Markov equivalence class of $\mathcal{G}$, for all $\hat{f}$, $\widehat{\text{MMD}}_k(\mathcal{D}, \widehat{\mathcal{D}}) \neq 0$.
\end{prop}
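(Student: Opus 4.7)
Part (i) is essentially free: with $\widehat{\mathcal{G}} = \mathcal{G}$ and $\hat f = f$, both the generating FCM and the candidate CGNN compute the same deterministic function of the same i.i.d. noise vector drawn from $\mathcal{E}$, so $\hat P = P$ and the population $\mathrm{MMD}_k(P,\hat P) = 0$. For an infinite observational sample, the empirical estimator $\widehat{\mathrm{MMD}}_k(\mathcal D, \widehat{\mathcal D})$ converges to the population MMD, which vanishes since the Gaussian kernel is characteristic \citep{gretton2007kernel}.

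For part (ii) I would proceed in three steps. \emph{Step 1 (structural extraction).} Invoke the Verma--Pearl characterization of Markov equivalence: two DAGs are Markov equivalent iff they share the same skeleton and the same set of v-structures. Since $\widehat{\mathcal{G}}$ and $\mathcal{G}$ have the same adjacencies but are not Markov equivalent, they must differ in at least one v-structure; concretely, there is an unshielded triple $\{X,Y,Z\}$ (edges $X$--$Y$ and $Y$--$Z$, no edge $X$--$Z$) forming a v-structure $X \to Y \leftarrow Z$ in exactly one of the two graphs.

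\emph{Step 2 (distinguishing (conditional) independence).} \textbf{Case A:} the v-structure belongs to $\mathcal{G}$ but not $\widehat{\mathcal{G}}$. In $\widehat{\mathcal{G}}$ the triple is a chain or common-cause structure at $Y$, so $X$ and $Z$ are d-separated by $\{Y\}$; by the Markov property of $\hat P$ with respect to $\widehat{\mathcal{G}}$ (automatic for any CGNN, since $\widehat{\mathcal{G}}$ is acyclic and the noise variables are mutually independent, as argued in Section \ref{model_CGNN}), $\hat P$ satisfies $X \indep Z \mid Y$. In $\mathcal{G}$, conditioning on the collider $Y$ opens the path, so $X$ and $Z$ are d-connected given $Y$; by faithfulness of $P$ to $\mathcal{G}$, $X \nindep Z \mid Y$ under $P$. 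Hence $P \neq \hat P$. \textbf{Case B:} the v-structure belongs to $\widehat{\mathcal{G}}$ but not $\mathcal{G}$. Then $X \indep Z$ under $\hat P$ by the Markov property in $\widehat{\mathcal{G}}$, but $X$ and $Z$ are d-connected (marginally) in $\mathcal{G}$; by the stated hypothesis that every d-connected pair in $\mathcal{G}$ is dependent under $P$, $X \nindep Z$ under $P$. Again $P \neq \hat P$. \emph{Step 3 (from distributional inequality to positive MMD).} Since the Gaussian kernel is characteristic, $\mathrm{MMD}_k(P,\hat P) = 0$ iff $P = \hat P$. Having established $P \neq \hat P$ in both cases, the population MMD is strictly positive, and on an infinite observational sample the empirical estimator converges to this strictly positive value, so $\widehat{\mathrm{MMD}}_k(\mathcal D, \widehat{\mathcal D}) \neq 0$.

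\emph{Main obstacle.} The nontrivial content sits in Step 1 (producing the distinguishing unshielded triple via Verma--Pearl) and in the asymmetric use of assumptions in Step 2: in Case A I rely on faithfulness of $P$ to $\mathcal{G}$, but in Case B I cannot invoke faithfulness of $\hat P$ to $\widehat{\mathcal{G}}$ (which is not assumed) and must instead appeal to the explicit hypothesis that every d-connected pair in $\mathcal{G}$ is marginally dependent under $P$ -- this is precisely why that extra hypothesis appears in the statement. Everything after $P \neq \hat P$ is a direct consequence of the metric property of MMD for characteristic kernels and its consistency in the large-sample limit.
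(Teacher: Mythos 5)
Your part (i) and your Case A follow the paper's argument essentially verbatim, and your Step 1 (Verma--Pearl) and Step 3 (characteristic kernel, so $P\neq\hat P$ implies nonzero MMD) are also what the paper does. The genuine gap is in Case B. You claim that when $X\rightarrow Y\leftarrow Z$ is a v-structure in $\widehat{\mathcal{G}}$, the Markov property of $\hat P$ yields the \emph{marginal} independence $X\indep Z$ under $\hat P$. That inference requires $X$ and $Z$ to be d-separated by the empty set in $\widehat{\mathcal{G}}$, but the collider at $Y$ only blocks the single path $X-Y-Z$; since the two graphs share a skeleton in which $X$ and $Z$ are (by your own observation) marginally d-connected in $\mathcal{G}$, there may well be other paths between $X$ and $Z$ in $\widehat{\mathcal{G}}$ that contain no collider and hence are open given $\emptyset$ (e.g.\ a skeleton with edges $X-Y$, $Y-Z$, $X-W$, $W-Z$ and orientation $X\rightarrow W\rightarrow Z$ in $\widehat{\mathcal{G}}$). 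So the marginal independence under $\hat P$ does not follow, and the contradiction you derive in Case B is unsupported.

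The paper repairs exactly this point by conditioning rather than marginalizing: since $X$ and $Z$ are non-adjacent in $\widehat{\mathcal{G}}$, there exists a set $D$ d-separating them in $\widehat{\mathcal{G}}$, and because $Y$ is a common child of $X$ and $Z$ in $\widehat{\mathcal{G}}$ one can take $D$ not containing $Y$; the Markov property of $\hat P$ then gives $X\indep Z\mid D$ under $\hat P$. On the other side, in $\mathcal{G}$ the triple at $Y$ is a non-collider, so the path $X-Y-Z$ is open given any $D$ with $Y\notin D$, hence $X$ and $Z$ are d-connected given $D$ in $\mathcal{G}$ and the dependence hypothesis (read, as the paper does, as ruling out $X\indep Z\mid D$ for every such $D$) gives $X\nindep Z\mid D$ under $P$. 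Your instinct that the extra hypothesis on d-connected pairs is what carries Case B is right, but it must be applied to this conditional statement given $D$, not to the marginal pair.
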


\begin{proof} 
The proof of (i) is obvious, as with $\widehat{\mathcal{G}} = {\cal G}$ and $\hat{f} = f$, the joint distribution $\hat{P}$ generated by $\mathcal{C}_{\widehat{\mathcal{G}},\hat{f}} = (\widehat{\mathcal{G}}, \hat{f}, \mathcal{E})$ is equal to $P$, thus we have $\widehat{\text{MMD}}_k(\mathcal{D}, \widehat{\mathcal{D}}) = 0$. 

(ii) Let consider $\widehat{\mathcal{G}}$ a DAG characterized by the same adjacencies but that do not belong to the Markov equivalence class of ${\cal G}$. 
According to \cite{Verma:1990:ESC:647233.719736}, as the DAG ${\cal G}$ and $\widehat{\mathcal{G}}$ have the same adjacencies but are not Markov equivalent, there are not characterized by the same v-structures. 

a) First, we consider that a v-structure $\{X, Y, Z\}$ exists in  ${\cal G}$, but not in $\widehat{\mathcal{G}}$. 
As the distribution P is faithful to ${\cal G}$ and  $X$ and $Z$ are not d-separated by $Y$ in  ${\cal G}$, we have that $(X \nindep Z|Y)$ in $P$. 
Now we consider the graph $\widehat{\mathcal{G}}$. Let $\hat{f}$ be a set of neural networks. We note $\hat{P}$ the distribution generated by the CGNN $\mathcal{C}_{\widehat{\mathcal{G}},\hat{f}}$. As $\widehat{\mathcal{G}}$ is a directed acyclic graph and the variables $E_i$ are mutually independent, $\hat{P}$ is Markov with respect to $\widehat{\mathcal{G}}$. As $\{X, Y, Z\}$ is not a v-structure in $\widehat{\mathcal{G}}$, $X$ and $Z$ are d-separated by $Y$. By using the causal Markov assumption, we obtain that $(X \indep Z|Y)$ in $\hat{P}$.

b) Second, we consider that a v-structure $\{X, Y, Z\}$ exists in  $\widehat{\mathcal{G}}$, but not in ${\cal G}$. 
As $\{X, Y, Z\}$ is not a v-structure in ${\cal G}$, there is an "unblocked path" between the variables $X$ and $Z$, the variables $X$ and $Z$ are d-connected. By assumption, there do not exist a set $D$ not containing $Y$ such that $(X \indep Z|D)$ in $P$. In $\widehat{\mathcal{G}}$, as $\{X, Y, Z\}$ is a v-structure, there exists a set $D$ not containing $Y$ that d-separates $X$ and $Z$. As for all CGNN $\mathcal{C}_{\widehat{\mathcal{G}},\hat{f}}$ generating a distribution $\hat{P}$, $\hat{P}$ is Markov with respect to $\widehat{\mathcal{G}}$, we have that $X \indep Z|D$ in $\hat{P}$.

In the two cases a) and b) considered above, $P$ and $\hat{P}$ do not encode the same conditional independence relations, thus are not equal. We have then  $\widehat{\text{MMD}}_k(\mathcal{D}, \mathcal{D}') \neq 0$.
\end{proof}

\newpage

\subsection{Table of scores for the experiments on cause-effect pairs}

\begin{table}[h!]
  \footnotesize
  \caption{Cause-effect relations: Area Under the Precision Recall curve on 5 benchmarks for the cause-effect experiments (weighted accuracy in parenthesis for T\"ub). Underline values correspond to best scores.}
  \label{table:pairwise}
  \centering
  \begin{tabular}{lccccc} 
    \toprule
    method & {Cha}  & {Net} & {Gauss} & {Multi} & {T\"ub }\\
    \midrule
    %\textit{Regression based} \\
    Best fit & 56.4 & 77.6 & 36.3 & 55.4 & 58.4 (44.9) \\
    LiNGAM & 54.3 & 43.7 & 66.5 & 59.3 & 39.7 (44.3) \\
    %\midrule
    %\textit{Cond. distribution} \\
    CDS  & 55.4 & 89.5 & 84.3 & 37.2 & 59.8 (65.5) \\
    %\midrule
    %\textit{Entropy evaluation} \\
    IGCI  & 54.4 & 54.7 & 33.2 & 80.7 & 60.7 (62.6) \\
    %\midrule
    %\textit{Independence tests} \\
    ANM  & 66.3 & 85.1 & 88.9 & 35.5 & 53.7 (59.5) \\
    PNL  & 73.1 & 75.5 & 83.0 & 49.0 & 68.1 (66.2) \\
    %\midrule
    %\textit{Data-driven} \\
    Jarfo  & \underline{79.5} & \underline{92.7} & 85.3 & 94.6 & 54.5 (59.5)  \\
    %\midrule
    %\textit{Generative} \\
    GPI  & 67.4 & 88.4 & \underline{89.1} & 65.8 & 66.4 (62.6) \\
    \midrule
    \textbf{CGNN} ($\widehat{\text{MMD}}_k$) & 73.6 & 89.6 & 82.9 & \underline{96.6} & \underline{79.8} (74.4)  \\
    \textbf{CGNN} ($\widehat{\text{MMD}}^m_k$) & 76.5 & 87.0 & 88.3 & 94.2 & 76.9 (72.7) \\
    \bottomrule
  \end{tabular}
  \label{TableResultsPairwise}
\end{table}

\subsection{Table of scores for the experiments on graphs}

\begin{table*}[h!]
  \caption{Average (std. dev.) results for the orientation of 20 artificial graphs given true skeleton (left), artificial graphs given skeleton with 20\% error (middle). $^*$ denotes statistical significance at $p=10^{-2}$. Underline values correspond to best scores.}
  \footnotesize
  \hspace*{-1.5cm}
    \centering
  \begin{tabular}{l|ccc|ccc|}
    \toprule
     &  \multicolumn{3}{c}{Skeleton without error} & \multicolumn{3}{c}{Skeleton with 20\% of error} \\
      & AUPR & SHD & SID & AUPR & SHD & SID  \\
    \midrule
    \textit{Constraints}\\
    PC-Gauss & 0.67 (0.11) & 9.0 (3.4) & 131 (70) & 0.42 (0.06) & 21.8 (5.5) & 191.3 (73)  \\
    PC-HSIC & 0.80 (0.08) & 6.7 (3.2) & 80.1 (38) & 0.49 (0.06) & 19.8 (5.1) & 165.1 (67)  \\
    \midrule
    \textit{Pairwise}\\
    ANM & 0.67 (0.11) & 7.5 (3.0) & 135.4 (63) & 0.52 (0.10) & 19.2 (5.5) & 171.6  (66)  \\
    Jarfo & 0.74 (0.10)   & 8.1 (4.7) & 147.1 (94) & 0.58 (0.09) & 20.0 (6.8) & 184.8 (88) \\
    \midrule
    \textit{Score-based}\\
        GES & 0.48 (0.13) & 14.1 (5.8) & 186.4 (86) & 0.37 (0.08) & 20.9 (5.5) & 209 (83) \\ 
     LiNGAM & 0.65 (0.10) & 9.6 (3.8) & 171 (86) & 0.53 (0.10) & 20.9 (6.8) & 196 (83)  \\
     CAM & 0.69 (0.13)  & 7.0 (4.3) & 122 (76) & 0.51 (0.11) & \underline{15.6} (5.7) & 175 (80) \\    
    \textbf{CGNN} ($\widehat{\text{MMD}}^m_k$) & 0.77 (0.09) &  7.1 (2.7) &  141 (59) & 0.54 (0.08) & 20 (10) & 179 (102) \\
    \textbf{CGNN} ($\widehat{\text{MMD}}_k$) & \underline{0.89}* (0.09) & \underline{2.5}* (2.0) & \underline{50.45}* (45) & \underline{0.62} (0.12) & 16.9 (4.5) & \underline{134.0}* (55) \\
    \bottomrule
  \end{tabular}
   \hspace*{-2cm}
  \label{table:multi}
\end{table*}

\begin{table*}[h!]
  \caption{Average (std. dev.) results for the orientation of 20 and 50 artificial graphs coming from Syntren simulator given true skeleton. $^*$ denotes statistical significance at $p=10^{-2}$. Underline values correspond to best scores.}
  \footnotesize
  \hspace*{-1.5cm}
    \centering
  \begin{tabular}{l|ccc|ccc|}
    \toprule
     &  \multicolumn{3}{c}{Syntren network 20 nodes} &  \multicolumn{3}{c}{Syntren network 50 nodes} \\
     & AUPR & SHD & SID & AUPR & SHD & SID  \\
    \midrule
    \textit{Constraints}\\
    PC-Gauss & 0.40 (0.16) & 16.3 (3.1) & 198 (57) & 0.22 (0.03) & 61.5 (32) & 993 (546) \\
    PC-HSIC & 0.38 (0.15) & 23 (1.7) & 175 (16) & - & - & -  \\
    \midrule
    \textit{Pairwise}\\
    ANM & 0.36 (0.17) & 10.1 (4.2) & 138 (56) & 0.35 (0.12) & 29.8 (13.5) & 677 (313) \\
    Jarfo & 0.42 (0.17)  & 10.5 (2.6) & 148 (64) & 0.45 (0.13) & 26.2 (14) & 610 (355)\\
    \midrule
    \textit{Score-based}\\
        GES & 0.44 (0.17) & 9.8 (5.0) & 116 (64) & 0.52 (0.03) & 21 (11) & 462 (248)\\ 
     LiNGAM & 0.40 (0.22) & 10.1 (4.4) & 135 (57) & 0.37 (0.28) & 33.4 (19) & 757 (433)   \\
     CAM & 0.73 (0.08)  & 4.0 (2.5) & 49 (24) & 0.69 (0.05) & 14.8 (7) & 285 (136) \\    
    \textbf{CGNN} ($\widehat{\text{MMD}}^m_k$) & \underline{0.80}* (0.12) &  3.2 (1.6) & 45 (25) & \underline{0.82}* (0.1) & \underline{10.2}* (5.3) & \underline{247} (134)  \\
    \textbf{CGNN} ($\widehat{\text{MMD}}_k$) & 0.79 (0.12) & \underline{3.1}* (2.2) & \underline{43} (26) & 0.75 (0.09) & 12.2 (5.5) & 309 (140) \\
    \bottomrule
  \end{tabular}
   \hspace*{-2cm}
  \label{table:syntren_network}
\end{table*}

\begin{table*}[ht!]
  \caption{Average (std. dev.) results for the orientation of the real protein network given true skeleton. $^*$ denotes statistical significance at $p=10^{-2}$. Underline values correspond to best scores.}
  \footnotesize
  \hspace*{-1.5cm}
    \centering
  \begin{tabular}{l|ccc|}
    \toprule
     &  \multicolumn{3}{c}{Causal protein network}\\
     & AUPR & SHD & SID \\
    \midrule
    \textit{Constraints}\\
    PC-Gauss & 0.19 (0.07) & 16.4 (1.3) & 91.9 (12.3) \\
  
    PC-HSIC & 0.18 (0.01) & 17.1 (1.1) & 90.8 (2.6) \\
    \midrule
    \textit{Pairwise}\\
    ANM & 0.34 (0.05) & 8.6 (1.3) & 85.9 (10.1)\\
    Jarfo & 0.33 (0.02) & 10.2 (0.8) & 92.2 (5.2)\\
    \midrule
    \textit{Score-based}\\
        GES & 0.26 (0.01) & 12.1 (0.3) & 92.3 (5.4)\\ 
     LiNGAM & 0.29 (0.03) & 10.5 (0.8) & 83.1 (4.8) \\
     CAM & 0.37 (0.10) & 8.5 (2.2) & 78.1 (10.3)\\    
    \textbf{CGNN} ($\widehat{\text{MMD}}^m_k$) & 0.68 (0.07) & 5.7 (1.7) & 56.6 (10.0) \\
    \textbf{CGNN} ($\widehat{\text{MMD}}_k$) & \underline{0.74}* (0.09)  & \underline{4.3}* (1.6) & \underline{46.6}* (12.4)\\
    \bottomrule
  \end{tabular}
   \hspace*{-2cm}
  \label{table:cyto_network}
\end{table*}

\end{document}